\newtheorem{theorem}{Theorem}
\newtheorem{corollary}{Corollary}
\newtheorem{lemma}{Lemma}
\newtheorem{proposition}{Proposition}
\newtheorem{assumption}{Assumption}
\newtheorem{definition}{Definition}
\newtheorem{example}{Example}
\newtheorem{remark}{Remark}
\newcommand{\Pilin}{\Pi_{\rm{Lin}}}
\DeclareMathOperator*{\essinf}{ess\,inf}
\newcommand{\var}{\textbf{Var}}
\newcommand{\E}{\mathbf{E}}
\def\argmax{\mathop{\rm arg\,max}}%
\def\argmin{\mathop{\rm arg\,min}}%
\renewcommand{\P}{\mathbf{P}}
\newcommand{\feas}{\mathcal{X}}
\newcommand{\actions}{\mathcal{A}}
\newcommand{\reals}{\mathbf{R}}
\newcommand{\W}{A}
\newcommand{\indep}{\rotatebox[origin=c]{90}{$\models$}}
\begin{document}

%





\title{Distributionally Robust Batch Contextual Bandits}

\author[1]{Nian Si\thanks{niansi@stanford.edu}}
\author[1]{Fan Zhang\thanks{fzh@stanford.edu}}
\author[2]{Zhengyuan Zhou\thanks{zzhou@stern.nyu.edu}}
\author[1]{Jose Blanchet\thanks{jose.blanchet@stanford.edu}}
\affil[1]{Department of Management Science \& Engineering, Stanford University}
\affil[2]{Stern School of Business, New York University}
\date{}
\maketitle
\begin{abstract}
Policy learning using historical observational data is an important problem that has found widespread applications. Examples include selecting offers, prices, advertisements to send to customers, as well as selecting which medication to prescribe to a patient. However, existing literature rests on the crucial assumption that the future environment where the learned policy will be deployed is the same as the past environment that has generated the data -- an assumption that is often false or too coarse an approximation.
In this paper, we lift this assumption and aim to learn a distributionally robust policy with incomplete observational data. We first present a policy evaluation procedure that allows us to assess how well the policy does
under worst-case environment shift. We then establish a central limit theorem
type guarantee for this proposed policy evaluation scheme.
Leveraging this evaluation scheme, we further propose a novel learning algorithm that is able to learn a policy that is robust to adversarial perturbations and unknown covariate shifts with  a performance guarantee based on the theory of uniform convergence. Finally, we empirically test the effectiveness of our proposed algorithm in synthetic datasets and demonstrate that it provides the robustness that is missing using standard policy learning algorithms. We conclude the paper by providing a comprehensive application of our methods in the context of a real-world voting dataset.
\end{abstract}
\section{Introduction}

\label{sec:intro} As a result of the digitization of our economy, 
user-specific data has exploded across a variety of
application domains: electronic medical data in health care, marketing data
in product recommendation, and customer purchase/selection data in digital
advertising (\cite{bertsimas2007learning, LCLS2010, chapelle2014,
	bastani2020online, SBF2017}). Such growing availability of user-specific
data has ushered in an exciting era of personalized decision making, one
that allows the decision maker(s) to personalize the service decisions based
on each individual's distinct features. 
As such, heterogeneity across individuals (i.e. best recommendation decisions vary across individuals) can be intelligently exploited to achieve
better outcomes.

{%
	{ When abundant historical data are available, effective personalization can be achieved by learning a policy offline (i.e. from the collected data) that prescribes the right treatment/selection/recommendation based on individual characteristics. Such an approach has been fruitfully explored (see Section~\ref{sec:related}) and has witnessed tremendous success. However, this success is predicated on (and hence restricted to) the setting where the learned policy is deployed in the same environment from which past data has been collected. This restriction limits the applicability of the learned policy, because one would often want to deploy this learned policy in a new environment where the population characteristics are not entirely the same as before, even though the underlying personalization task is still the same. Such settings occur frequently in managerial contexts, such as when a firm wishes to enter a new market for the same business, hence facing a new, \textit{shifted} environment that is similar yet different. We highlight several examples below:
		
		$\bullet$\textit{ Product Recommendation in a New Market.}
		In product recommendation, different products and/or promotion offers are directed
		to different customers based on their covariates (e.g. age, gender, education background,
		income level, marital status) in order to maximize sales. Suppose the firm has enjoyed great success in the US market by deploying an effective personalized product recommendation scheme that is learned from its US operations data\footnote{Such data include a database of transactions, each of which records the consumer's individual characteristics, the recommended item, and the purchase outcome.}, and is now looking to enter a new market in Europe.
		What policy should the firm use initially, given that little transaction data in the new market is available? The firm could simply reuse the same recommendation policy that is currently deployed for the US market. However, this policy could potentially be ineffective because the population in the new market often has idiosyncratic features that are somewhat distinct from the previous market. For instance, the market demographics will be different; further, even two individuals with the same observable covariates in different markets could potentially have different preferences as a result of the cultural, political, and environmental divergences. Consequently, such an environment-level ``shift"  renders the previously learned policy \textit{fragile}.
		Note that in such applications, taking the standard online learning approach -- by gradually learning to recommend in the new market as more data in that market becomes available -- is both wasteful and risky. It is wasteful because it entirely ignores the US market data even though presumably the two markets still share similarities and useful knowledge/insights can be transferred. It is also risky because a ``cold start" policy may be poor enough to cause the loss of customers in an initial  phase, in which case little further data can be gathered. Moreover, there may be significant reputation costs associated with the choice of a poor cold start. Finally, many personalized content recommendation platforms -- such as news recommendation or video recommendation -- also face these problems when initiating a presence in a new market.
		
		$\bullet$\textit{ Feature-Based Pricing in a New Market.}
		In feature-based pricing, a platform sells a product with features $x_t$
		on day $t$ to a customer, and prices it at $p_t$, which corresponds to the action (assumed to take discrete values). The reward 
		is the revenue collected from the customer, which is $p_t$ if the customer decides to 
		purchase the product and $0$ otherwise. The (generally-unknown-to-the-platform) probability of the customer purchasing
		this product depends on both the price $p_t$ and the product $x_t$ itself. 
		If the platform now wishes to enter a new market to sell its products, 
		it will need to learn a distributionally robust feature-based pricing policy (which maps $x_t$ to $p_t$) that takes into account possible distributional shifts which arise in the new market.

		$\bullet$\textit{ Loan Interest Rate Provisioning in a New Market}
		In loan interest rate provisioning, the loan provider (typically a bank) would gather individual information $x_t$ (such as personal credit history, outstanding loans, current assets and liabilities etc) from a potential borrower $t$, and based on that information, provision an interest rate $a_t$, which corresponds to our action here. In general, the interest rate $a_t$ will be higher for borrowers who have a larger default probability, and lower for borrowers who have little or no chance to default. Of course, the default probability is not observed, and depends on both the borrower's financial situation $x_t$ and potentially on $a_t$, which determines how much payment to make in each installment. For the latter, note that a higher interest rate would translate into a larger installment payment, which may deplete the borrower's cash flow and hence make default more likely. 
		What is often observed is the sequence of installment payments for many borrowers under a given environment over a certain horizon (say 30 years for a home loan). If the borrower defaults at any point, then all subsequent payments are zero. 
		With such information, the reward for the bank corresponding to an individual borrower is the present value of the stream of payments made by discounting the cash flows back to the time when the loan was made (using the appropriate market discount rate). A policy here would be one that selects the best interest rate to produce the largest expected present value of future installment payment streams. When opening up a new branch in a different area (i.e. environment), the bank may want to learn a distributionally robust interest rate provisioning policy so as to take into account the environment shift.
		A notable feature of home loans is that they often span a long period of time. As such, even in the same market, the bank may wish to have some built-in robustness level in case there are shifts over time.

		\subsection{Main Challenges}
		The aforementioned applications \footnote{Other applications include deploying a personalized education curriculum and/or digital tutoring plan based on students' characteristics in a different school/district than from where the data was collected.
		}  highlight the need to learn a personalization policy  that is robust to \textit{unknown shifts} in another environmentt, an environment of which the decision maker has little knowledge or data.
		Broadly speaking, there can be two sources of such shifts:
		\begin{enumerate}
			\item \textbf{Covariate shift:} The population composition -- and hence the marginal distribution
			of the covariates -- can change. For instance, in the product recommendation example,
			demographics in different markets will be different (e.g. markets in developed countries will have more people that are educated than those in developing countries), and certain segments of the population that are a majority in the old market might be a minority in the new market.
			
			\item \textbf{Concept drift:} How the outcomes depend on the covariates and prescription can also change, thereby resulting in different conditional distributions of the outcomes given the two. For instance, in product recommendation, large market segments in the US (e.g. a young population with high education level) choose to use cloud services to store personal data. In contrast, the same market segment in some emerging markets (where data privacy may be regulated differently) may prefer to buy flash drives to store at least some of the personal data. 
		\end{enumerate}
		
		These two shifts, often unavoidable and unknown, bring forth significant challenges in provisioning a suitable policy in the new environment, at least during the initial deployment phase. For instance, a certain subgroup (e.g. educated females in their 50s or older that live in rural areas) may be under-represented in the old environment's population. In this case, the existing product recommendation data are insufficient to identify the optimal recommendation for this subgroup. This insufficiency is not a problem in the old environment, because the sub-optimal prescription for this subgroup will not significantly affect the overall performance of the policy given that the subgroup occurrence is not sufficiently frequent. However, with the new population, this subgroup may have a larger presence, in which case the incorrect prescription will be amplified and could translate into poor performance. In such cases, the old policy's performance will be particularly sensitive to the marginal distribution of the subgroup in the new environment, highlighting the danger of directly deploying the same policy that worked well before.
		
		Even if all subgroups are well-represented, the covariate shift will cause a problem if the decision maker is constrained to select a policy in a certain policy class such as trees or linear policy class, due to, for instance, interpretability and/or fairness considerations. In such cases, since one can only hope to learn the best policy in the policy class (rather than the absolute best policy), the optimal in-class policy will change when the underlying covariate marginal distribution shifts in the new environment, rendering the old policy potentially ineffective. Note that this could be the case even if the covariate marginal distribution is the only thing that changes.
		
		Additionally, even more challenging is the existence of concept drift.
		Fundamentally, this type of shifts occurs because there are hidden characteristics at a population level (cultural, political, environmental factors or other factors that are beyond the decision maker's knowledge) that also influence the outcome, but are unknown, unobservable. and different across the environments. As such, the decision maker faces a challenging hurdle: because these population-level factors that may influence the outcome are unknown and unobservable,  making it is infeasible to explicitly model them in the first place, let alone deciding what policy to deploy as a result of them. therefore, the decision maker faces an ``\textit{unknown unknown}" dilemma when  choosing the right policy for the new environment.

		Situated in this challenging landscape, one naturally wonders if there is any hope to rigorously address the problem of policy learning in shifted environments with a significant degree of model uncertainty. This challenge leads to the following fundamental question:
		\textit{Using the (contextual bandits) data collected from one environment, can we learn a \textbf{robust} policy that would provide reliable worst-case guarantees in the presence of both types of environment shifts? If so, how can this be done in a data-efficient way?} Our goal is to answer this question in the affirmative, as we shall explain.
		
		\subsection{Our Message and Managerial Insights}
		
		To answer this question we adopt a mathematical framework which allows us to formalize and quantify environmental model shifts. First, we propose a distributionally robust
		formulation of policy learning in batch contextual bandits that accommodates both environment shifts. To overcome the aforementioned ``unknown unknown" challenge that presents  modelling difficulty, our formulation takes a general, fully non-parametric (and hence model-agnostic) approach to describe the shift at the distribution level: we allow the new environment to be an arbitrary distribution in a KL-neighborhood around the old environment's distribution. As such, the shift is succinctly represented by a single quantity: the KL-radius $\delta$. We then propose to learn a policy that has maximum value under the worst-case distribution,  that is optimal for a decision maker who wishes to maximize value against an adversary who selects a worst-case distribution to minimize value. Such a distributionally robust policy -- if learnable at all -- would provide decision makers with the guarantee that the value of deploying  this policy will never be worse -- and possibly better -- no matter where the new environment shifts  within this KL-neighborhood.
		
		Regarding the choice of $\delta$, we provide two complementary perspectives on its selection process from a managerial viewpoint; each useful in the particular context one is concerned with. First, when data across different environments are available, one can estimate $\delta$ using such data. Such an approach would work well (and is convenient) if the new environment is different in similar ways in nature compared to how those other environments differ. For instance, in the voting application we consider in this paper (the August 2006 primary election in Michigan \citep{gerber2008social}), voting turnout data from different cities have been collected. As such, when deploying a new policy to encourage voters to vote in a different city, one can use the $\delta$ that is estimated from data across those different cities (we describe the technical procedures for such estimation in Section~\ref{sec:real_data}). Second, we can view $\delta$ as a parameter that can vary and that trades off with the optimal distributionally robust value: the larger the $\delta$, the more conservative the decision maker, the smaller the optimal distributionally robust value. We can compute the optimal distributionally robust values (and the corresponding policies) -- one for each $\delta$ -- for a range of $\delta$s; see Figure~\ref{fig:insights} for an illustration. Inspecting Figure~\ref{fig:insights}, we see that the difference between the optimal distributionally robust value under $\delta = 0$ (i.e., no distributional shifts) and the optimal distributionally robust value for a given $\delta$ representing the price of robustness.
		\textit{If} the new environment had actually remained  unchanged, then deploying a robust policy ``eats" into the value. As such, the decision maker
		can think of this value reduction as a form of insurance premium budget in order to protect the downside, in case the new environment did shift in \textit{unexpected} ways. Consequently, under a given premium budget (i.e. the amount of per-unit profit/sales that the decision maker is willing to forgo), a conservative choice would be for the decision maker to select the largest $\delta$ where the difference is within this amount ($\delta^*$ in Figure~\ref{fig:insights}), and  have the maximum robustness coverage therein. Importantly, if the new environment ends up not shifting in the worst possible way or not as much, then the actual value will only be higher. In particular, if the new environment does not shift at all, then the insurance premium the decision maker ends up paying after using the distributionally robust policy (under $\delta^*$) is \textit{smaller} than the budget, because its value under the old environment is larger than its value under the corresponding worst-case shift. Consequently, selecting $\delta$ this way yields the optimal worst-case policy under a given budget\footnote{Of course, if the new environment ends up shifting a larger amount than $\delta^*$ \textit{and} also in the worst possible way, then the actual value could be even smaller. However, in such situations, one would be much worse off with just using the old environment's optimal policy, which is not robust at all.}.

		\begin{figure}[t!]
			\begin{center}
				\includegraphics[width = 8.5cm]{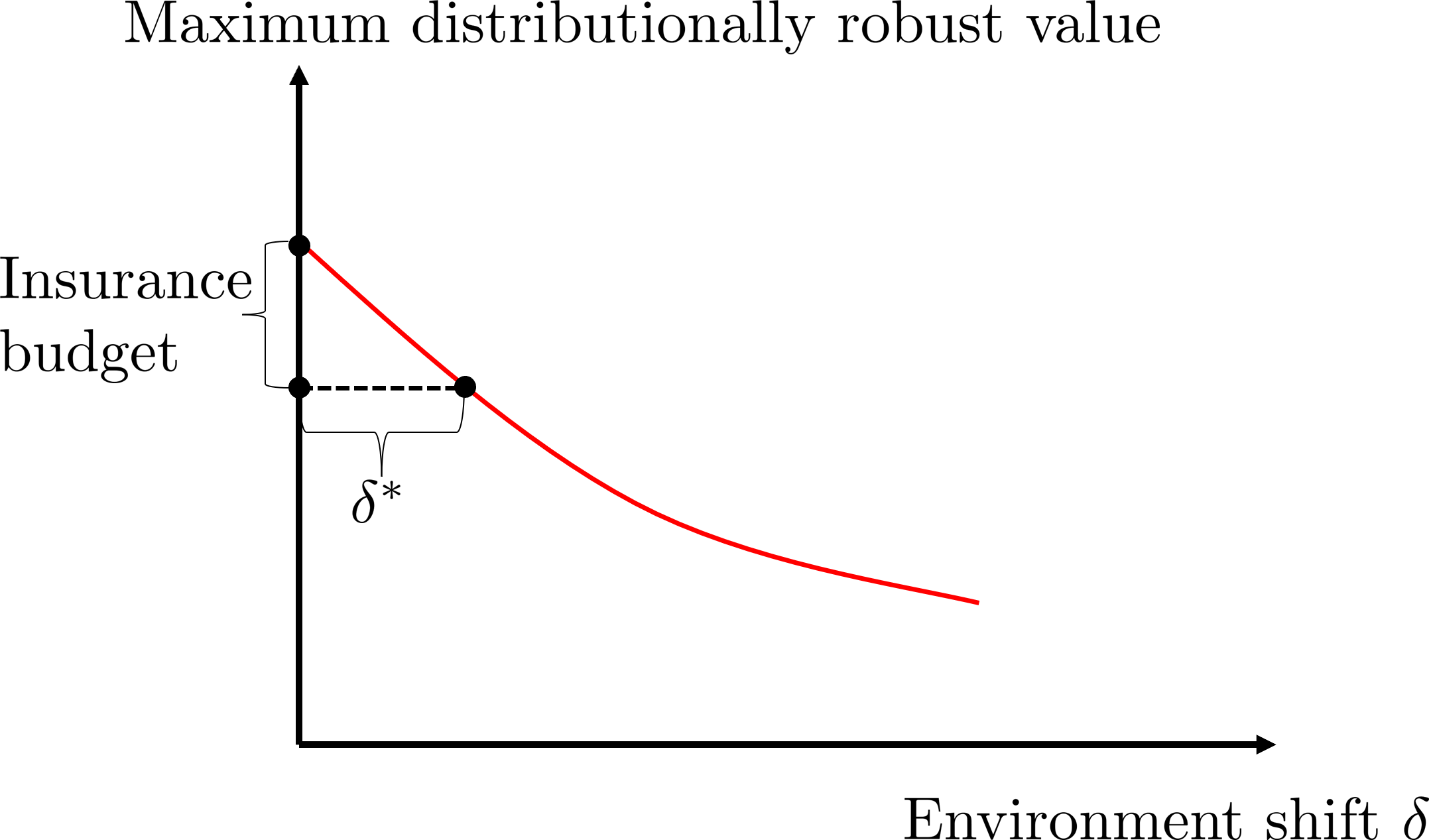}
			\end{center}
			\caption{Maximum distributionally robust values as a function of $\delta$. }
			\label{fig:insights}
		\end{figure}

		Second, we show that learning distributionally robust policies can indeed be done
		in a statistically efficient way. In particular, we provide an algorithmic framework
		that solves this problem \textit{optimally}. To achieve this, we first provide a novel scheme for distributionally robust policy evaluation (Algorithm~\ref{alg:DRO_evaluation}) that estimates the robust
		value of any given policy using historical data. We do so by drawing from
		duality theory and transforming the primal robust value estimation
		problem--an infinitely-dimensional problem--into a dual problem that is $1$-dimensional and convex, hence admitting an efficiently computable solution, which we can solve using Newton's method. We then establish, in the form of a
		central limit theorem, that the proposed estimator converges to the true
		value at an $O_p\left(n^{-1/2}\right)$ rate where $n $ is the number of data
		points. Building upon the estimator, we devise a distributionally robust policy learning algorithm
		(Algorithm~\ref{alg:DRO_policy_learning}) and establish that (Theorem~\ref{thm:pi_policy}) it achieves a $O_p(n^{-1/2})$ finite-sample regret. Such a finite-sample regret bound
		informs the decision maker that in order to learn an $\epsilon$-optimal distributionally robust policy, a dataset on the order of $\frac{1}{\epsilon^2}$ samples suffice with high probability.
		Note that this result is true for any $\delta$, where the regret bound itself does not depend on $\delta$.
		In addition, we also characterize the fundamental limit of this problem by establishing a $\Omega(n^{-1/2})$ lower bound for expected regret, thus making it clear that our policy learning algorithm is statistically optimal in the minimax sense. Taken together, these results highlight that we provide an optimal prescription framework for learning distributionally robust policies.
		
		Third, we demonstrate the empirical efficacy and efficiency of our proposed algorithm by
		providing extensive experimental results in Section~\ref{sec:numericals}. We dedicate Section \ref{sec:real_data} to the voting problem mentioned previously using our distributionally robust policy learning framework and demonstrate its applicability on a real-world dataset. Finally, we extend our results  to $f$-divergence measures and show our framework is still applicable
		even beyond KL-divergence (Section~\ref{sec:extension}).

		\subsection{Related Work}\label{sec:related}
		As mentioned, our work is closely related to the flourishing and rapidly developing  literature on offline policy learning in contextual bandits; see, e.g, \cite{langford2011doubly,
			zhang2012estimating,zhao2012estimating, zhao2014doubly,
			swaminathan2015batch, rakhlin2016bistro, kallus2017balanced,
			kitagawa2018should, kallus2018confounding, zhou2018offline,
			deep-learning-logged-bandit-feedback,chernozhukov2019semi}.
		Many valuable insights have been contributed: novel policy evaluation and policy learning algorithms
		have been developed; sharp minimax regret guarantees have been characterized
		in many different settings; and extensive and
		illuminating experimental results have been performed to offer practical
		advice for optimizing empirical performance. However,
		this line of work assumes the environment in which the learned policy will be deployed
		is the same as the the environment from which the training data is collected. In such settings,
		robustness is not a concern. Importantly, this line of work developed and used the family of doubly robust estimators for policy evaluation and learning~\citep{langford2011doubly,zhou2018offline}.
		For clarification, we point out that this family of estimators, although related to robustness, does not address
		the robustness discussed in this paper (i.e. robustness to environment shifts). In particular, those estimators aim to
		stabilize statistical noise and deal with mis-specified models for rewards and propensity scores, where the underlying environment distribution is the same across test and training environments.

		Correspondingly, there has also been an extensive literature on online
		contextual bandits, for example, ~\cite{LCLS2010,rusmevichientong2010linearly,FCGS2010, rigollet2010nonparametric,
			chu2011contextual, goldenshluger2013linear, AG2013a, AG2013b, RV2014,
			russo2016information, JBNW2017,
			LLZ2017,abeille2017linear,dimakopoulou2017estimation,LLZ2017}, whose focus
		is to develop online adaptive algorithms that effectively balance
		exploration and exploitation. This online setting is not the focus of our paper. See~\cite{BCN2012,
			lattimore2020bandit,slivkins2019introduction} for a few articulate
		expositions. Despite this, we do point out that
		as alluded to before and whenever possible, online learning can complement the distributionally robust policy learned and deployed initially. We leave this investigation for future work.
		
		Additionally, there is also rapidly growing literature in distributionally robust optimization (DRO); see, e.g, \cite{bertsimas2004price,delage2010distributionally,hu2013kullback,shafieezadeh-abadeh_distributionally_2015,bayraksan2015data,gao2016distributionally,namkoong2016stochastic,duchi2016statistics,staib2017distributionally,shapiro2017distributionally,lam2017empirical,volpi2018generalizing,raginsky_lee,nguyen2018distributionally,yang2018Wasserstein,MohajerinEsfahani2017,zhao2017distributionally,abadeh2018Wasserstein,ZHAO2018262,sinha2018certifiable,gao2018robust,wolfram2018,ghosh2019robust,blanchet2016quantifying,duchi2018learning,lam2019recovering,duchi2019distributionally,ho2020distributionally}.
		The existing DRO literature has mostly focused on the statistical learning
		aspects, including supervised learning and feature selection type problems,
		rather than on the decision making aspects.
		Furthermore, much of that literature uses DRO as tool to prevent over-fitting when it comes
		to making predictions, rather than  dealing with distributional shifts.
		To the best of our knowledge, we
		provide the first distributionally robust formulation for policy evaluation
		and learning under bandit feedback and shifted environments, in a general, non-parametric space.
		
		Some of the initial results appeared in the conference version~\citep{si2020distributionally}, which only touched a very limited aspect of the problem: policy evaluation under shifted environments. In contrast,
		this paper is substantially developed and fully addresses the entire policy learning problem. We summarize the main differences below:
		\begin{enumerate}
			\item The conference version focused on the policy evaluation problem and only studied a non-stable version of the policy evaluation scheme, which is outperformed by the stable policy evaluation scheme analyzed here (we simply dropped the non-stable version of the policy evaluation scheme in this journal version).
			\item The conference version did not study the policy learning problem, which is our ultimate objective. Here, we provide a policy learning algorithm and establish the minimax optimal rate $O_p(n^{-1/2})$ for the finite-sample regret by providing the regret upper bound as well as the matching regret lower bound.
			\item We demonstrate the applicability of our policy learning algorithms and provide results on a real-world voting data set, which is missing in the conference version.
			\item We provide practical managerial insights for the choice of the critical parameter $\delta$ which governs the size of distributional shifts.
			\item We finally extend our results to $f$-divergence measures, a broader class of divergence measures that include KL as a special case.
		\end{enumerate}
}}

\section{A Distributionally Robust Formulation of Batch Contextual Bandits}

\label{sec:model}

\subsection{Batch Contextual Bandits}

Let $\mathcal{A}$ be the set of $d$ actions: $\mathcal{A}=\{a^{1},a^{2},%
\dots ,a^{d}\}$ and let $\mathcal{X}$ be the set of contexts endowed with a $%
\sigma$-algebra (typically a subset of $\mathbf{R}^{p}$ with the Borel $%
\sigma$-algebra). Following the standard contextual bandits model, we posit
the existence of a fixed underlying data-generating distribution on $%
(X,Y(a^{1}),Y(a^{2}),\dots ,Y(a^{d}))\in \mathcal{X}\times \prod_{j=1}^{d}%
\mathcal{Y}_{j}$, where $X\in \mathcal{X}$ denotes the context vector, and
each $Y(a^{j})\in \mathcal{Y}_{j}\subset \mathbf{R}$ denotes the random
reward obtained when action $a^{j}$ is selected under context $X$.

Let $\{(X_i, A_i, Y_i)\}_{i=1}^n$ be $n$ \textbf{iid} observed triples that
comprise of the training data, where $(X_i, Y_i(a^1), \dots, Y_i(a^d))$ are
drawn \textbf{iid} from the fixed underlying distribution described above,
and we denote this underlying distribution by $\P _0$. Further, in the $i$%
-th datapoint $(X_i, A_i, Y_i)$, $A_i$ denotes the action selected and $Y_i
= Y_i(A_i)$. In other words, $Y_i$ in the $i$-th datapoint is the observed
reward under the context $X_i$ and action $A_i$. Note that all the other
rewards $Y_i(a)$ (i.e. for $a \in \mathcal{A} - \{A_i\}$), even though they
exist in the model (and have been drawn according to the underlying joint
distribution), are not observed.

We assume the actions in the training data are selected by some fixed
underlying policy $\pi_0$ that is known to the decision-maker, where $%
\pi_0(a\mid x)$ gives the probability of selecting action $a$ when the
context is $x$.
In other words, for each context $X_i$, a random action $A_i$ is selected
according to the distribution $\pi_0(\cdot \mid X_i)$, after which the
reward $Y_i(A_i)$ is observed. Finally, we use $\P _0*\pi_0 $ to denote
the product distribution on space $\mathcal{X} \times \prod_{j=1}^d \mathcal{%
	Y}_j\times \mathcal{A}$.
%
We make the following assumptions on the data-generating process.
\begin{assumption}\label{assump:classical}\quad
	The joint distribution $(X, Y(a^1), Y(a^2), \dots, Y(a^d), A)$ satisfies:
	\begin{enumerate}
		\item Unconfoundedness: $(Y(a^1),Y(a^2),\ldots,Y(a^d))$ is independent with $A$ conditional on $X$, i.e.,
		\[(Y(a^1),Y(a^2),\ldots,Y(a^d)) \indep A |X.
		\]
		
		\item Overlap:
		There exists some $\eta > 0$,  $\pi_0(a\mid x) \ge  \eta$, $\forall (x, a) \in \feas \times \actions$.
		\item Bounded reward support: $0 \leq Y(a^i)\leq M$ for $i=1,2,\ldots,d$.
	\end{enumerate}
\end{assumption}
\begin{assumption}[Positive densities/probabilities] The joint distribution $(X, Y(a^1), Y(a^2), \dots, Y(a^d))$ satisfies one of the following assumptions below:
	\begin{enumerate}
		\item In the continuous case, for any $i=1,2,\ldots ,d,$ $Y(a^{i})|X$ has a conditional density $f_{i}(y_i|x)$, which has a uniform non-zero lower bound, i.e., $ f_{i}(y_i|x)\geq \underline{b}>0$ over the interval $[0,M]$ for any $x \in \mathcal{X}$.
		\item In the discrete case, for any $i=1,2,\ldots ,d,$ $Y(a^{i})$ is supported on a finite set $\mathbb{D}$ with cardinality more than 1, and $Y(a^{i})|X$ satisfies  $ \P_0(Y(a^{i})=v|X)\geq \underline{b}>0$ almost surely for any $v \in \mathbb{D}$.
	\end{enumerate}
	\label{assump:pos_dens}
\end{assumption}

The overlap assumption ensures that some minimum positive probability is
guaranteed regardless of the context is. This assumption ensures sufficient
exploration in collecting the training data, and indeed, many operational
policies have $\epsilon$-greedy components. Assumption \ref{assump:classical}
is standard and commonly adopted in both the estimation literature (\cite%
{rosenbaum1983central, imbens2004nonparametric, imbens2015causal}) and the
policy learning literature (\cite{zhang2012estimating,zhao2012estimating,
	kitagawa2018should,swaminathan2015batch, zhou2015residual}). Assumption \ref%
{assump:pos_dens} is made to ensure the $O_p(n^{-1/2})$ convergence rate.

\begin{remark}
	In standard contextual bandits terminology,
	$\mu_{a} (x)  \triangleq \E_{\P_0}[Y_i(a) \mid X_i = x]$ is known as the mean reward function for action $a$. Depending on whether one assumes a parametric form of $\mu_{a} (x)$ or not, one needs to employ different statistical methodologies. In particular,  when $\mu_{a} (x)$ is a linear function of $x$, this setting is known as linear contextual bandits, an important and most extensively studied subclass of contextual bandits.
	In this paper, we do not make any structural assumption on $\mu_{a} (x)$: we are in the non-parametric contextual bandits regime and work with  general underlying data-generating distributions $\P_0$.
\end{remark}	

\subsection{Standard Policy Learning}

With the aforementioned setup, the standard goal is to learn a good policy from a
fixed deterministic policy class $\Pi$ using the training data,
often known as the batch contextual bandits problem (in contrast to online
contextual bandits), because all the data has already been collected
before the decision maker aims to learn a policy. A policy $\pi: \mathcal{X}
\rightarrow \mathcal{A}$ is a function that maps a context vector $x$ to an
action and the performance of $\pi$ is measured by the expected reward this
policy generates, as characterized by the policy value function:
\begin{definition}\label{def:policy_value}\quad
	The policy value function $Q: \Pi \rightarrow \reals$ is defined as: $Q(\pi) \triangleq \E_{\P_0}[Y(\pi(X))]$, where the expectation is taken with respect to the randomness in the underlying joint distribution $\P_0$ of $(X, Y(a^1), Y(a^2), \dots, Y(a^d))$.
\end{definition}

With this definition, the optimal policy is a policy that maximizes the
policy value function. The objective in the standard policy learning context
is to learn a policy $\pi$ that has the policy value as large as possible, which is equivalent to minimizing
the discrepancy between the performance of the
optimal policy and the performance of the learned policy $\pi$.

%

\subsection{Distributionally Robust Policy Learning}

Using the policy value function $Q(\cdot)$ as defined in Definition~\ref%
{def:policy_value} to measure the quality of a policy brings out an
implicit assumption that the decision maker is making: the environment that
generated the training data is the same as the environment where the policy
will be deployed. This is manifested in that the expectation in $Q(\cdot)$
is taken with respect to the same underlying distribution $\P _0$. However,
the underlying data-generating distribution may be different for the
training environment and the test environment. In such cases, the policy
learned with the goal to maximize the value under $\P _0$ may not work well
under the new test environment.

To address this issue, we propose a distributionally robust formulation for
policy learning, where we explicitly incorporate into the learning phase the
consideration that the test distribution may not be the same as the training
distribution $\P _0$. To that end, we start by introducing some terminology.
First, the KL-divergence between two probability measures $\P $ and $\P _0$,
denoted by $D(\P ||\P _0)$, is defined as $D(\P ||\P _0) \triangleq \int
\log\left(\frac{d\P }{d\P _0}\right)\mathrm{d}\P .$ With KL-divergence, we
can define a class of neighborhood distributions around a given
distribution. Specifically, the distributional uncertainty set $\mathcal{U}_{%
	\P _0}(\delta)$ of size $\delta$ is defined as $\mathcal{U}_{\P _0}(\delta)
\triangleq \{\P \ll \P _0\mid D(\P ||\P _0)\leq \delta\}$, where $\P \ll \P %
_0$ means $\P $ is absolutely continuous with respect to $\P _0$. When it is
clear from the context what the uncertainty radius $\delta$ is, we sometimes
drop $\delta$ for notational simplicity and write $\mathcal{U}_{\P _0}$
instead. {%
	{ We remark that in practice, $\delta$ can be selected
		empirically. For example, we can collect historical distributional data from different
		regions and compute distances between them. Then,
		although the distributional shift direction is unclear, a reasonable
		distributional shift size $\delta$ can be estimated.} Furthermore, we can also check the sensitivity of robust policy with respect to $\delta$ and choose an appropriate one according to a given insurance premium budget. We detail these two approaches in Section \ref{sec:select_delta}.}

\begin{definition}\quad
	For a given $\delta > 0$, the distributionally robust value function $Q_{\mathrm{\rm DRO}}:\Pi \rightarrow \reals$ is defined as: $Q_{\mathrm{\rm DRO}}(\pi) \triangleq\inf_{\P\in\mathcal{U}_{\P_0}(\delta)} \E_{\P}[Y(\pi(X))]$.
	\label{def:Q_DRO}
\end{definition}

In other words, $Q_{\mathrm{\mathrm{DRO}}}(\pi)$ measures the performance of
a policy $\pi$ by evaluating how it performs in the worst possible
environment among the set of all environments that are $\delta$-close to $\P %
_0$. With this definition, the optimal policy $\pi^*_\mathrm{\mathrm{DRO}}$
is a policy that maximizes the distributionally robust value function: $%
\pi^*_\mathrm{\mathrm{DRO}} \in \arg \max_{\pi \in \Pi} \{ Q_\mathrm{\mathrm{%
		DRO}} (\pi)\}$. { If such optimal policy does not exist, we can always construct a sequence of policies whose distributionally robust value converges to the supremum $\sup_{\pi \in \Pi}\{Q_{\rm DRO}(\pi)\}$. Then, all of our results can generalize to this case. Therefore, for simplicity, we assume the optimal policy exists.}
To be robust to the changes between the test environment and the
training environment, our goal is to learn a policy such that its
distributionally robust policy value is as large as possible, or
equivalently, as close to the best distributionally robust policy as
possible. We formalize this notion in Definition \ref{def:regret}.

\begin{definition}
	The distributionally robust regret $R_{\mathrm{\rm DRO}}(\pi)$ of a  policy $\pi \in \Pi$ is defined as
	\\
	$R_{\mathrm{\rm DRO}}(\pi) \triangleq\max_{\pi' \in \Pi}\inf_{\P\in\mathcal{U}_{\P_0}(\delta)}\E_{\P}[Y(\pi'(X))] - \inf_{\P\in\mathcal{U}_{\P_0}(\delta)}\E_{\P}[Y(\pi(X))]$.
	\label{def:regret}
\end{definition}

Several things to note. First, per its definition, we can rewrite regret as $%
R_{\mathrm{DRO}}(\pi) = Q_{\mathrm{DRO}}(\pi^*_\mathrm{\mathrm{DRO}}) - Q_{%
	\mathrm{DRO}}(\pi)$. {Second, the underlying random policy that has generated
	the observational data (specifically the $A_i$s) could be totally irrelevant with the policy class $\Pi$.}
Third, when a policy $\hat{\pi}$ is learned from data and hence  $R_{\mathrm{DRO}}(%
\hat{\pi})$ is a random variable, then a regret bound in such
cases is customarily a high probability bound that highlights how regret
scales as a function of the size $n$ of the dataset, the error probability
and other important parameters of the problem, e.g. the complexity of the
policy class $\Pi$.

Regarding some other definitions of regret, one may consider choices such as
\begin{equation*}
	\sup_{\P \in \mathcal{U}_{\P _0}(\delta)}\max_{\pi^{\prime }\in \Pi} (%
	\mathbf{E}_\P [Y(\pi^{\prime }(X))]-\mathbf{E}_\P [Y(\pi(X))]),
\end{equation*}
where for a fixed distribution $\P $, one compares the learned policy with
the best one that could be done under perfect knowledge of $\P $. In this
definition the adversary is very strong in the sense that it knows the test
domain. Therefore, a problem of this definition is that the regret does not
converge to zero when $n$ goes to infinity (even for a randomized policy $%
\pi $). We will not discuss this notion in this paper.

\section{Distributionally Robust Policy Evaluation}

\subsection{Algorithm}

\label{sec:alg} In order to learn a distributionally robust policy -- one
that maximizes $Q_{\mathrm{\mathrm{DRO}}}(\pi)$ -- a key step lies in
accurately estimating the given policy $\pi$'s distributionally robust
value. We devote this section to tackling this problem.

\begin{lemma}[Strong Duality] \quad\label{thm:strong_duality}
	For any policy $\pi\in\Pi$, we have
	\begin{align}
		\inf_{\P \in \mathcal{U}_{\P_0}(\delta)}\E_{\P}\left[Y(\pi(X))\right]
		&= \sup_{\alpha\geq 0}\left\{ -\alpha \log\E_{\P_0}\left[\exp(-Y(\pi(X))/\alpha)\right] - \alpha \delta\right\}\\
		& = \sup_{\alpha\geq 0}\left\{ -\alpha \log\E_{\P_{0}*\pi_{0}}\left[
		\frac{\exp(-Y(A)/\alpha)\mathbf{1}\{\pi(X) = A\}}{\pi_0(A\mid X)}\right] - \alpha \delta\right\},
		\label{eq:true_dual}
	\end{align}
	where $\mathbf{1}\{\cdot \}$ denotes the indicator function.
\end{lemma}
The proof of Lemma \ref%
{thm:strong_duality} is in  \ref{sec:lma_cvx}.
\begin{remark}
	\label{remark:alpha=0}
	When $\alpha=0$, by  the discussion of Case 1 after Assumption 1 in \citet{hu2013kullback}, we  define
	\begin{equation*}
		-\alpha\log\E_{\P_0}\left[\exp(-Y(\pi(X))/\alpha)\right] - \alpha \delta |_{\alpha=0}= \essinf\{Y(\pi(X)\},
	\end{equation*}
	where $\essinf$ denotes the essential infimum.
	Therefore, $-\alpha\log\E_{\P_0}\left[\exp(-Y(\pi(X))/\alpha)\right] - \alpha \delta$ is right continuous at zero. { In fact, Lemma \ref{lemma:ld} in   \ref{sec:thm1} shows that the optimal value is not attained at $\alpha = 0$ if Assumption \ref{assump:pos_dens}.1 is enforced}.
\end{remark}

The above strong duality allows us to transform the original problem of
evaluating $\inf_{\P \in \mathcal{U}_{\P _0(\delta)}}\mathbf{E}_{\P }\left[%
Y(\pi(X))\right]$, where the (primal) variable is a infinite-dimensional
distribution $\P $ into a simpler problem where the (dual) variable is a
positive scalar $\alpha$. Note that in the dual problem, the expectation is
taken with respect to the same underlying distribution $\P _0$. This then
allows us to use an easily-computable plug-in estimate of the
distributionally robust policy value. To easily  reference  the
subsequent analysis of our algorithm, we capture the important terms in the
following definition.

\begin{definition}
	\label{def:phi_alpha}
	Let $\{(X_i, \W_i, Y_i)\}_{i=1}^n$ be a given dataset.
	We define
	\begin{equation*}W_i(\pi, \alpha) \triangleq \frac{\mathbf{1}\{\pi(X_i) = A_i\}}{\pi_0(A_i\mid X_i)}\exp(-Y_i(A_i)/\alpha), \ S_n^{\pi}\triangleq \frac{1}{n}\sum_{i=1}^{n}\frac{\mathbf{1}\{\pi(X_{i})=A_{i}\mathbf{\}}}{\pi _{0}\left( A_{i}|X_{i}\right) }
	\end{equation*}
	and
	\begin{equation*}
		\hat{W}_n(\pi,\alpha) \triangleq \frac{1}{nS_n^\pi}\sum_{i=1}^{n} W_i(\pi, \alpha).\end{equation*}
	We also define the dual objective function and the empirical dual objective function as
	\[\phi(\pi,\alpha)\triangleq -\alpha \log\E_{\P_0}\left[\exp(-Y(\pi(X))/\alpha)\right] - \alpha \delta ,\]
	and
	\[\hat{\phi}_{n}(\pi,\alpha) \triangleq -\alpha \log \hat{W}_n(\pi, \alpha) - \alpha \delta,\]
	respectively.
	
	Then, we define the distributionally robust value estimators and the optimal dual variable using the following notations.
	\begin{enumerate}
		\item
		The distributionally robust value estimator $\hat{Q}_{\mathrm{\rm DRO}}:\Pi \rightarrow \reals$ is defined by $\hat{Q}_{\mathrm{\rm DRO}}(\pi) \triangleq\sup_{\alpha\geq 0}\left\{ \hat{\phi}_{n}(\pi,\alpha)\right\}$.
		\item
		The optimal dual variable  $\alpha ^{\ast }(\pi)$ is  defined by $\alpha ^{\ast }(\pi)\triangleq\arg \max_{\alpha \geq 0}\left\{\phi(\pi,\alpha)\right\}$, and the empirical dual value is denoted as $\alpha_n(\pi) \in \arg\max_{\alpha \geq 0}\left\{\hat{\phi}_n(\pi,\alpha)\right\}$.
	\end{enumerate}
\end{definition}
$W_i(\pi,\alpha)$ is a realization of the random variable inside the
expectation in equation (\ref{eq:true_dual}), and we approximate $\mathbf{E}%
_{\P _{0}*\pi_{0}}\left[ \frac{\exp(-Y(A)/\alpha)\mathbf{1}\{\pi(X) = A\}}{%
	\pi_0(A\mid X)}\right]$ by its empirical average $\hat{W_n}(\pi,\alpha)$
with a normalization factor $S_n^{\pi}$. Note that $\mathbf{E}[S_n^{\pi}]=1$
and $S_n^{\pi}\rightarrow 1$ almost surely. Therefore, the normalized $\hat{%
	W_n}(\pi,\alpha)$ is asymptotically equivalent with the unnormalized $\frac{1%
}{n} \sum_{i=1}^n W_i(\pi,\alpha)$. The reason for dividing a normalization
factor $S_n^{\pi}$ is that it makes our evaluation more stable; see
discussions in \cite{si2020distributionally} and \cite{swaminathan2015self}.
The upper bound of $\alpha^{\ast }(\pi)$ proven in Lemma \ref{lemma:upper_bd}
of   \ref{sec:thm1} establishes the validity of the definitions $%
\alpha^{\ast}(\pi)$, namely, $\alpha^{\ast}(\pi)$ is attainable and unique.

\begin{remark}
	Another recent paper \citep{faury2020distributionally}  also discusses a similar problem. The estimator they propose is  equivalent to
	\begin{equation}
		\sup_{a\geq 0}-\alpha \log \left( \frac{1}{n}\sum_{i=1}^{n}\exp \left( -%
		\frac{1\left\{ \pi (X_{i})=A_{i}\right\} Y_{i}}{\alpha \pi _{0}\left(
			A_{i}|X_{i}\right) }\right) \right) -\alpha \delta .  \label{their_estimator}
	\end{equation}%
	We remark that their estimator is not consistent, namely, the estimator (\ref%
	{their_estimator}) does not converge to $\inf_{\P \in \mathcal{U}_{\P_0}(\delta)}\E_{\P}\left[Y(\pi(X))\right]$, when $n$ goes to infinity.
\end{remark}

By \citet[Proposition 1 and  their discussion following the
proposition]{hu2013kullback}, we provide a characterization of the worst case distribution in
Proposition \ref{prop:worst_case}.
\begin{proposition}[The Worst Case Distribution]
	\label{prop:worst_case}
	Suppose that Assumption \ref{assump:classical} is imposed. For any policy $\pi \in \Pi$, when $\alpha^*(\pi)>0$, we define  a probability measure $\P(\pi)$ supported on $\mathcal{X}\times
	\prod_{j=1}^{d}\mathcal{Y}_{j}$ such that
	\[
	\frac{{\rm d}\P(\pi)}{{\rm d}\P_0} = \frac{\exp(-Y(\pi(X)/\alpha^*(\pi)))}{\E_{\P_0}[\exp(-Y(\pi(X)/\alpha^*(\pi)))]},
	\]
	where ${{\rm d}\P(\pi)}/{{\rm d}\P_0}$ is the Radon-Nikodym derivative;  when $\alpha^*(\pi)=0$, we define
	\[\frac{{\rm d}\P(\pi)}{{\rm d}\P_0}  =\frac{\mathbf{1}\{Y(\pi(X))=\essinf\{Y(\pi(X)\}\}}{\P_0(Y(\pi(X))=\essinf\{Y(\pi(X)\})}.
	\]
	Then,  we have that ${\P(\pi)}$ is the unique worst case distribution, namely
	\[\P(\pi) = \argmin_{\P\in\mathcal{U}_{\P_0}(\delta)} \E_\P[Y(\pi(X)].
	\]
	
\end{proposition}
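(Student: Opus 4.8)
The plan is to solve the constrained infimum $\inf_{\P\in\mathcal{U}_{\P_0}(\delta)}\E_{\P}[Y(\pi(X))]$ directly as a convex program in the likelihood ratio $L \triangleq {\rm d}\P/{\rm d}\P_0$, and to match its unique minimizer against the candidate $\P(\pi)$. Writing $Z \triangleq Y(\pi(X))$, absolute continuity lets me reparametrize: feasible $\P$ correspond to measurable $L \ge 0$ with $\E_{\P_0}[L] = 1$ and $\E_{\P_0}[L\log L] = D(\P\|\P_0) \le \delta$, and the objective becomes $\E_{\P_0}[LZ]$. Before optimizing I would record two facts supplied by the earlier results: strong duality (Lemma~\ref{thm:strong_duality}) identifies the optimal value with $\sup_{\alpha\ge 0}\phi(\alpha,\pi)$, and the remark following it (via Lemma~\ref{lemma:ld}) guarantees that the maximizer $\alpha^*(\pi)$ is strictly positive and finite. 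Strict positivity is exactly what makes the stated Radon--Nikodym derivative well defined, and it will also force the KL constraint to be active at the optimum. Note that Slater's condition holds trivially, since $\P_0$ itself satisfies $D(\P_0\|\P_0)=0<\delta$.

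For the derivation of the form, I would fix $\alpha = \alpha^*(\pi) > 0$ and minimize the Lagrangian $\mathcal{L}(L) \triangleq \E_{\P_0}[LZ] + \alpha\,\E_{\P_0}[L\log L]$ over $L\ge 0$ subject only to $\E_{\P_0}[L]=1$. Since $t\mapsto t\log t$ is strictly convex, $\mathcal{L}$ is strictly convex in $L$, so its constrained minimizer is unique and characterized by the first-order (KKT) condition. Introducing a multiplier $\lambda$ for the normalization constraint and minimizing the integrand $LZ + \alpha L\log L - \lambda L$ pointwise gives $Z + \alpha(\log L + 1) - \lambda = 0$, hence $L \propto \exp(-Z/\alpha)$; choosing the proportionality constant to enforce $\E_{\P_0}[L]=1$ yields exactly $L^* = \exp(-Z/\alpha^*(\pi))/\E_{\P_0}[\exp(-Z/\alpha^*(\pi))]$, which is the derivative defining $\P(\pi)$. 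Here the bounded-reward Assumption~\ref{assump:classical}.3 ensures $\exp(-Z/\alpha^*(\pi))$ lies in $[\exp(-M/\alpha^*(\pi)),1]$, so all the expectations above are finite and $L^*$ is a bona fide density.

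It then remains to verify optimality and uniqueness. I would check feasibility of $\P(\pi)$ (nonnegativity and $\E_{\P_0}[L^*]=1$ are immediate, and $D(\P(\pi)\|\P_0)=\delta$ follows from the active-constraint/complementary-slackness relation attached to $\alpha^*(\pi)>0$), and then confirm that $\E_{\P(\pi)}[Z]$ equals the dual value $\phi(\alpha^*(\pi),\pi)$ by substituting $L^*$ and using the active KL constraint, so that by weak duality $\P(\pi)$ attains the infimum. For uniqueness I would argue at the saddle point: any primal optimizer $\P^\dagger$ must satisfy $\E_{\P^\dagger}[Z] = \min_{L}\mathcal{L}(L) - \alpha^*(\pi)\,\delta$, which by complementary slackness forces $D(\P^\dagger\|\P_0)=\delta$ and, simultaneously, forces $\P^\dagger$ to minimize $\mathcal{L}$; strict convexity of $\mathcal{L}$ then pins $\P^\dagger = \P(\pi)$.

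The main obstacle is the uniqueness claim rather than the form of $\P(\pi)$: the objective $\P\mapsto \E_{\P}[Z]$ is linear, so convexity of the feasible set alone does not preclude a whole face of minimizers. The argument must therefore route uniqueness through the strictly convex regularizer $\E_{\P_0}[L\log L]$ appearing in the Lagrangian, which in turn hinges on $\alpha^*(\pi)>0$ (so that the KL term is genuinely present) and on the constraint being active. Establishing those two facts cleanly — both of which I would import from Lemma~\ref{lemma:ld} and the complementary-slackness structure of the saddle point rather than re-deriving — is the crux; once they are in hand, the strict-convexity collapse to a single minimizer is routine.
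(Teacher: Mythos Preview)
Your proposal is correct and essentially reconstructs the convex-duality argument that the paper simply imports from \cite{hu2013kullback}; the paper's own ``proof'' of Proposition~\ref{prop:worst_case} is nothing more than a citation of Proposition~1 in that reference together with the observation (from Lemma~\ref{lemma:ld}) that $\alpha^*(\pi)>0$. Your Lagrangian derivation, verification that the KL constraint binds via the dual first-order condition, and the uniqueness argument through strict convexity of $L\mapsto \E_{\P_0}[LZ]+\alpha^*(\pi)\,\E_{\P_0}[L\log L]$ are exactly the content behind that cited result, so there is no substantive divergence---you have simply made explicit what the paper delegates.
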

Proposition \ref{prop:worst_case} shows that the worst case measure $\P %
(\pi) $ is an exponentially tilted measure with respect to the underlying
measure $\P _0$, where $\P (\pi)$ puts more weights on the low end. Since $%
\alpha^*(\pi)$ can be approximated by $\alpha_n(\pi)$, and $\alpha_n(\pi)$
is explicitly computable as we shall see in Algorithm \ref%
{alg:DRO_evaluation}, we are able to understand how the worst case measure
behaves. Moreover, we show that the worst case measure $\P (\pi)$ maintains
mutual independence when $Y(a^{1}),\ldots,Y(a^{d})$ are mutually
independent conditional on $X$ under $\P _0$ in the following Corollary.
\begin{corollary}
	\label{cor:worst_case}
	Suppose that Assumptions \ref{assump:classical} and \ref{assump:pos_dens} is imposed and under $\P_0$, $Y(a^{1}),\ldots,Y(a^{d})$ are mutually independent conditional on $X$.  Then, for any policy $\pi \in \Pi$, under the worst case measure $\P(\pi)$,  $Y(a^{1}),\ldots,Y(a^{d})$ are still mutually independent conditional on $X$.
\end{corollary}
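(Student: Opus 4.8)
The plan is to read the worst-case law off the Radon--Nikodym derivative in Proposition~\ref{prop:worst_case} and to check the factorization of the conditional joint law given $X=x$. Fix a context $x\in\feas$. Because $\pi$ is deterministic, $\pi(x)$ is a single fixed action; let $j^\star=j^\star(x)$ be the index with $a^{j^\star}=\pi(x)$, so that $Y(\pi(X))$ depends on the reward vector only through its $j^\star$-th coordinate. By Assumption~\ref{assump:classical}.4 each $Y(a^j)\mid X=x$ has a conditional density $f_j(\cdot\mid x)$, and the mutual conditional independence under $\P_0$ gives the product form $\prod_{j=1}^d f_j(y_j\mid x)$ for the conditional joint density of $(Y(a^1),\dots,Y(a^d))$ given $X=x$.

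First I would write out the joint law under $\P(\pi)$. With $\alpha^\ast=\alpha^\ast(\pi)>0$ (positivity and finiteness follow from the discussion preceding Proposition~\ref{prop:worst_case}) and $Z=\E_{\P_0}[\exp(-Y(\pi(X))/\alpha^\ast)]$ the global normalizer, Proposition~\ref{prop:worst_case} gives the joint density of $(X,Y(a^1),\dots,Y(a^d))$ under $\P(\pi)$ as the $\P_0$-density multiplied by $\exp(-y_{j^\star}/\alpha^\ast)/Z$. Conditioning on $X=x$ cancels both the $X$-marginal and the constant $Z$, so the conditional density of $(Y(a^1),\dots,Y(a^d))$ given $X=x$ under $\P(\pi)$ is proportional, as a function of $(y_1,\dots,y_d)$, to
\begin{equation*}
  f_{j^\star}(y_{j^\star}\mid x)\,\exp\!\big(-y_{j^\star}/\alpha^\ast\big)\;\prod_{j\neq j^\star} f_j(y_j\mid x).
\end{equation*}
Integrating out $(y_1,\dots,y_d)$ to obtain the normalizing constant, every factor $f_j(\cdot\mid x)$ with $j\neq j^\star$ integrates to one, so the constant reduces to the single-coordinate quantity $c(x)\triangleq\E_{\P_0}[\exp(-Y(\pi(X))/\alpha^\ast)\mid X=x]$, which is positive and finite since $0\le Y\le M$ (Assumption~\ref{assump:classical}.3) and $\alpha^\ast>0$.

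Then I would conclude by reading off the factorization: the conditional density under $\P(\pi)$ equals
\begin{equation*}
  \frac{f_{j^\star}(y_{j^\star}\mid x)\,\exp(-y_{j^\star}/\alpha^\ast)}{c(x)}\;\prod_{j\neq j^\star} f_j(y_j\mid x),
\end{equation*}
a product of $d$ functions, each depending on a single coordinate $y_j$. This is precisely the assertion that $Y(a^1),\dots,Y(a^d)$ remain mutually independent conditional on $X=x$ under $\P(\pi)$; as a byproduct the argument shows that only the conditional law of the chosen reward $Y(\pi(x))$ is exponentially tilted, while the conditional laws of the remaining rewards are left unchanged. There is no real obstacle here: the single substantive observation is that the likelihood ratio in Proposition~\ref{prop:worst_case} depends on the reward vector through one coordinate only, and the only point needing care is the bookkeeping of normalizers---verifying that the global constant $Z$ and the $X$-marginal cancel under conditioning so that the residual $x$-dependent normalizer $c(x)$ factors through the single tilted coordinate.
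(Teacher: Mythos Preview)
Your proposal is correct and follows essentially the same approach as the paper: both arguments read off the Radon--Nikodym derivative from Proposition~\ref{prop:worst_case}, observe that it depends on the reward vector only through the single coordinate $y_{\pi(x)}$, and then verify that the resulting conditional law given $X=x$ factorizes as a product of one-coordinate densities (with only the $\pi(x)$-coordinate exponentially tilted). The paper's proof additionally writes out the new $X$-marginal $\mu'$ explicitly, whereas you absorb this into the conditioning step via the normalizer $c(x)$, but this is purely a difference in bookkeeping.
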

The proofs of Proposition \ref{prop:worst_case} and Corollary \ref%
{cor:worst_case} are in   \ref{sec:lma_cvx}.

To compute $\hat{Q}_{\mathrm{\mathrm{DRO}}}$, one needs to solve an
optimization problem to obtain the distributionally robust estimate of the
policy $\pi$. As the following lemma indicates, this optimization problem is easy
to solve.

\begin{lemma}	
	\label{lma:cvx}
	The empirical dual objective function $\hat{\phi}_{n}(\pi,\alpha)$ is concave in $\alpha$ and its partial derivative admits the expression
	\begin{eqnarray*}
		\frac{\partial}{\partial \alpha}
		\hat{\phi}_{n}(\pi,\alpha)
		&=& -\frac{\sum_{i=1}^n Y_i(A_i)W_i(\pi,\alpha)} {\alpha S_n^{\pi}\hat{W}_n(\pi, \alpha)}-\log \hat{W}_{n}(\pi,\alpha) - \delta,\\
		\frac{\partial^2}{\partial \alpha^2}
		\hat{\phi}_{n}(\pi,\alpha)
		&=& \frac{(\sum_{i=1}^n Y_i(A_i)W_i(\pi,\alpha))^2} {\alpha^3(S_n^{\pi})^2(\hat{W}_{n}(\pi,\alpha))^2}
		-\frac{\sum_{i=1}^n Y^2_i(A_i)W_i(\pi,\alpha)} {\alpha^3S_n^{\pi}\hat{W}_{n}(\pi,\alpha)}. \\
	\end{eqnarray*}
	Further, if the array $\{Y_i(A_i)\mathbf{1}\{\pi(X_i) = A_i\}\}_{i=1}^n$ has at least two different non-zero entries, then $\hat{\phi}_{n}(\pi,\alpha)$ is strictly-concave in $\alpha$.
\end{lemma}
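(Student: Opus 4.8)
The plan is to obtain both closed-form derivatives and the concavity claim by direct differentiation, exploiting the single structural fact that $\alpha$ enters $W_i(\pi,\alpha)$ only through the factor $\exp(-Y_i(A_i)/\alpha)$. Since the importance weights $\mathbf{1}\{\pi(X_i)=A_i\}/\pi_0(A_i\mid X_i)$ and the normalizer $S_n^\pi$ do not depend on $\alpha$, the one identity that drives everything is $\partial_\alpha W_i(\pi,\alpha) = \big(Y_i(A_i)/\alpha^2\big)\,W_i(\pi,\alpha)$, whence $\partial_\alpha \hat{W}_n(\pi,\alpha) = \frac{1}{\alpha^2 n S_n^\pi}\sum_{i=1}^n Y_i(A_i)\,W_i(\pi,\alpha)$.

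First I would differentiate $\hat{\phi}_n(\pi,\alpha) = -\alpha\log\hat{W}_n(\pi,\alpha) - \alpha\delta$ by the product rule: differentiating the prefactor $-\alpha$ returns the $-\log\hat{W}_n(\pi,\alpha)-\delta$ terms, while the chain rule contributes $-\alpha\,\partial_\alpha\hat{W}_n/\hat{W}_n$; substituting the expression above and using $nS_n^\pi\hat{W}_n(\pi,\alpha)=\sum_i W_i(\pi,\alpha)$ collapses this to the first-derivative expression recorded in the statement. Differentiating a second time—again only the exponentials carry $\alpha$—and collecting terms, the two cross terms proportional to $\big(\sum_i Y_i(A_i)W_i\big)/(\alpha^2\sum_i W_i)$ cancel, leaving exactly the two terms of the stated second derivative. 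This stage is routine but bookkeeping-heavy, so the only real care is in tracking powers of $\alpha$ and the normalizer.

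The conceptual core—the step I expect to carry the real content—is reading off concavity from the second-derivative formula. I would introduce the self-normalized weights $p_i \triangleq W_i(\pi,\alpha)/\sum_{j}W_j(\pi,\alpha)$, which are nonnegative and sum to one whenever $S_n^\pi>0$ (at least one sample is matched by $\pi$). Factoring the common $1/\alpha^3$ out of the second derivative and rewriting each ratio through these weights—$\sum_i Y_i(A_i)W_i/\sum_j W_j=\sum_i p_i Y_i(A_i)$, and similarly for the squared rewards—recasts the expression as $\frac{1}{\alpha^3}\big[\big(\sum_i p_i Y_i(A_i)\big)^2 - \sum_i p_i Y_i(A_i)^2\big]$, which is exactly $-\mathrm{Var}_p[Y(A)]/\alpha^3$, a negative variance. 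Hence $\partial^2_\alpha\hat{\phi}_n(\pi,\alpha)\le 0$ for every $\alpha>0$, giving concavity on $(0,\infty)$ and, by the right-continuity noted after Lemma~\ref{thm:strong_duality}, on the closed half-line $\alpha\ge 0$.

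For strict concavity it then suffices that this weighted variance be strictly positive. Because the exponential factor never vanishes, $p_i>0$ exactly on the matched indices $\{i:\pi(X_i)=A_i\}$, so $\mathrm{Var}_p[Y(A)]>0$ as soon as $Y_i(A_i)$ is non-constant across those indices; the hypothesis that $\{Y_i(A_i)\mathbf{1}\{\pi(X_i)=A_i\}\}_{i=1}^n$ has at least two distinct non-zero entries exhibits two matched indices carrying different reward values, which makes $p$ non-degenerate and the variance strictly positive, so $\partial^2_\alpha\hat{\phi}_n<0$ throughout $\alpha>0$. As an independent check on the sign, one may observe with no computation that $\alpha\mapsto\alpha\log\hat{W}_n(\pi,\alpha)=\alpha\log\sum_i q_i\,e^{-Y_i(A_i)/\alpha}$ (with $q_i\propto\mathbf{1}\{\pi(X_i)=A_i\}/\pi_0(A_i\mid X_i)$) is the restriction to a line of the jointly convex perspective of the convex log-moment-generating function $\beta\mapsto\log\sum_i q_i e^{-Y_i(A_i)\beta}$, hence convex, so its negative is concave—corroborating the sign delivered by the explicit Hessian.
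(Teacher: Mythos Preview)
Your proposal is correct and follows essentially the same route as the paper. The paper dispatches the derivative formulas as ``elementary algebra'' and then invokes the Cauchy--Schwarz inequality $(\sum_i Y_i W_i)^2 \le (\sum_i W_i)(\sum_i Y_i^2 W_i)$ to obtain $\partial_\alpha^2\hat{\phi}_n\le 0$, with equality precisely when $Y_i$ is constant on $\{i:W_i\neq 0\}$; your self-normalized weights $p_i=W_i/\sum_j W_j$ and the identity $\partial_\alpha^2\hat{\phi}_n=-\mathrm{Var}_p[Y(A)]/\alpha^3$ are exactly the same inequality rewritten, and your strict-concavity argument coincides with the paper's equality-case analysis. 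The perspective/log-MGF sanity check you add is not in the paper but is a welcome independent confirmation.
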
	
The proof of Lemma \ref{lma:cvx} is in   \ref{sec:lma_cvx}. Since the
optimization problem $\hat{Q}_{\mathrm{\mathrm{DRO}}}=\max_{\alpha \geq
	0}\left\{ \hat{\phi}_{n}(\pi,\alpha) \right\}$ is maximizing a concave
function, it can be computed using the Newton-Raphson method. Based on
all of the discussions above, we formally give the distributionally robust
policy evaluation algorithm in Algorithm~\ref{alg:DRO_evaluation}. By %
\citet[Section 8.8]{luenberger2015linear}, we have that $\hat{\phi}_n(\pi,\alpha)$ converges to the global maximum $\hat{Q}_{\mathrm{DRO}%
}(\pi)$ quadratically in Algorithm~\ref%
{alg:DRO_evaluation} if the initial value of $\alpha$ is sufficiently
closed to the optimal value.
\begin{algorithm}[ht]
	\caption{Distributionally Robust Policy Evaluation}
	\label{alg:DRO_evaluation}
	\begin{algorithmic}[1]
		\STATE \textbf{Input:} Dataset $\{(X_i, \W_i, Y_i)\}_{i=1}^n$, data-collecting policy $\pi_0$, policy $\pi \in \Pi$, and initial value of dual variable $\alpha$.
		\STATE \textbf{Output:} Estimator of the distributionally robust policy value $\hat{Q}_{\mathrm{\rm DRO}}(\pi)$.
		
		\REPEAT
		\STATE Let $W_i(\pi, \alpha) \gets  \frac{\mathbf{1}\{\pi(X_i) = A_i\}}{\pi_0(A_i \mid X_i)}\exp(-Y_i(A_i)/\alpha)$.
		\STATE Compute $S_n^{\pi}\gets  \frac{1}{n}\sum_{i=1}^{n}\frac{\mathbf{1\{}\pi(X_{i})=A_{i}\mathbf{\}}}{\pi _{0}\left( A_{i}|X_{i}\right) }$.
		\STATE Compute $\hat{W}_n(\pi,\alpha) \gets  \frac{1}{nS_n^{\pi}}\sum_{i=1}^{n} W_i(\pi, \alpha)$.
		\STATE
		Update $\alpha \gets  \alpha - (\frac{\partial}{\partial \alpha}\hat{\phi}_{n})/ (\frac{\partial^2}{\partial \alpha^2}\hat{\phi}_{n})$.
		\UNTIL{$\alpha$ converges.}
		
		\STATE \textbf{Return} $\hat{Q}_{\mathrm{\rm DRO}}(\pi) \gets  \hat{\phi}_{n}(\pi,\alpha)$.
	\end{algorithmic}
\end{algorithm}

\subsection{Theoretical Guarantee of Distributionally Robust Policy
	Evaluation}

In the next theorem, we demonstrate that the approximation error for policy
evaluation function $\hat{Q}_{\mathrm{\mathrm{DRO}}}(\pi)$ is $O_p(1/\sqrt{n}%
)$ for a fixed policy $\pi$.

\begin{theorem}
	\label{DRO_CLT}
	Suppose Assumptions \ref{assump:classical} and \ref{assump:pos_dens} are enforced, and
	define \[\sigma ^{2}(\alpha )=\frac{ \alpha^{2} }{\mathbf{E}[\exp \left( -Y(\pi (X))/\alpha
		\right) ]^2} \mathbf{E}\left[ \frac{1}{\pi _{0}\left( \pi
		(X)|X\right) } \left( \exp \left( -Y(\pi (X))/\alpha \right)
	-\mathbf{E}\left[ \exp \left( -Y(\pi (X))/\alpha \right) \right] \right)
	^{2} \right].\]
	Then, for any policy $\pi\in\Pi$, we have
	\begin{align*}
		\sqrt{n}\left( \hat{Q}_{\mathrm{\rm DRO}}(\pi)  - Q_{\mathrm{\rm DRO}}(\pi)
		\right) &\Rightarrow \mathcal{N}\left( 0,\sigma ^{2}(\alpha ^{\ast }(\pi))\right),  \text { if } \alpha^{\ast }(\pi)>0, \text{ and} \\
		\sqrt{n}\left( \hat{Q}_{\mathrm{\rm DRO}}(\pi)  - Q_{\mathrm{\rm DRO}}(\pi)
		\right) &\rightarrow 0  \text { in probability, if } \alpha^{\ast }(\pi)=0,
	\end{align*}
	where $\alpha^{\ast}(\pi)$ is defined in Definition \ref{def:phi_alpha}, $\Rightarrow$ denotes convergence in distribution, and $\mathcal{N}(0,\sigma^2)$ is the normal distribution with mean zero and variance $\sigma^2$.
\end{theorem}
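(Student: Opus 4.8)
The plan is to treat $\hat{Q}_{\mathrm{\rm DRO}}(\pi)$ and $Q_{\mathrm{\rm DRO}}(\pi)$ as the optimal values of the empirical and population dual programs and to exploit that both are maximizations of the \emph{same} one-dimensional concave objective. Write $\hat\alpha := \arg\max_{\alpha\ge 0}\hat\phi_n(\pi,\alpha)$ and recall $\alpha^*=\alpha^*(\pi)=\arg\max_{\alpha\ge 0}\phi(\alpha,\pi)$. By Lemma~\ref{lemma:upper_bd} and Lemma~\ref{lemma:ld} of \ref{sec:thm1}, $\alpha^*$ lies in a compact interval $[\underline\alpha,\overline\alpha]\subset(0,\infty)$ bounded away from both endpoints, and on such an interval the overlap and bounded-reward parts of Assumption~\ref{assump:classical} make $W_i(\pi,\alpha)$ uniformly bounded, so $\hat\phi_n$, its first two $\alpha$-derivatives, and their population counterparts are all well-behaved. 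The key reduction is an envelope/Taylor decomposition
\begin{equation*}
\hat{Q}_{\mathrm{\rm DRO}}(\pi) - Q_{\mathrm{\rm DRO}}(\pi) = \big[\hat\phi_n(\pi,\hat\alpha) - \hat\phi_n(\pi,\alpha^*)\big] + \big[\hat\phi_n(\pi,\alpha^*) - \phi(\alpha^*,\pi)\big],
\end{equation*}
where I will argue that the first bracket is $o_p(n^{-1/2})$ and the second carries the whole limit law.

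For the first bracket, the first-order condition $\tfrac{\partial}{\partial\alpha}\hat\phi_n(\pi,\hat\alpha)=0$ and a second-order Taylor expansion of $\hat\phi_n(\pi,\cdot)$ about $\hat\alpha$ give $\hat\phi_n(\pi,\hat\alpha)-\hat\phi_n(\pi,\alpha^*) = -\tfrac12\,\tfrac{\partial^2}{\partial\alpha^2}\hat\phi_n(\pi,\tilde\alpha)\,(\hat\alpha-\alpha^*)^2$, which by the uniformly bounded second-derivative formula of Lemma~\ref{lma:cvx} is $O_p\big((\hat\alpha-\alpha^*)^2\big)$. It thus suffices to establish the $\sqrt n$-consistency $\hat\alpha-\alpha^*=O_p(n^{-1/2})$. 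This is the standard argument for a concave $M$-estimator: expanding $0=\tfrac{\partial}{\partial\alpha}\hat\phi_n(\pi,\hat\alpha)$ about $\alpha^*$ and using $\tfrac{\partial}{\partial\alpha}\phi(\alpha^*,\pi)=0$ yields
\begin{equation*}
\hat\alpha-\alpha^* = -\frac{\tfrac{\partial}{\partial\alpha}\hat\phi_n(\pi,\alpha^*)}{\tfrac{\partial^2}{\partial\alpha^2}\hat\phi_n(\pi,\bar\alpha)},
\end{equation*}
whose numerator equals $\tfrac{\partial}{\partial\alpha}\hat\phi_n(\pi,\alpha^*)-\tfrac{\partial}{\partial\alpha}\phi(\alpha^*,\pi)=O_p(n^{-1/2})$ by the ordinary CLT applied to the bounded sample averages in the derivative formula, and whose denominator is bounded away from $0$ by strict concavity near $\alpha^*$ (the last clause of Lemma~\ref{lma:cvx} together with Assumption~\ref{assump:classical}.4). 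The consistency $\hat\alpha\to\alpha^*$ needed to launch this expansion follows from uniform convergence of $\hat\phi_n(\pi,\cdot)$ to $\phi(\cdot,\pi)$ on $[\underline\alpha,\overline\alpha]$ and uniqueness of $\alpha^*$.

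The second bracket supplies the Gaussian limit. Introduce $w := \E_{\P_0}[\exp(-Y(\pi(X))/\alpha^*)]=\E[W_i(\pi,\alpha^*)]$ and the centered summands $Z_i := \tfrac{\mathbf{1}\{\pi(X_i)=A_i\}}{\pi_0(A_i\mid X_i)}\big(\exp(-Y_i(A_i)/\alpha^*)-w\big)$. Since $\hat\phi_n(\pi,\alpha^*)-\phi(\alpha^*,\pi)=-\alpha^*\big(\log\hat W_n(\pi,\alpha^*)-\log w\big)$ with $\hat W_n=\bar W_n/S_n^\pi$ and $\bar W_n:=\tfrac1n\sum_i W_i(\pi,\alpha^*)$, a joint CLT for $(\bar W_n,S_n^\pi)$ combined with the delta method for $\log(\bar W_n/S_n^\pi)$ gives
\begin{equation*}
\sqrt n\big(\hat\phi_n(\pi,\alpha^*)-\phi(\alpha^*,\pi)\big) = -\frac{\alpha^*}{w}\,\frac{1}{\sqrt n}\sum_{i=1}^n Z_i + o_p(1).
\end{equation*}
The self-normalization by $S_n^\pi$ is exactly what turns the summand into the centered quantity: using $\E[\mathbf{1}\{\pi(X)=A\}/\pi_0(A\mid X)]=1$ and $\E[\bar W_n]=w$ (the importance-weighting identity of Lemma~\ref{thm:strong_duality}), the two first-order delta-method terms combine as $\bar W_n-wS_n^\pi=\tfrac1n\sum_i Z_i$. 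Finally, unconfoundedness (Assumption~\ref{assump:classical}.1) lets me integrate out $A$ given $X$ to obtain $\E[Z_i]=0$ and $\E[Z_i^2]=\E\big[\tfrac{1}{\pi_0(\pi(X)\mid X)}(\exp(-Y(\pi(X))/\alpha^*)-w)^2\big]$, so the classical CLT gives $\tfrac1{\sqrt n}\sum_i Z_i\Rightarrow\mathcal N(0,\E[Z_1^2])$ and hence the stated limit with variance $(\alpha^*)^2\,\E[Z_1^2]/w^2=\sigma^2(\alpha^*)$; combining the two brackets by Slutsky completes the argument.

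I expect the main obstacle to be the $\sqrt n$-consistency of $\hat\alpha$ in the first bracket, specifically confining the optimizer to a fixed compact interval away from $0$ and $\infty$ and certifying that the denominator in the $M$-estimator expansion stays bounded away from zero; this is where the density lower bound (Assumption~\ref{assump:classical}.4) and the auxiliary bounds on $\alpha^*$ from \ref{sec:thm1} do the real work. Everything downstream — the delta-method linearization and the identification of $\sigma^2(\alpha^*)$ — is mechanical once the optimizer is controlled.
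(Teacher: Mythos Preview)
Your argument is correct and leads to the same limit, but it takes a genuinely different route from the paper. The paper does not decompose via $\hat\alpha$ at all: it establishes a functional CLT $\sqrt{n}\big(\hat W_n(\pi,\cdot)-\E[W_i(\pi,\cdot)]\big)\Rightarrow Z(\cdot)$ in $\mathcal C([\underline\alpha/2,2\overline\alpha])$, then views $-\hat Q_{\mathrm{DRO}}$ as the inf-functional $V(\psi)=\inf_\alpha\{\alpha\log\psi(\alpha)+\alpha\delta\}$ applied to $\hat W_n(\pi,\cdot)$, computes $V'_{\E[W_i(\pi,\cdot)]}$ by the Danskin theorem (which, because $\alpha^*$ is unique, collapses to a single linear functional), and concludes via the infinite-dimensional Delta theorem. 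Your approach replaces this functional machinery with the classical $M$-estimation envelope argument: the bracket $\hat\phi_n(\pi,\hat\alpha)-\hat\phi_n(\pi,\alpha^*)$ is killed at order $O_p((\hat\alpha-\alpha^*)^2)=O_p(n^{-1})$ by the first-order condition, and the remaining bracket is handled by the scalar delta method for $\log(\bar W_n/S_n^\pi)$, which you correctly linearize to $\tfrac{1}{w}\sum_i Z_i$. This is more elementary and makes the ``envelope'' reason why $\hat\alpha-\alpha^*$ contributes nothing to first order completely transparent; the paper's route is heavier but would survive intact if $\alpha^*$ were only a set (Danskin still gives a directional derivative), whereas your Taylor-at-$\hat\alpha$ step genuinely uses uniqueness and interiority of $\alpha^*$. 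Both arguments need the same localization of the optimizer to a fixed compact interval away from $0$ --- the paper does this in its ``$\hat Q_{\mathrm{DRO}}(\pi)=-V(\hat W_n)$ with probability $\to 1$'' step, which is exactly the obstacle you flag --- and both compute the same variance via the identity $\bar W_n-wS_n^\pi=\tfrac1n\sum_i Z_i$ (the paper's Lemma~\ref{lma:CLT_pre}). One small cleanup: the strict negativity of the denominator $\partial_\alpha^2\hat\phi_n(\pi,\bar\alpha)$ should be argued via convergence to the \emph{population} second derivative $\partial_\alpha^2\phi(\alpha^*,\pi)<0$ (strict convexity of the cumulant generating function under Assumption~\ref{assump:classical}.4), rather than via Lemma~\ref{lma:cvx}, which concerns the empirical objective and gives strict concavity only, not a uniform bound.
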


{%
	{Theorem \ref{DRO_CLT} ensures that we are able to evaluate the
		performance of a policy in a new environment  using only the training data
		even if the new environment is different from the training environment.}}
The proof of Theorem \ref{DRO_CLT} is in   \ref{sec:thm1}. \label%
{sec:CLT}

\section{Distributionally Robust Policy Learning}

\label{sec:algorithm}

In this section, we study the policy learning aspect of the problem and
discuss both the algorithm and its corresponding finite-sample theoretical
guarantee. {The aim is to find a robust policy that has reasonable performance in a new environment with unknown distributional shifts.} First, with the distributionally robust policy evaluation scheme
discussed in the previous section, we can in principle compute the
distributionally robust optimal policy $\hat{\pi}_{\mathrm{DRO}}$ by picking
a policy in the given policy class $\Pi$ that maximizes the value of $\hat{Q}%
_{\mathrm{DRO}}$, i.e.
\begin{equation}
	\hat{\pi}_{\mathrm{DRO}}\in\argmax_{\pi \in \Pi }\hat{Q}_{\mathrm{DRO}}(\pi)
	= \argmax_{\pi \in \Pi}\;\sup_{\alpha\geq 0}\left\{ -\alpha \log \hat{W}%
	_n(\pi, \alpha) - \alpha \delta\right\}.  \label{eqn:pi_hat}
\end{equation}

How do we compute $\hat{\pi}_{\mathrm{DRO}}$? In general, this problem is
computationally intractable  since it is highly non-convex in its
optimization variables ($\pi$ and $\alpha$ jointly). However, following the
standard tradition in the machine learning and optimization literature , we can employ certain
approximate schemes that, although do not guarantee global convergence, are
computationally efficient and practically effective {(for example, greedy tree search \citep[Section 9.2]{friedman2001elements} for decision-tree policy classes and gradient descent \citep{ruder2016overview} for linear policy classes)}.

A simple and quite effective scheme is alternate minimization, given in
Algorithm \ref{alg:DRO_policy_learning}, where we learn $\hat{\pi}_{\mathrm{%
		\mathrm{DRO}}}$ by fixing $\alpha$ and minimizing on $\pi$ and then fixing $%
\pi$ and maximizing on $\alpha$ in each iteration. Since the value of $\hat{Q%
}_\mathrm{DRO}(\pi)$ is non-decreasing along the iterations of Algorithm \ref%
{alg:DRO_policy_learning}, the converged solution obtained from Algorithm %
\ref{alg:DRO_policy_learning} is a local maximum of $\hat{Q}_\mathrm{DRO}$.
In practice, to accelerate the algorithm, we only iterate once for $\alpha$
(line 8) using the Newton-Raphson step $\alpha \gets \alpha - \left(\frac{%
	\partial}{\partial \alpha}\hat{\phi}_{n}\right)/ \left(\frac{\partial^2}{%
	\partial \alpha^2}\hat{\phi}_{n}\right)$. Subsequent simulations (see next
section) show that this is often sufficient.

\begin{algorithm}
	\caption{Distributionally Robust Policy Learning}
	\label{alg:DRO_policy_learning}
	\begin{algorithmic}[1]
		\STATE \textbf{Input:} Dataset $\{(X_i, \W_i, Y_i)\}_{i=1}^n$, data-collecting policy $\pi_0$, and initial value of dual variable $\alpha$.
		\STATE \textbf{Output:} Distributionally robust optimal policy $\hat{\pi}_{\mathrm{\rm DRO}} $.
		
		\REPEAT
		\STATE Let $W_i(\pi, \alpha) \gets  \frac{\mathbf{1}\{\pi(X_i) =  A_i\}}{\pi_0(A_i \mid X_i)}\exp(-Y_i(A_i)/\alpha)$.
		\STATE Compute $S_n^{\pi}\gets  \frac{1}{n}\sum_{i=1}^{n}\frac{\mathbf{1\{}\pi(X_{i})=A_{i}\mathbf{\}}}{\pi _{0}\left( A_{i}|X_{i}\right) }$.
		\STATE Compute $\hat{W}_n(\pi,\alpha) \gets  \frac{1}{nS_n^{\pi}}\sum_{i=1}^{n} W_i(\pi, \alpha)$.
		\STATE
		Update $\pi \gets  \argmin
		_{\pi\in \Pi}\hat{W}_n(\pi,\alpha)$.
		\STATE
		Update $\alpha \gets \argmax_{\alpha>0}  \{\hat{\phi}_n(\pi,\alpha)\}$. 
		\UNTIL{$\alpha$ converges.}
		
		\STATE \textbf{Return} $\pi$.
	\end{algorithmic}
\end{algorithm}

\subsection{Statistical Performance Guarantee}

We now establish the finite-sample statistical performance guarantee for the
distributionally robust optimal policy $\hat{\pi}_{\mathrm{DRO}}$. Before
giving the theorem, we first need to  define entropy integral in the policy
class, which represents the class complexity.
\begin{definition}
	Given the feature domain $\mathcal{X}$, a policy class $\Pi ,$ a set of $n$
	points $\{x_{1},\ldots ,x_{n}\}\subset \mathcal{X}$, define:
	\begin{enumerate}
		\item Hamming distance between any two policies $\pi _{1}$ and $\pi _{2}$ in
		$\Pi :H(\pi _{1},\pi _{2})=\frac{1}{n}\sum_{j=1}^{n}\mathbf{1}\{\pi
		_{1}(x_{j})\neq \pi _{2}(x_{j})\}.$
		
		\item $\epsilon $-Hamming covering number of the set $\{x_{1},\ldots
		,x_{n}\}:N_{H}^{\left( n\right) }\left( \epsilon ,\Pi ,\{x_{1},\ldots
		,x_{n}\}\right) $ is the smallest number $K$ of policies $\{\pi _{1},\ldots
		,\pi _{K}\}$ in $\Pi $, such that $\forall \pi \in \Pi ,\exists \pi
		_{i},H(\pi ,\pi _{i})\leq \epsilon .$
		
		\item $\epsilon $-Hamming covering number of $\Pi :N_{H}^{\left( n\right)
		}\left( \epsilon ,\Pi \right) \triangleq\sup \left\{ N_{H}^{(n)}\left( \epsilon ,\Pi
		,\{x_{1},\ldots ,x_{n}\}\right) |x_{1},\ldots ,x_{n}\in \mathcal{X}\right\} .
		$
		
		\item Entropy integral: $\kappa ^{(n)}\left( \Pi \right) \triangleq\int_{0}^{1}\sqrt{%
			\log N_{H}^{\left( n\right) }\left( \epsilon^2 ,\Pi \right) }{\rm d}\epsilon .$
	\end{enumerate}
\end{definition}
The defined entropy integral  is the same as Definition 4 in \cite%
{zhou2018offline}, which is a variant of the classical entropy integral
introduced in \cite{dudley1967sizes}, and the Hamming distance is a
well-known metric for measuring the similarity between two equal-length
arrays whose elements are supported on on discrete sets \citep{hamming1950error}. We
then discuss the entropy integrals $\kappa ^{(n)}\left( \Pi \right)$ for
different policy classes $\Pi$.
\begin{example}[Finite policy classes]
	For a policy class $\Pi_{\rm{Fin}}$ containing a finite number of policies, we have $\kappa ^{(n)}\left( \Pi_{\rm{Fin}} \right) \leq \sqrt{\log(|\Pi_{\rm{Fin}}|)},$ where $|\Pi_{\rm{Fin}}|$ denotes the cardinality of the set $\Pi_{\rm{Fin}}$.
\end{example}
The entropy integrals for the linear policy classes and decision-tree policy
classes are discussed in Section \ref{sec:numerical_linear_class} and
Section \ref{sec:decision_tree}, respectively. For the special case of
binary action, we have the following bound for the entropy integral by \cite%
{zhengyuan2021} (see the discussion following Definition 4).
\begin{lemma}
	If $d=2$, we have $\kappa ^{(n)}\left( \Pi \right) \leq 2.5 \sqrt{VC(\Pi)}$, where $VC(\cdot)$ denotes the VC dimension defined in \cite{vapnik1971uniform}.
	\label{lemma:VC_bound}
\end{lemma}
This result can be further generalized to the multi-action policy learning
setting, where $d$ is greater than 2; see the proof of Theorem 2 in \cite%
{zhengyuan2021}.
\begin{lemma}
	\label{lemma:graph}
	We have $\kappa ^{(n)}\left( \Pi \right) \leq 2.5 \sqrt{\log(d)Graph(\Pi)}$, where $Graph(\cdot)$ denotes the
	\emph{%
		graph dimension} (see the definition in \cite{bendavid1995characterizations}).
\end{lemma}
Graph dimension is a direct generalization of VC dimension. There are many
papers that discuss the graph dimension and also a closely related
concept, Natarajan dimension; see, for example, \cite%
{daniely2011multiclass,daniely2012multiclass,zhengyuan2021,natarajan1989learning}%
.

Theorem \ref{DRO_Uniform} demonstrates that with high probability, the
distributionally robust regret of the learned policy $R_{\mathrm{DRO}}(\hat{%
	\pi}_{\mathrm{DRO}})$ decays at a rate upper bounded by $O_{p}(\kappa ^{(n)}/%
\sqrt{n})$.
\begin{theorem}
	Suppose Assumption \ref{assump:classical} is enforced. Then,  with probability  at least $1-\varepsilon$, under Assumption \ref{assump:pos_dens}.1, we have
	\begin{equation}
		R_{\rm DRO}(\hat{\pi}_{\rm DRO})  \leq \frac{4}{\underline{b}\eta\sqrt{n}}\left( 24(\sqrt{2}+1)\kappa ^{(n)}\left( \Pi \right)  +\sqrt{2\log\left(\frac{2}{\varepsilon}\right)}+C\right), \label{eqn:uniform_bd}
	\end{equation}
	where $C$ is a universal constant, and under Assumption \ref{assump:pos_dens}.2, when
	\[
	n \geq\left\{\frac{4}{\underline{b}\eta
	}\left( 24(\sqrt{2}+1)\kappa ^{(n)}\left( \Pi \right) +48\sqrt{|%
		\mathbb{D}|\log \left( 2\right) }+\sqrt{2\log \left( \frac{2}{\varepsilon}\right) }\right)\right\}^2  ,
	\]we have
	\begin{equation}
		\label{eqn:uniform_bd_discrete}
		R_{\mathrm{DRO}}(\hat{\pi}_{\mathrm{DRO}})\leq \frac{4M}{\underline{b}\eta
			\sqrt{n}}\left( 24(\sqrt{2}+1)\kappa ^{(n)}\left( \Pi \right) +48\sqrt{|%
			\mathbb{D}|\log \left( 2\right) }+\sqrt{2\log \left( \frac{2}{\varepsilon}\right) }\right),
	\end{equation}
	where $|\mathbb{D}|$ denotes the cardinality of the set $\mathbb{D}$.
	\label{DRO_Uniform}
\end{theorem}
{The key challenge to the proof of Theorem
	\ref{DRO_Uniform} is that $Q_{\rm DRO}(\pi)$ is hard to quantify, since it is a non-linear functional of the probability measure $\mathbf{P}$.
	Thanksfully, Lemmas \ref{lma:quantile} and \ref{lma:discrete_key} allow us to transform the hardness of analysis of $Q_{\rm DRO}(\pi)$ into the well-studied terms such as the quantile and the total variation distance. }
\begin{lemma}
	\label{lma:quantile}
	For any probability measures $\mathbf{P}_{1},\mathbf{P}_{2}$ supported on $\reals$, we have
	\begin{equation*}
		\left\vert \sup_{\alpha \geq 0}\left\{ -\alpha \log \mathbf{E}_{\mathbf{P}%
			_{1}}\left[ \exp \left( -Y/\alpha \right) \right] -\alpha \delta \right\}
		-\sup_{\alpha \geq 0}\left\{ -\alpha \log \mathbf{E}_{\mathbf{P}_{2}}\left[
		\exp \left( -Y/\alpha \right) \right] -\alpha \delta \right\} \right\vert
		\leq \sup_{t\in \lbrack 0,1]}\left\vert q_{\mathbf{P}_{1}}\left( t\right)
		-q_{\mathbf{P}_{2}}\left( t\right) \right\vert,
	\end{equation*}
	where $q_{\mathbf{P}}\left( t\right)$ denotes the $t$-quantile of a probability measure $\P$, defined as
	\begin{equation*}
		q_{\mathbf{P}}\left( t\right) \triangleq\inf \left\{ x\in \reals:t\leq F_{%
			\mathbf{P}}\left( x\right) \right\} ,
	\end{equation*}%
	where $F_\P$ is the CDF of $\mathbf{P.}$
\end{lemma}

\begin{lemma}Suppose $\mathbf{P}_{1}$ and $\mathbf{P}_{2}$ are supported on $%
	\mathbb{D}$ and satisfy Assumption \ref{assump:classical}.3. We further assume $\mathbf{P}_{2}$ satisfies Assumption \ref{assump:pos_dens}.2. When $\mathrm{TV}(\mathbf{P}_{1},\mathbf{%
		P}_{2})<\underline{b}/2,$ we have
	\[
	\left\vert \sup_{\alpha \geq 0}\left\{ -\alpha \log \mathbf{E}_{\mathbf{P}%
		_{1}}\left[ \exp \left( -Y/\alpha \right) \right] -\alpha \delta \right\}
	-\sup_{\alpha \geq 0}\left\{ -\alpha \log \mathbf{E}_{\mathbf{P}_{2}}\left[
	\exp \left( -Y/\alpha \right) \right] -\alpha \delta \right\} \right\vert
	\leq \frac{2M}{\underline{b}}\mathrm{TV}(\mathbf{P}_{1},\mathbf{P}_{2}),
	\]
	where $\mathrm{TV}$ denotes the total variation distance.
	\label{lma:discrete_key}
\end{lemma}

The detailed proof is in   \ref{sec:proof_uniform}. We see those
bounds in \eqref{eqn:uniform_bd} and \eqref{eqn:uniform_bd_discrete} for the
distributionally robust regret does not depend on the uncertainty size $%
\delta $. {Furthermore, if $\sup_{n}\kappa ^{(n)}<\infty $ including the finite policy classes, linear classes, decision-tree policy classes and the case where $VC(\Pi)$ or $Graph(\Pi)$ is finite, we have a
	parametric convergence rate $O_{p}(1/\sqrt{n})$. Further, if $\kappa ^{(n)}=o_{p}(%
	\sqrt{n})$, we have $R_{\mathrm{DRO}}(\hat{\pi}_{\mathrm{DRO}})\rightarrow 0$
	in probability. Generally, we may expect the complexities of parametric classes are $O(1)$. 
	Theorem \ref{DRO_Uniform} guarantees the
	robustness of the policy learned from the training environment given
	sufficient training data and low complexity of the policy class. This result means that the test environment performance is guaranteed as long as test and training environments do not differ too much. We will show this 
	rate is optimal up to a constant in Theorem \ref{thm:ld}. \label%
	{sec:uniform_convergence}
	
	\subsection{Statistical Lower Bound}
	\label{sec:lower_bd}
	In this subsection, we provide a tight lower bound of the distributionally
	robust batch contextual bandit problem. First we define $\mathcal{P}(M)$ as the collection of all joint distributions of $(X,Y(a^{1}),Y(a^{2}),%
	\ldots ,Y(a^{d}),A)$ satisfying Assumption \ref{assump:classical}. To emphasize the dependence on the underlying distribution $\P_0$, we rewrite $R_\mathrm{DRO}(\pi) =R_\mathrm{DRO}(\pi,\P_0)$. We further denote  $\mathbf{P}^{\pi_0}_0$ to be the distribution of the observed triples $\left\{ X,A,Y(A)\right\}$.
	
	\begin{theorem}
		\label{thm:ld}
		Let $d=2$ and $\delta \leq 0.226$. Then, for any policy $\pi $ as a function of $\{X_{i},A_{i},Y_{i}%
		\}_{i=1}^{n},$ it holds that
		\begin{equation*}
			\sup_{\P_0 * \pi_0\in \mathcal{P}(M)}\E_{\left(\mathbf{P}^{\pi_0}_0\right)^n}\left[R_{\mathrm{DRO}}(\pi,\P_0 )\right]\geq \frac{M \kappa
				^{(n)}\left(\Pi \right) }{160\sqrt{n}}, \text{ for }n\geq \kappa
			^{(n)}\left(\Pi \right)^2,
		\end{equation*}
		where $\left(\mathbf{P}^{\pi_0}_0\right)^n$ denotes the $n$-times product measure of $\mathbf{P}^{\pi_0}_0$.
	\end{theorem}
	The proof of Theorem \ref{thm:ld} is in   \ref{sec:proof_ld}. Theorem \ref{thm:ld} shows that  the dependence of the regret on the complexity $\kappa ^{(n)} (\Pi)$; the number of samples, $n$; and the bound of the reward, $M$, is optimal up to a constant. It means that it is impossible to find a good robust policy with a small amount of training data or a relatively large policy class.}
\section{Simulation Studies}

\label{sec:numericals} In this section, we provide discussions on simulation
studies to justify the robustness of the proposed DRO policy $\hat{\pi}_{%
	\mathrm{DRO}}$ in the linear policy class. Specifically, Section \ref%
{sec:numerical_bayes} discusses a notion of the Bayes DRO policy, which is
viewed as a benchmark; Section \ref{sec:numerical_linear_class} presents an
approximation algorithm to efficiently learn a linear policy; Section \ref%
{sec:numerical_full} gives a visualization of the learned DRO policy, with
a comparison to the benchmark Bayes DRO policy, and demonstrates  the performance of our proposed
estimator.

\subsection{Bayes DRO Policy}

\label{sec:numerical_bayes} In this section, we give a characterization of
the Bayes DRO policy $\overline{\pi}^*_{\mathrm{DRO}} $, which maximizes the
distributionally robust value function within the class of all measurable
policies, i.e.,
\begin{equation*}
	\overline{\pi}^*_{\mathrm{DRO}}\in \argmax_{\pi \in \overline{\Pi}}\{Q_{%
		\mathrm{DRO}}(\pi)\},
\end{equation*}
where $\overline{\Pi}$ denotes the class of all measurable mappings from $%
\mathcal{X}$ to the action set $\mathcal{A}$. Despite the Bayes DRO
policy is not being learnable given finitely many training samples, it could be a
benchmark in a simulation study. Proposition \ref{thm:pi_policy} shows how
to compute $\overline{\pi}^*_{\mathrm{DRO}}$ if we know the population
distribution.
\begin{proposition} \label{thm:pi_policy}Suppose that for any $\alpha>0$ and any $a \in \mathcal{A}$, the mapping $x\mapsto \E_{\P_0}\left[\left. \exp\left(-Y(a)/\alpha\right)\right| X=x \right]$ is measurable. Then, the  Bayes DRO policy is
	\[
	\overline{\pi}^*_\mathrm{\rm DRO}(x) \in  \argmin_{a\in \mathcal{A}}\left\{ \E_{\P_0}\left[\left. \exp\left(-\frac{Y(a)}{\alpha^*(\overline{\pi}^*_{\rm DRO})}\right)\right| X=x \right]\right\},
	\]
	where $\alpha^*(\pi^*_{\rm DRO})$ is an optimizer of the following optimization problem:
	\begin{equation}
		\alpha^*(\overline{\pi}^*_{\rm DRO})\in \argmax_{\alpha \geq 0} \left \{-\alpha \log\E_{\P_0}\left [\min_{a\in \mathcal{A}}\left\{ \E_{\P_0}\left[\left. \exp\left(-Y(a)/\alpha\right)\right| X\right] \right\}\right] - \alpha \delta \right \}.
		\label{eq:alpha_DRO_small}
	\end{equation}
\end{proposition}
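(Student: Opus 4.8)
The plan is to start from the strong duality of Lemma~\ref{thm:strong_duality}, which gives $Q_{\rm DRO}(\pi) = \sup_{\alpha \geq 0}\phi(\alpha,\pi)$ for every measurable policy $\pi$. Maximizing $Q_{\rm DRO}(\pi)$ over $\pi \in \overline{\Pi}$ is then a supremum of a supremum, so the order of the two maximizations is immaterial:
\[
\max_{\pi \in \overline{\Pi}} Q_{\rm DRO}(\pi) = \sup_{\pi \in \overline{\Pi}}\sup_{\alpha \geq 0}\phi(\alpha,\pi) = \sup_{\alpha \geq 0}\sup_{\pi \in \overline{\Pi}}\phi(\alpha,\pi).
\]
This reduces matters to solving the inner maximization $\sup_{\pi}\phi(\alpha,\pi)$ for each fixed $\alpha$, which is far more tractable than handling the coupled $(\pi,\alpha)$ problem directly.

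First I would fix $\alpha > 0$ and note that, since $t \mapsto -\alpha\log t$ is strictly decreasing and $-\alpha\delta$ does not depend on $\pi$, maximizing $\phi(\alpha,\pi)$ over $\pi$ is equivalent to minimizing $\E_{\P_0}[\exp(-Y(\pi(X))/\alpha)]$. Conditioning on $X$ via the tower property,
\[
\E_{\P_0}\left[\exp(-Y(\pi(X))/\alpha)\right] = \E_{\P_0}\left[\E_{\P_0}\left[\left.\exp(-Y(\pi(X))/\alpha)\,\right|\, X\right]\right],
\]
and because $\pi(x)$ selects a single action of the finite set $\mathcal{A}$, the inner conditional expectation at $X=x$ equals $\E_{\P_0}[\exp(-Y(a)/\alpha)\mid X=x]$ with $a=\pi(x)$. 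Minimizing the integrand pointwise therefore yields the action $\argmin_{a\in\mathcal{A}}\E_{\P_0}[\exp(-Y(a)/\alpha)\mid X=x]$; finiteness of $\mathcal{A}$ guarantees the minimum is attained, and the hypothesized measurability of $x\mapsto\E_{\P_0}[\exp(-Y(a)/\alpha)\mid X=x]$ guarantees this selector is a measurable (hence admissible) policy. This interchange of minimization and integration gives $\sup_{\pi}\phi(\alpha,\pi) = -\alpha\log \E_{\P_0}[\min_{a\in\mathcal{A}}\E_{\P_0}[\exp(-Y(a)/\alpha)\mid X]] - \alpha\delta$, so the remaining supremum over $\alpha$ is exactly the one-dimensional problem~\eqref{eq:alpha_DRO_small} defining $\alpha^*$, which is attained at some $0<\alpha^*<\infty$ by the attainability established in Lemma~\ref{lemma:upper_bd} together with the remark that the optimum is not attained at $\alpha=0$.

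The last step, and the one requiring care, is to close the self-referential loop: the statement defines $\overline{\pi}^*_{\rm DRO}$ through $\alpha^*(\overline{\pi}^*_{\rm DRO})$, yet the policy and its optimal dual variable are written in terms of one another. I would resolve this by a sandwich argument. Denote by $g(\alpha)=\sup_{\pi}\phi(\alpha,\pi)$ the inner value computed above, and let $\pi^*$ be the pointwise argmin policy built from the maximizer $\alpha^*$ of $g$. By construction $\pi^*$ attains the pointwise minimum at $\alpha^*$, so $\phi(\alpha^*,\pi^*)=g(\alpha^*)$, while trivially $g(\alpha)\geq\phi(\alpha,\pi^*)$ for every $\alpha$. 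Combining these with $g(\alpha^*)=\max_{\alpha}g(\alpha)$ yields $\phi(\alpha^*,\pi^*)=g(\alpha^*)\geq g(\alpha)\geq\phi(\alpha,\pi^*)$ for all $\alpha$, so $\alpha^*$ maximizes $\alpha\mapsto\phi(\alpha,\pi^*)$; by the uniqueness in Lemma~\ref{lemma:upper_bd} this forces $\alpha^*=\alpha^{\ast}(\pi^*)$, confirming that $\pi^*$ is consistent with the characterization and that it attains $\sup_{\alpha}g(\alpha)=\max_{\pi\in\overline{\Pi}}Q_{\rm DRO}(\pi)$.

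I expect the measurable selection in the pointwise step and this final self-consistency verification to be the only genuine obstacles; the interchange of the two suprema and the pointwise reduction are routine once strong duality is in hand.
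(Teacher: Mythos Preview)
Your proposal is correct and follows essentially the same route as the paper: invoke the strong duality of Lemma~\ref{thm:strong_duality}, swap the two suprema, apply the tower property, solve the inner problem by pointwise minimization over the finite action set, and verify measurability of the resulting selector. Your additional sandwich argument verifying that the maximizer $\alpha^*$ of the profiled objective coincides with $\alpha^{\ast}(\pi^*)$ in the sense of Definition~\ref{def:phi_alpha} is in fact more careful than the paper's own proof, which simply asserts this identification without justification; note however that Lemma~\ref{lemma:upper_bd} alone does not give uniqueness (it only gives the upper bound), so you should either appeal to strict concavity of $\phi(\cdot,\pi^*)$ or, since the proposition's conclusion is stated with ``$\in$'', simply observe that your sandwich shows $\alpha^*$ lies in the argmax set, which suffices.
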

See   \ref{sec:proof_numerical_small} for the proof.
\begin{remark} $\overline{\pi}_{\rm DRO}^*$ only depends on the marginal distribution of $X$ and the conditional distributions of $Y(a^i)|X,i=1,2,\ldots,d$. Therefore, the conditional correlation structure of $Y(a^i)|X,i=1,2,\ldots,d$ does not affect $\overline{\pi}_{\rm DRO}^*$.
\end{remark}

\subsection{Linear Policy Class and Logistic Policy Approximation}

\label{sec:numerical_linear_class}

In this section, we introduce the linear policy class $\Pi_{\mathrm{Lin}}$.
We consider $\mathcal{X}$ to be a subset of $\mathbf{R}^p$, and the action
set $\mathcal{A}=\{1,2,\ldots,d\}$. To capture the intercept, it is
convenient to include the constant variable 1 in $X\in\mathcal{X}$, thus in
the rest of Section \ref{sec:numerical_linear_class}, $X$ is a $p+1$
dimensional vector and $\mathcal{X}$ is a subset of $\mathbf{R}^{p+1}$.
Each policy $\pi\in \Pi_{\mathrm{Lin}}$ is parameterized by a set of $d$
vectors $\Theta = \{\theta_a\in\mathbf{R}^{p+1}:a\in\mathcal{A}\} \in
\mathbf{R}^{(p+1)\times d}$, and the mapping $\pi: \mathcal{X}\rightarrow
\mathcal{A}$ is defined as
\begin{equation*}
	\pi_{\Theta}(x) \in \argmax_{a\in \mathcal{A}} ~ \left\{ \theta_a^{\top} x
	\right\}.
\end{equation*}
The optimal parameter for linear policy class is characterized by the
optimal solution of 
$$\max_{\Theta\in \mathbf{R}^{(p+1)\times d}} \mathbf{E}_{%
	\mathbf{P}_0}[ Y(\pi_{\Theta}(X))]. $$ Due to the fact that $\mathbf{E}_{%
	\mathbf{P}_0}[ Y(\pi_{\Theta}(X))] = \mathbf{E}_{\P *\pi_0}\left[\frac{Y(A)%
	\mathbf{1}\{\pi_{\Theta}(X) = A\}}{\pi_0(A\mid X)}\right]$, the associated
sample average approximation problem for optimal parameter estimation is
\begin{equation*}
	\max_{\Theta\in \mathbf{R}^{(p+1)\times d}} \frac{1}{n}\sum_{i=1}^{n}\frac{%
		Y_i(A_i)\mathbf{1\{}\pi_{\Theta}(X_{i})=A_{i}\mathbf{\}}}{\pi_{0}\left(
		A_{i}|X_{i}\right) }.
\end{equation*}
However, the objective in this optimization problem is non-differentiable
and non-convex, thus we approximate the indicator function using a softmax
mapping by
\begin{equation*}
	\mathbf{1\{}\pi_{\Theta}(X_{i})=A_{i}\mathbf{\}} \approx \frac{
		\exp(\theta_{A_i}^{\top}X_i)}{ \sum_{a = 1}^{d}\exp(\theta_{a}^{\top}X_i)},
\end{equation*}
which leads to an optimization problem with smooth objective:
\begin{equation*}
	\max_{\Theta\in \mathbf{R}^{(p+1)\times d}} \frac{1}{n}\sum_{i=1}^{n}\frac{%
		Y_i(A_i) \exp(\theta_{A_i}^{\top}X_i)}{\pi_{0}\left( A_{i}|X_{i}\right)
		\sum_{a = 1}^{d}\exp(\theta_{a}^{\top}X_i)}.
\end{equation*}
We employ the gradient descent method to solve for the optimal parameter
\begin{equation*}
	\hat{\Theta}_{\mathrm{Lin}} \in \argmax_{\Theta\in \mathbf{R}^{(p+1)\times
			d}} \left\{ \frac{1}{n}\sum_{i=1}^{n}\frac{Y_i(A_i)
		\exp(\theta_{A_i}^{\top}X_i)}{\pi_{0}\left( A_{i}|X_{i}\right) \sum_{a =
			1}^{d}\exp(\theta_{a}^{\top}X_i)}\right\},
\end{equation*}
and define the policy $\hat{\pi}_{\mathrm{Lin}}\triangleq\pi_{\hat{\Theta}_{%
		\mathrm{Lin}}}$ as our linear policy estimator. In Section \ref{sec:numerical_full}, we justify the
efficacy of $\hat{\pi}_{\mathrm{Lin}}$ by empirically showing $\hat{\pi}_{%
	\mathrm{Lin}}$ is capable of discovering the (non-robust) optimal decision
boundary.

As an oracle in Algorithm \ref{alg:DRO_policy_learning}, a similar smoothing
technique is adopted to solve $\argmin_{\pi\in \Pi_{\mathrm{Lin}}}\hat{W%
}_n(\pi,\alpha)$ for linear policy class $\Pi_{\mathrm{Lin}}$. We omit the
details here due to space limitations.

We will present an upper bound of the entropy integral $\kappa^{(n)}(\Pi_{%
	\mathrm{Lin}})$ in Lemma \ref{lemma:linear-class-complexity}. By plugging
the result of Lemma \ref{lemma:linear-class-complexity} into Theorem \ref%
{DRO_Uniform}, one can quickly remark that the regret $R_{\mathrm{DRO}}(\hat{%
	\pi}_{\mathrm{DRO}})$ achieves the optimal asymptotic convergence rate $%
O_p(1/\sqrt{n})$ given by Theorem \ref{DRO_Uniform}.

\begin{lemma}
	\label{lemma:linear-class-complexity} There exists a universal constant $C$ such that $ \kappa^{(n)}(\Pilin) \leq C\sqrt{dp \log(d)\log(dp)}.$
\end{lemma}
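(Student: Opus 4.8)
The plan is to reduce the statement to an $n$-free bound on the Hamming covering number of $\Pilin$ and then integrate. Since $\kappa^{(n)}(\Pilin)=\int_{0}^{1}\sqrt{\log N_{H}^{(n)}(\epsilon^{2},\Pilin)}\,d\epsilon$, it suffices to establish a bound of the form $\log N_{H}^{(n)}(\epsilon^{2},\Pilin)\le C\,dp\log(d)\,\log(dp/\epsilon)$ with a constant independent of $n$ and of the supporting points; the claimed estimate then follows because $\int_{0}^{1}\sqrt{\log(dp/\epsilon)}\,d\epsilon=O(\sqrt{\log(dp)})$ (the integral converges, by subadditivity of $\sqrt{\cdot}$ and $\int_0^1\sqrt{\log(1/\epsilon)}\,d\epsilon<\infty$), so that $\sqrt{dp\log(d)}\cdot\sqrt{\log(dp)}$ reproduces the right-hand side. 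The entire difficulty is thus concentrated in the covering-number estimate, and taking $\sup_n$ at the end is harmless once every bound is $n$-free.

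First I would carry out the combinatorial (VC-type) step. A policy $\pi_{\Theta}(x)=\argmax_{a}\{\theta_a^{\top}x\}$ restricted to $n$ points $x_1,\dots,x_n$ is completely determined by the sign pattern of the $\binom{d}{2}n$ affine forms $(\theta_a-\theta_b)^{\top}x_i$, regarded as linear functions of the $D=d(p+1)$ free parameters assembled in $\Theta$. By the standard bound on the number of cells of an arrangement of hyperplanes in $\reals^{D}$ (Sauer--Shelah / Warren), the number of distinct labellings realizable on $n$ points is at most $\left(e\binom{d}{2}n/D\right)^{D}$, a polynomial in $n$ of degree $D=d(p+1)$. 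Substituting this growth bound into the shattering inequality $2^{m}\le(\text{growth at }m)$ shows that the graph dimension (equivalently, up to a $\log d$ factor, the Natarajan dimension) of $\Pilin$ is $O(dp\log d)$. The crucial gain, and the reason the final bound is linear rather than quadratic in $d$, is that the $\binom{d}{2}$ pairwise comparisons are generated by only $d(p+1)$ parameters, so one must count parameters, not pairs.

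Next I would pass from this combinatorial dimension to an $n$-free metric-entropy bound via Haussler's packing bound. Represent each policy by its graph indicator $(x,a)\mapsto\mathbf{1}\{\pi(x)=a\}$, a binary class whose VC dimension is exactly the graph dimension bounded above, and observe that the Hamming distance $H(\pi,\pi')$ equals, up to the factor $\tfrac{d}{2}$ and the uniform measure on $\{(x_i,a)\}$, the empirical $L_{1}$ distance between the corresponding graph indicators. Haussler's bound applied to this binary class then yields $N_{H}^{(n)}(\delta,\Pilin)\le(C\,dp/\delta)^{c\,dp\log d}$, uniformly in $n$ and in the supporting points, which is precisely the covering estimate posited above upon setting $\delta=\epsilon^{2}$.

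Finally I would substitute this into the entropy integral, evaluate $\int_{0}^{1}\sqrt{\log(C\,dp/\epsilon^{2})}\,d\epsilon=O(\sqrt{\log(dp)})$, collect constants into a single universal $C$, and take $\sup_n$, which changes nothing since all bounds are $n$-free. The main obstacle is exactly the covering-number step: securing the dependence that is linear in $d$ requires exploiting the $\argmax$/hyperplane-arrangement structure to count the $d(p+1)$ genuine degrees of freedom. The naive alternative of covering the $\binom{d}{2}$ pairwise halfspace classes independently (each of VC dimension $p+1$) loses a spurious factor of $d$ and would only deliver $\sqrt{d^{2}p\log d}$, so the delicate point is to keep the joint parameterization intact throughout the argument.
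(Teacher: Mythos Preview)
Your proposal is correct and follows essentially the same route the paper sketches: bound the $\epsilon$-Hamming covering number of $\Pilin$ via the graph dimension of the multiclass linear class, then integrate. The paper cites \cite{daniely2012multiclass} for the $O(dp\log d)$ graph-dimension bound, whereas you derive it directly via the hyperplane-arrangement (Warren) cell count on the $\binom{d}{2}n$ pairwise-comparison forms in $d(p+1)$ parameters; your subsequent use of Haussler's bound to pass to an $n$-free covering estimate and then integrate is exactly the mechanism the paper has in mind when it says it ``upper bound[s] the $\epsilon$-Hamming covering number \dots\ in terms of \dots\ the graph dimension.''
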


The proof of Lemma \ref{lemma:linear-class-complexity} is achieved by upper
bounding $\epsilon $-Hamming covering number $N_{H}^{\left( n\right)
}\left(\epsilon ,\Pi_{\mathrm{Lin}} \right)$ in terms of  the \emph{%
	graph dimension} in Lemma \ref{lemma:graph}, then by deploying an upper bound of graph dimension for the linear policy class
provided in \cite{daniely2012multiclass}.
\subsection{Experiment Results}
\label{sec:numerical_full}
In this section, we present two simple examples
with an explicitly computable optimal linear DRO policy. We illustrate the behavior of distributionally robust policy learning  in Section \ref{sec:numerical_small} and we demonstrate the effectiveness of the distributionally robust policy in Section \ref{sec:numerical_nonlinear}.
\subsubsection{A Linear Boundary Example}

\label{sec:numerical_small}

We consider $\mathcal{X} =\{x = (x(1),\ldots,x(p))\in \mathbf{R}^p
:\sum_{i=1}^p x(i)^2 \leq1 \}$ to be a $p$-dimensional closed unit ball, and
the action set $\mathcal{A}=\{1,\ldots,d\}$. We assume that $Y(i)$'s are
mutually independent conditional on $X$ with conditional distribution
\begin{equation*}
	Y(i)|X \sim \mathcal{N}(\beta_i^\top X,\sigma_i^2),\text{ for }i=1,\ldots,d.
\end{equation*}
for vectors $\{\beta_1,\ldots,\beta_d\} \subset \mathbf{R}^p$ and $%
\{\sigma_1^2,\ldots,\sigma_d^2\} \subset \mathbf{R}_{+}$. In this case, by
directly computing the moment generating functions and applying Proposition %
\ref{thm:pi_policy}, we have
\begin{equation*}
	\overline{\pi}^*_\mathrm{\mathrm{DRO}}(x) \in \argmax_{i\in \{
		1,\ldots,d\}}\left\{\beta_i^\top x-\frac{\sigma^2_i}{2\alpha^*(\pi^*_{%
			\mathrm{DRO}})}\right\}.
\end{equation*}
We consider the linear policy class $\Pi_{\mathrm{Lin}}$. Apparently, the
DRO Bayes policy $\overline{\pi}^*_\mathrm{\mathrm{DRO}}(x)$ is in the class
$\Pi_{\mathrm{Lin}}$, thus it is also the optimal linear DRO policy, i.e., $%
\overline{\pi}_{\mathrm{DRO}}^* \in \argmax_{\pi\in\Pi_{\mathrm{Lin}}}
Q_{DRO}(\pi)$. Consequently, we can check the efficacy of the
distributionally robust policy learning algorithm by comparing $\hat{\pi}_{%
	\mathrm{DRO}}$ against $\overline{\pi}_{\mathrm{DRO}}^*$.

Now we describe the parameter in the experiment. We choose $p = 5$ and $d =
3 $. To facilitate visualization of the decision boundary, we set all the
entries of $\beta_i$ to be $0$ except for the first two dimensions. Specifically,
we choose
\begin{align*}
	\beta_{1} = (1,0,0,0,0),& & \beta_{2} = (-1/2,\sqrt{3}/2,0,0,0),& &
	\beta_{3} = (-1/2,-\sqrt{3}/2,0,0,0).
\end{align*}
and $\sigma_{1} = 0.2, \sigma_{2} = 0.5, \sigma_{3} = 0.8.$ We define the
Bayes policy $\overline{\pi}^{\ast}$ as the policy that maximizes $\mathbf{E}%
_{\mathbf{P}_0}[Y(\pi(X))]$ within the class of all measurable policies.
Under this setting, $\overline{\pi}^{\ast}(x)\in \argmax_{i =
	1,2,3}\{\beta_i^{\top}x\} $. The feature space $\mathcal{X}$ is
partitioned into three regions based on $\overline{\pi}^{\ast}$: for $i =
1,2,3$, we say $x\in\mathcal{X}$ belongs to Region $i$ if $\overline{\pi}%
^{\ast}(x) = i$. Given $X$, the action $A$ is drawn according to the
underlying data collection policy $\pi_0$, which is described in Table \ref%
{tab:policy-prob}.
\begin{table}[!ht]
	\centering
	\begin{tabular}{|c|c|c|c|}
		\hline
		& Region 1 & Region 2 & Region 3 \\ \hline
		Action 1 & 0.50 & 0.25 & 0.25 \\ \hline
		Action 2 & 0.25 & 0.50 & 0.25 \\ \hline
		Action 3 & 0.25 & 0.25 & 0.50 \\ \hline
	\end{tabular}%
	\caption{The probabilities of selecting an action based on $\protect\pi_0$
		in the linear example.}
	\label{tab:policy-prob}
\end{table}

We generate $\{X_i, A_i, Y_i\}_{i = 1}^n$ according to the procedure
described above as training dataset, from which we learn the non-robust
linear policy $\hat{\pi}_{\mathrm{Lin}}$ and the distributionally robust linear
policy $\hat{\pi}_{\mathrm{DRO}}$. Figure \ref{fig:simple-data} presents the
decision boundary of four different policies: (a) $\overline{\pi}^{\ast}$;
(b) $\hat{\pi}_{\mathrm{Lin}}$; (c) $\overline{\pi}^{\ast}_{\mathrm{DRO}}$;
(d) $\hat{\pi}_{\mathrm{DRO}}$, where $n = 5000$ and $\delta = 0.2$. One can
quickly remark that the decision boundary of $\hat{\pi}_{\mathrm{Lin}}$
resembles $\overline{\pi}^{\ast}$; and the decision boundary of $\hat{\pi}_{%
	\mathrm{DRO}}$ resembles $\overline{\pi}^{\ast}_{\mathrm{DRO}}$, which
demonstrates that $\hat{\pi}_{\mathrm{Lin}}$ is the (nearly) optimal non-DRO
policy and $\hat{\pi}_{\mathrm{DRO}}$ is the (nearly) optimal DRO policy.

This distinction between $\overline{\pi}^{\ast}$ and $\overline{\pi}^{\ast}_{%
	\mathrm{DRO}}$ is also apparent in Figure \ref{fig:simple-data}: $\overline{%
	\pi}^{\ast}_{\mathrm{DRO}}$ is less likely to choose Action 3, but more
likely to choose Action 1. In other words, a distributionally robust policy prefers
action with smaller variance.
We remark that this finding is
consistent with  \cite{duchi2019variance} and \cite{duchi2016statistics}
as they find the DRO problem with KL-divergence is a good approximation to
the variance-regularized quantity when $\delta\rightarrow 0$.

\begin{figure}[t!]
	\begin{center}
		\includegraphics[width = \textwidth]{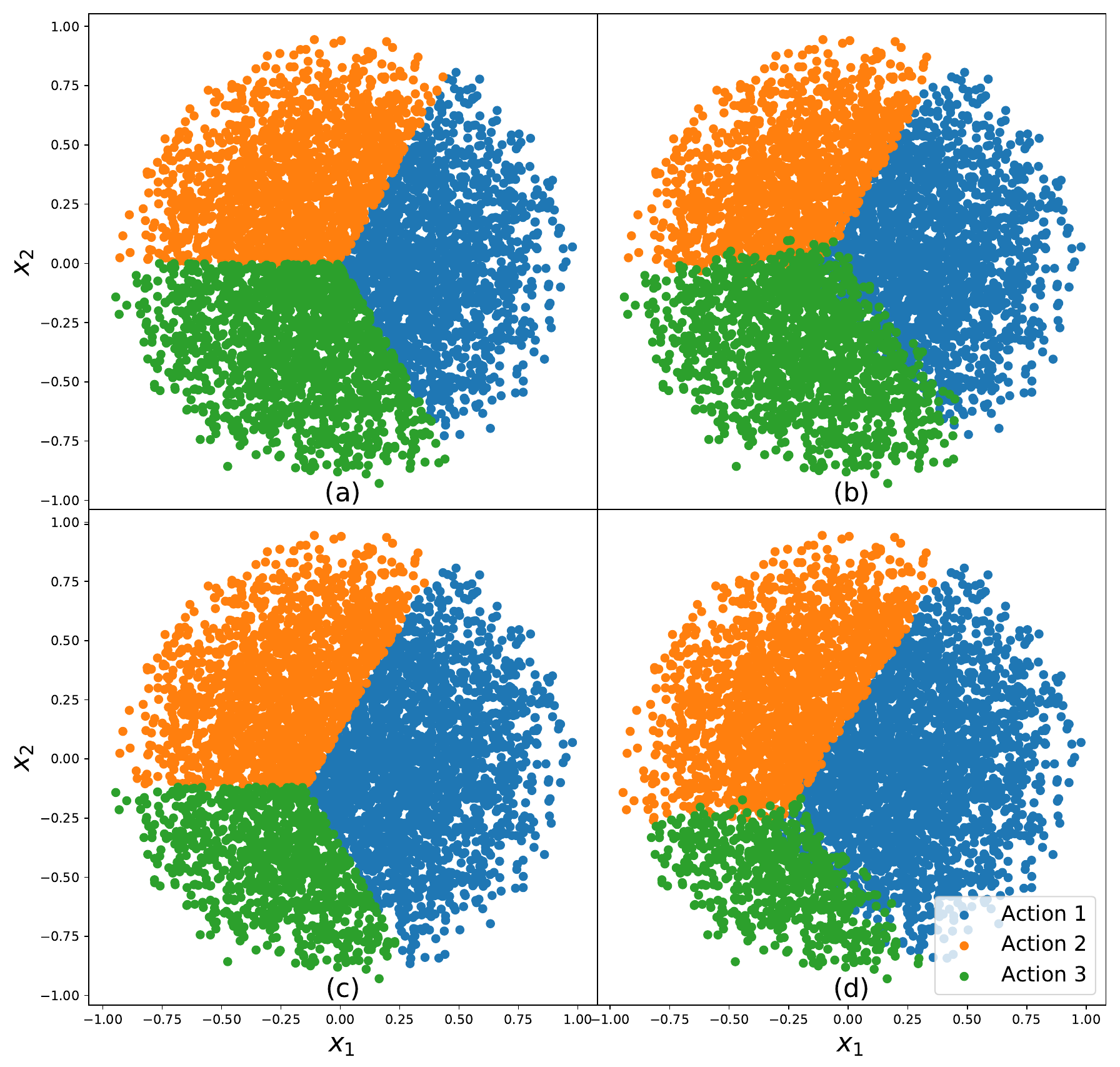}
	\end{center}
	\caption{Comparison of decision boundaries for different policies in the
		linear example: (a) Bayes policy $\overline{\protect\pi}^{\ast}$; (b) linear
		policy $\hat{\protect\pi}_{\mathrm{Lin}}$; (c) Bayes distributionally robust
		policy $\overline{\protect\pi}_{\mathrm{DRO}}^*$; (d) distributionally
		robust linear policy $\hat{\protect\pi}_{\mathrm{DRO}}$. We visualize the
		actions selected by different policies against the value of $(X(1),X(2))$.
		Training set size $n = 5000$; size of distributional uncertainty set $%
		\protect\delta = 0.2$.}

	\label{fig:simple-data}
\end{figure}

\subsubsection{A Non-linear Boundary Example}

\label{sec:numerical_nonlinear}

In this section, we compare the performance of different estimators in a
simulation environment where the Bayes decision boundaries are nonlinear.

We consider $\mathcal{X} = [-1,1]^5$ to be a $5$-dimensional cube, and the
action set to be $\mathcal{A}=\{1,2,3\}$. We assume that $Y(i)$'s are mutually
independent conditional on $X$ with conditional distribution
\begin{equation*}
	Y(i)|X \sim \mathcal{N}(\mu_i(X), \sigma_i^2),\text{ for }i=1,2,3.
\end{equation*}
where $\mu_i: \mathcal{X}\rightarrow \mathcal{A}$ is a measurable function
and $\sigma_i \in \mathbf{R}_{+}$ for $i = 1,2,3$. In this setting, we are
still able to analytically compute the Bayes policy $\overline{\pi}%
^{\ast}(x) \in \argmax_{i = 1,2,3} \{\mu_i(x)\} $ and the DRO Bayes $%
\overline{\pi}^{\ast}_{\mathrm{DRO}}(x) \in \argmax_{i = 1,2,3}
\left\{\mu_i(x)-\frac{\sigma^2_i}{2\alpha^*(\pi^*_{\mathrm{DRO}})}\right\}$.

In this section, the conditional mean $\mu_i(x)$ and conditional variance $%
\sigma_i$ are chosen as
\begin{align*}
	\begin{array}{lr}
		\mu_1(x) = 0.2 x(1), & \hspace{1.5in} \sigma_1 = 0.8, \\
		\mu_2(x) = 1 - \sqrt{ (x(1)+0.5)^2+(x(2) - 1)^2 }, & \hspace{1.5in} \sigma_2
		= 0.2, \\
		\mu_3(x) = 1 - \sqrt{ (x(1)+0.5)^2+(x(2) + 1)^2 }), & \hspace{1.5in}\sigma_3
		= 0.4. \\
		&
	\end{array}%
\end{align*}
Given $X$, the action $A$ is drawn according to the underlying data
collection policy $\pi_0$ described in Table \ref{tab:policy-prob-linear}.
\begin{table}[!ht]
	\centering
	\begin{tabular}{|c|c|c|c|}
		\hline
		& Region 1 & Region 2 & Region 3 \\ \hline
		Action 1 & 0.50 & 0.25 & 0.25 \\ \hline
		Action 2 & 0.30 & 0.40 & 0.30 \\ \hline
		Action 3 & 0.30 & 0.30 & 0.40 \\ \hline
	\end{tabular}%
	\caption{The probabilities of selecting an action based on $\protect\pi_0$
		in nonlinear example.}
	\label{tab:policy-prob-linear}
\end{table}

Now we generate the training set $\{X_i, A_i, Y_i\}_{i = 1}^n$ and learn the
non-robust linear policy $\hat{\pi}_{\mathrm{Lin}}$ and distributionally
robust linear policy $\hat{\pi}_{\mathrm{DRO}}$ in linear policy class $\Pi_{%
	\mathrm{Lin}}$, for $n = 5000$ and $\delta = 0.2$. Figure \ref%
{fig:complex-data} presents the decision boundary of four different
policies: (a) $\overline{\pi}^{\ast}$; (b) $\hat{\pi}_{\mathrm{Lin}}$; (c) $%
\overline{\pi}^{\ast}_{\mathrm{DRO}}$; (d) $\hat{\pi}_{\mathrm{DRO}}$. As $%
\overline{\pi}^{\ast}$ and $\overline{\pi}^{\ast}_{\mathrm{DRO}}$ have
nonlinear decision boundaries, any linear policy is incapable of accurate
recovery of Bayes policy. However, we quickly notice that the boundary
produced by $\hat{\pi}_{\mathrm{Lin}}$ and $\hat{\pi}_{\mathrm{DRO}}$ are
reasonable linear approximation of $\overline{\pi}^{\ast}$ and $\overline{\pi%
}^{\ast}_{\mathrm{DRO}}$, respectively. Especially noteworthy is the
robust policy prefers action with small variance (Action 2), which is
consistent with our finding in Section \ref{sec:numerical_linear_class}.

\begin{figure}[!ht]
	\begin{center}
		\includegraphics[width = \textwidth]{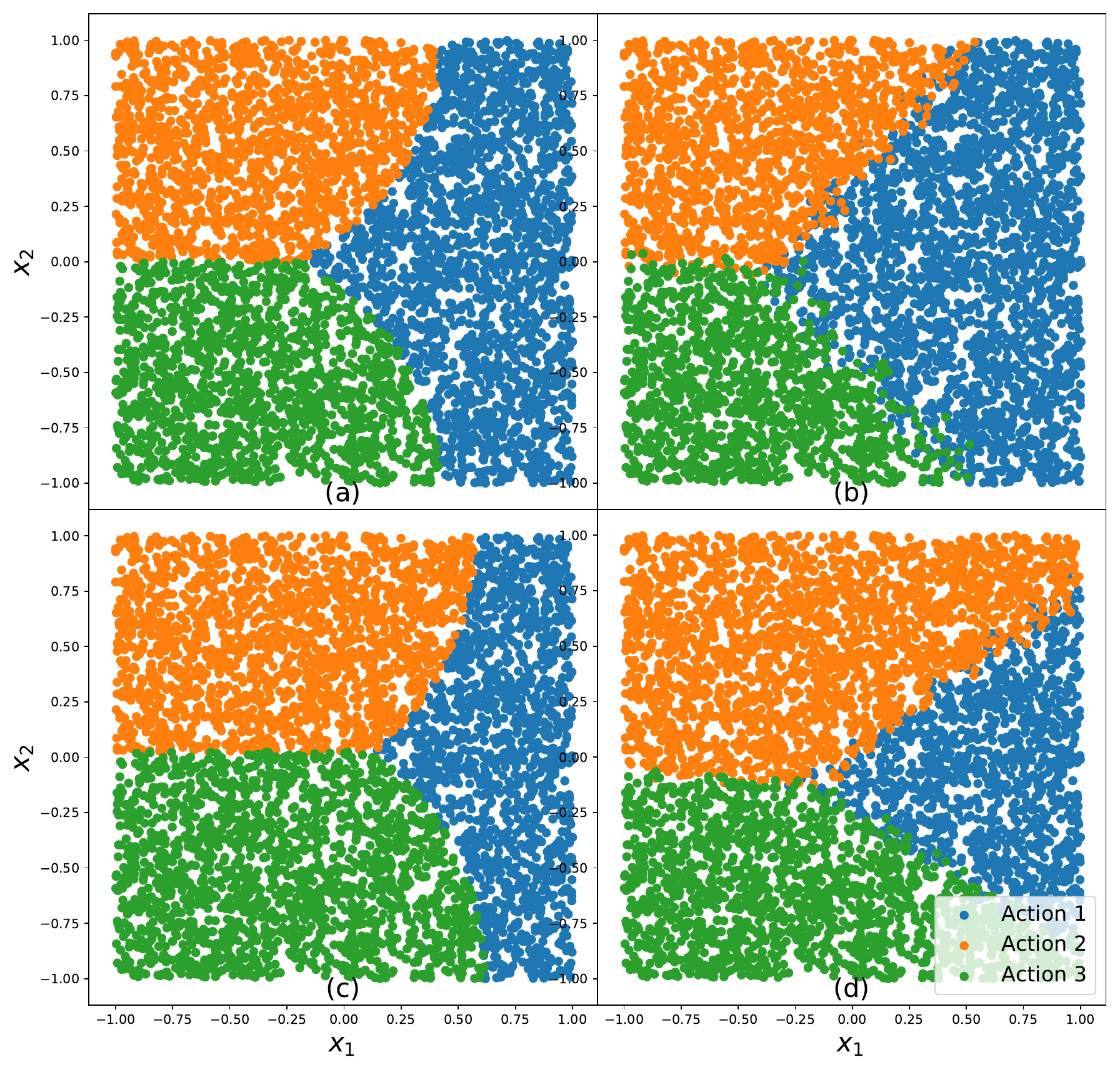}
	\end{center}
	\caption{Comparison of decision boundaries for different policies in
		nonlinear example: (a) optimal policy under population distribution $\mathbf{%
			P}_0$; (b) optimal linear policy $\hat{\protect\pi}_{\mathrm{Lin}}$ learned
		from data; (c) Bayes distributionally robust policy $\overline{\protect\pi}_{%
			\mathrm{DRO}}^*$; (d) distributionally robust linear policy $\hat{\protect\pi%
		}_{\mathrm{DRO}}$. We visualize the actions selected by different policies
		against the value of $(X(1),X(2))$. Training size is 5000; size of
		distributional uncertainty set $\protect\delta = 0.2$.}
	\label{fig:complex-data}
\end{figure}

Now we introduce two evaluation metrics in order to quantitatively
characterize the adversarial performance for different policies.

\begin{enumerate}
	\item We generate a test set with $n^{\prime }=2500$ i.i.d. data points
	sampled from $\P _0$ and evaluate the worst case performance of each policy
	using $\hat{Q}_{\mathrm{DRO}}$ with a radius $\delta^\mathrm{test}$. Note
	that $\delta^\mathrm{test}$ may be different from $\delta$ in the training
	procedure. The results are reported in the first row of Tables \ref%
	{tab:policy_learning} and \ref{tab:new}.
	
	\item We first generate $M=100$ independent test sets, where each test set
	consists of $n^{\prime }=2500$ i.i.d. data points sampled from $\P _0$. We
	denote them by $\left\{\left\{\left(X^{(j)}_i, Y^{(j)}_i(a^1), \dots,
	Y^{(j)}_i(a^d)\right)\right\}_{i=1}^{n^{\prime }}\right\}_{j=1}^{M}$. Then,
	we randomly sample a new dataset around each dataset, i.e., $\left(\tilde{X}%
	^{(j)}_i, \tilde{Y}^{(j)}_i(a^1), \dots, \tilde{Y}^{(j)}_i(a^d)\right)$ is
	sampled on the KL-sphere centered at $\left(X^{(j)}_i, Y^{(j)}_i(a^1),
	\dots, Y^{(j)}_i(a^d)\right)$ with a radius $\delta^{\mathrm{test}}$. Then,
	we evaluate each policy using $\hat{Q}_{\mathrm{min}}$, defined by
	\begin{equation*}
		\hat{Q}_{\mathrm{min}}(\pi) \triangleq\min_{1 \leq j \leq M}\left\{ \frac{1}{%
			n^{\prime }} \sum_{i = 1}^{n^{\prime }} \tilde{Y}^{(j)}_i\left(\pi\left({%
			\tilde{X}_i^{(j)}}\right)\right)\right\}.
	\end{equation*}
	The results are reported in the second row in Tables \ref%
	{tab:policy_learning} and \ref{tab:new}.
\end{enumerate}

We compare the robust performance of $\hat{\pi}_{\mathrm{Lin}}$ and $\hat{\pi}%
_{\mathrm{DRO}}$ and the POEM policy $\hat{\pi}_{\mathrm{POEM}}$ introduced
in \cite{swaminathan2015batch}. The regularization parameter of the POEM
estimator is chosen from $\{0.05,0.1,0.2,0.5,1\}$, and we find the results
are insensitive to the regularization parameter. We fix the uncertainty
radius $\delta=0.2$ used in the training procedure and size of test set $%
n^{\prime }= 2500$. In Table \ref{tab:policy_learning}, we let the training
set size range from $500$ to $2500$, and we fix $\delta^{\mathrm{test}%
}=\delta=0.2$, while in Table \ref{tab:new}, we fix the training set size to
be $n=2500$, and we let the magnitude of ``environment change" $\delta^{%
	\mathrm{test}}$ range from $0.02$ to $0.4$. We denote $\hat{\pi}_{\mathrm{DRO}}^{0.2}$ to be the DRO policy with $\delta = 0.2$. Tables \ref{tab:policy_learning}
and \ref{tab:new} report the mean and the standard error of the mean of $%
\hat{Q}_{\mathrm{DRO}}$ and $\hat{Q}_{\mathrm{min}}$ computed using $T=1000$
i.i.d. experiments, where an independent training set and an independent
test set are generated in each experiment. {Figure \ref{fig:difference_loss} visualizes the relative differences between   $\hat{\pi}_{\mathrm{Lin}} / \hat{\pi}_{\mathrm{POEM}}$ and $\hat{\pi}^{0.2}_{\mathrm{DRO}}$ in distributional shift environments.
}

\begin{table}[!ht]
	\centering
	\resizebox{\textwidth}{!}{
		\begin{tabular}{l|lccccc}
			\toprule
			\multicolumn{2}{c}{} & $n = 500$ & $n = 1000$ & $n = 1500$ & $n = 2000$ & $n = 2500$ \\ \midrule
			\multirow{3}{*}{$\hat{Q}_{\mathrm{DRO}}$} & $\hat{\pi}_{\mathrm{Lin}}$
			& $0.0852\pm 0.0013$ & $0.1031\pm 0.0008$ & $0.1093\pm 0.0005$ & $0.1120\pm 0.0005$ & $0.1135\pm 0.0004$ \\
			& $\hat{\pi}_{\mathrm{POEM}}$
			& $0.0621\pm 0.0014$ & $0.0858\pm 0.0009$ & $0.0972\pm 0.0007$ & $0.1013\pm 0.0006$ & $0.1057\pm 0.0005$ \\
			& $\hat{\pi}^{0.2}_{\mathrm{DRO}}$
			& $0.0998\pm 0.0011$ & $0.1120\pm 0.0007$ & $0.1152\pm 0.0005$ & $0.1166\pm 0.0004$ & $0.1170\pm 0.0004$ \\
			\midrule
			\multirow{3}{*}{$\hat{Q}_{\mathrm{min}}$} & $\hat{\pi}_{\mathrm{Lin}}$
			& $0.2183\pm 0.0011$ & $0.2347\pm 0.0007$ & $0.2398\pm 0.0005$ & $0.2426\pm 0.0005$ & $0.2437\pm 0.0005$ \\
			& $\hat{\pi}_{\mathrm{POEM}}$
			& $0.2030\pm 0.0011$ & $0.2230\pm 0.0007$ & $0.2311\pm 0.0006$ & $0.2344\pm 0.0006$ & $0.2378\pm 0.0005$ \\
			& $\hat{\pi}^{0.2}_{\mathrm{DRO}}$
			& $0.2249\pm 0.0009$ & $0.2384\pm 0.0006$ & $0.2428\pm 0.0005$ & $0.2439\pm 0.0005$ & $0.2460\pm 0.0005$ \\
			\bottomrule
		\end{tabular}
	}
	\caption{Comparison of robust performance for different training sizes $n$
		when $\protect\delta=\protect\delta^{\mathrm{test}} =0.2$.}
	\label{tab:policy_learning}
\end{table}
\begin{table}[tbp]
	\centering
	\resizebox{\textwidth}{!}{
		\begin{tabular}{l|lcccccc|}
			\toprule
			\multicolumn{2}{c}{}& $\delta^{\rm test} = 0.02$& $\delta^{\rm test} = 0.06$& $\delta^{\rm test} = 0.10$& $\delta^{\rm test} = 0.20$& $\delta^{\rm test} = 0.30$& $\delta^{\rm test} = 0.40$\\ \hline
			\multirow{3}{*}{$\hat{Q}_{\mathrm{DRO}}$} & $\hat{\pi}_{\mathrm{Lin}}$
			& $0.2141\pm 0.0003$ & $0.1783\pm 0.0003$ & $0.1546\pm 0.0004$ & $0.1132\pm 0.0004$ & $0.0840\pm 0.0005$ & $0.0601\pm 0.0005$ \\
			& $\hat{\pi}_{\mathrm{POEM}}$
			& $0.2097\pm 0.0003$ & $0.1734\pm 0.0004$ & $0.1497\pm 0.0004$ & $0.1087\pm 0.0005$ & $0.0787\pm 0.0005$ & $0.0543\pm 0.0005$ \\
			& $\hat{\pi}^{0.2}_{\mathrm{DRO}}$
			& $0.2164\pm 0.0003$ & $0.1805\pm 0.0003$ & $0.1574\pm 0.0003$ & $0.1170\pm 0.0004$ & $0.0882\pm 0.0004$ & $0.0646\pm 0.0005$ \\\midrule
			\multirow{3}{*}{$\hat{Q}_{\mathrm{min}}$} & $\hat{\pi}_{\mathrm{Lin}}$
			& $0.2602\pm 0.0005$ & $0.2545\pm 0.0005$ & $0.2516\pm 0.0005$ & $0.2443\pm 0.0005$ & $0.2378\pm 0.0005$ & $0.2305\pm 0.0005$ \\
			& $\hat{\pi}_{\mathrm{POEM}}$
			& $0.2556\pm 0.0005$ & $0.2511\pm 0.0005$ & $0.2472\pm 0.0005$ & $0.2400\pm 0.0005$ & $0.2334\pm 0.0005$ & $0.2263\pm 0.0005$ \\
			& $\hat{\pi}^{0.2}_{\mathrm{DRO}}$
			& $0.2613\pm 0.0004$ & $0.2563\pm 0.0004$ & $0.2532\pm 0.0004$ & $0.2461\pm 0.0005$ & $0.2397\pm 0.0005$ & $0.2329\pm 0.0005$ \\\bottomrule
		\end{tabular}
	}
	\caption{Comparison of robust performance for different test environments $%
		\protect\delta_{\mathrm{test}}$ when $\protect\delta= 0.2$ and $n=2500$.}
	\label{tab:new}
\end{table}

\begin{figure}[!ht]
	\centering
	\begin{subfigure}[b]{0.45\textwidth}
		\centering
		\includegraphics[width = \textwidth]{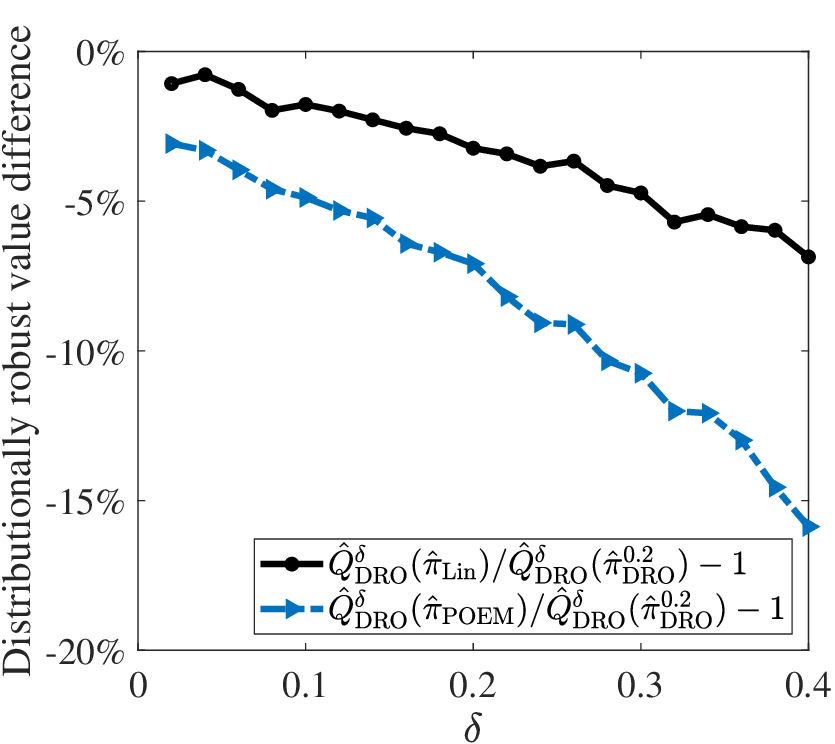}
		\caption{$\hat{Q}_{\mathrm{DRO}}$}
	\end{subfigure}
	\hfill
	\begin{subfigure}[b]{0.45\textwidth}
		\centering
		\includegraphics[width = \textwidth]{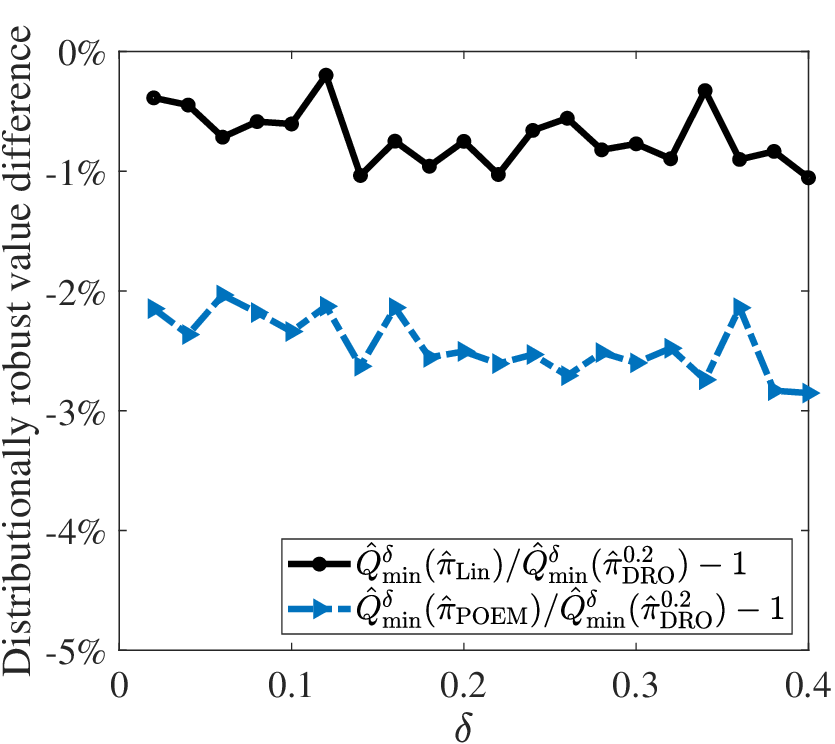}
		\caption{$\hat{Q}_{\mathrm{min}}$}
		\label{subfig:min}
	\end{subfigure}
	\caption{Difference of robust performance for different test environments $%
		\protect\delta_{\mathrm{test}}$ when $\protect\delta= 0.2$ and $n=2500$.}
	\label{fig:difference_loss}
\end{figure}

{We can easily observe from Table \ref{tab:policy_learning} that $\hat{\pi}_{\mathrm{DRO}}^{0.2}$ achieves the best robust
	performance among all three policies and the superiority is significant in the most of cases, which implies $%
	\hat{\pi}_{\mathrm{DRO}}$ is more resilient to adversarial perturbations.} We
also highlight that $\hat{\pi}_{\mathrm{DRO}}$ has smaller standard
deviation ($\sqrt{T}\times$standard error) in Table \ref{tab:policy_learning}
and the superiority of $\hat{\pi}_{\mathrm{DRO}}$ is more manifest under
smaller training set, indicating $\hat{\pi}_{\mathrm{DRO}}$ is a more stable
estimator compared with $\hat{\pi}_{\mathrm{Lin}}$ and $\hat{\pi}_{\mathrm{%
		POEM}}$. In Table \ref{tab:new} and Figure \ref{fig:difference_loss}, we find that $\hat{\pi}_{\mathrm{DRO}}$ significantly
outperforms $\hat{\pi}_{\mathrm{Lin}}$ and $\hat{\pi}_{\mathrm{POEM}}$ for a
wide range of $\delta^{\mathrm{test}}$ even if the model is
misspecified in the sense that $\delta^{\mathrm{test}}\neq\delta$, and the
results of small $\delta^{\mathrm{test}}$ indicate that our method may potentially alleviate overfitting. These results show that our
method is insensitive to the choice of the uncertainty radius $\delta$ in
the training procedure. 

\section{Real Data Experiments: Application on a Voting Dataset}

\label{sec:real_data} In this section, we compare the empirical performance
of different estimators on a voting dataset concerned with the August 2006
primary election.\footnote{Data available in https://github.com/gsbDBI/ExperimentData/tree/master/Social} This dataset was originally collected by \cite%
{gerber2008social} to study the effect of social pressure on electoral
participation rates. Later, the dataset was employed by \cite%
{zhou2018offline} to study the empirical performance of several offline
policy learning algorithms. In this section, we  apply different policy
learning algorithms to this dataset and illustrate some interesting findings.

\subsection{Dataset Description}

For completeness, we borrow the description of the dataset from
\cite{zhou2018offline} since we use  almost the same (despite different
reward) data preprocessing procedure. We only focus on aspects that are
relevant to our current policy learning context.

The dataset contains $180002$ data points (i.e. $n = 180002$), each
corresponding to a single voter in a different household. The voters span
the entire state of Michigan. There are ten voter characteristics in the
dataset: \emph{year of birth, sex, household size, city, g2000, g2002,
	g2004, g2000, p2002, and p2004}. The first four features are self-explanatory.
The next three features are outcomes for whether a voter  voted in the
general elections in 2000, 2002 and 2004 respectively: $1$ was recorded if
the voter did vote and $0$ was recorded if the voter did not
vote. The last three features are outcomes for whether a voter  voted in the
primary in 2000, 2002 and 2004. As \cite{gerber2008social}  pointed out,
these 10 features are commonly used as covariates for predicting whether an
individual voter will vote.

There are five actions in total, as listed below:

\textbf{Nothing:} No action is performed.

\textbf{Civic:} A letter with "Do your civic duty" is mailed to the
household before the primary election.

\textbf{Monitored:} A letter with "You are being studied" is mailed to the
household before the primary election. Voters receiving this letter are
informed that whether they vote or not in this election will be observed.

\textbf{Self History:} A letter with the voter's past voting records as well
as the voting records of other voters who live in the same household is
mailed to the household before the primary election. The letter also
indicates that, once the election is over, a follow-up letter on whether the
voter voted will be sent to the household.

\textbf{Neighbors:} A letter with the voting records of this voter, the
voters living in the same household, and the voters who are neighbors of this
household is mailed to the household before the primary election. The letter
also indicates that all your neighbors will be able to see
your past voting records and that follow-up letters will be
sent so that whether this voter voted in the upcoming election will
become public knowledge among the neighbors.

In collecting this dataset, these five actions are randomly chosen
independent of everything else, with probabilities equal to $\frac{10}{18},%
\frac{2}{18},\frac{2}{18},\frac{2}{18},\frac{2}{18}$ (in the same order as
listed above). The outcome is whether a voter has voted in the 2006 primary
election, which is either 1 or 0. {It is not hard to imagine that \textbf{Neighbors} is the best
	policy for the whole population as it adds the highest social pressure for people to vote.} Therefore, instead of directly using the voting
outcome as a reward,  we define $Y_i$, the reward associated to
voter $i$, as the voting outcome minus the social cost of deploying an action to
this voter, namely,
\begin{equation*}
	Y_i(a) = \mathbf{1}\{\mbox{voter $i$ votes under action $a$}\} - c_a, \quad
	\forall a\in\mathcal{A},
\end{equation*}
where $c_a$ is the vector of cost for deploying certain actions. Here, we
set $c_a = (0.3,0.32,0.34,0.36,0.38)$ to be close to the empirical average
of each action.

\subsection{Decision Trees and Greedy Tree Search}

\label{sec:decision_tree} We  introduce the decision-tree policy
classes. We follow the convention in \cite{bertsimas2017optimal}. A
depth-$L$ tree has $L$ layers in total: branch nodes live in the first $L-1$
layers, while the leaf nodes live in the last layer. Each branch node is
specified by the variable to be split on and the threshold $b$. At a branch
node, each component of the $p$-dimensional feature vector $x$ can be chosen
as a split variable. The set of all depth-$L$ trees is denoted by $\Pi_L$.
Then, Lemma 4 in \cite{zhou2018offline} shows that
\begin{equation*}
	\kappa^{(n)}(\Pi_L)\leq \sqrt{(2^L - 1) \log p + 2^L \log d} + \frac{4}{3}
	L^{1/4} \sqrt{2^L-1}.
\end{equation*}
In the voting dataset experiment, we concentrate on the policy class $%
\Pi_{L} $. 
%

The algorithm for decision tree learning needs to be computationally
efficient, since algorithm will be iteratively executed in Line 7 of
Algorithm \ref{alg:DRO_policy_learning} to compute $\argmin
_{\pi\in \Pi_{L}}\hat{W}_n(\pi,\alpha)$. Since finding an optimal
classification tree is generally intractable, see \cite{bertsimas2017optimal}%
, here we adopt an heuristic algorithm called \emph{greedy tree search}.
This procedure  can be inductively defined. First, to
learn a depth-2 tree, greedy tree search will brute force search all the
possible spliting choices of the branch node, and all the possible
actions of the leaf nodes. Suppose that the learning procedure for depth-$%
(L-1)$ tree has been defined. To learn a depth-$L$ tree, we first learn a
depth-2 tree with the optimal branching node, which partitions all the
training data into two disjointed groups associated with two leaf nodes. Then
each leaf node is replaced by the depth-$(L-1)$ tree trained using the data
in the associated group.

\subsection{Training and Evaluation Procedure}

Consider a hypothetical experiment of designing distributionally robust
policy. In the experiment, suppose that the training data is collected from
some cities, and our goal is to learn a robust policy to be deployed to the
other cities. Dividing the training and test population based on the city
voters live creates both a covariate shift and a concept drift between training
set and test set. For example, considering the covariate shift first, the
distribution of \emph{year of birth} is generally different across different
cities. As for the concept drift, it is conceivable that different groups of
population may have different response to the same action, depending on some
latent factors that are not reported in the dataset, such as occupation and
education. The distribution of such latent factors also varies among
different cities, which results in concept drift. Consequently, we use the
feature \emph{city} to divide the training set and the test set, in order to
test policy performance under ``environmental change''.

The voting data set contains 101 distinct cities. To comprehensively
evaluate the out-of-sample policy performance, we adapt 
leave-one-out cross-validation to generate 101 pairs of the training and
test set, each test set contains exactly one district city and the
corresponding training set is the complement set of the test set. On each
pair of the training  and  test set, we learn a non-robust depth-$3$ decision
tree policy $\hat{\pi}_{3}$ and distributionally robust decision tree
policies $\hat{\pi}_{\mathrm{DRO}}$ in $\Pi_{3}$ for $\delta\in%
\{0.1,0.2,0.3,0.4\}$, then on the test set the policies are evaluated using
the unbiased IPW estimator
\begin{equation*}
	\hat{Q}_{\mathrm{IPW}}(\pi) \triangleq \frac{1}{n}\sum_{i=1}^{n} \frac{%
		\mathbf{1}\{\pi(X_i) = A_i\}}{\pi_0(A_i\mid X_i)} Y_i(A_i).
\end{equation*}
Consequently, for each policy $\pi$ we get $101$ of $\hat{Q}_{\mathrm{IPW}%
}(\pi)$ scores on $101$ different test sets.

{
	\subsection{Selection of Distributional Shift Size $\protect\delta$}
	\label{sec:select_delta}
	
	The distributional shift size $\delta$ quantifies the level of robustness of
	the distributionally robust policy learning algorithm. The empirical
	performance of the algorithm substantially depends on the selection of $%
	\delta$. On one hand, if $\delta$ is too small, the robustification effect
	is negligible and the algorithm would learn an over-aggressive policy; on
	the other hand, if $\delta$ is overly large, the policy  is over-conservative, always choosing the action subject to the smallest reward
	variation. We remark that the selection of $\delta$ is more a
	managerial decision rather than a scientific procedure. It depends to the
	decision-makers' own risk-aversion level and their own perception of the new
	environments. In this section, we provide a guide to help select $\delta$ in
	this voting dataset.
	
	A natural approach to select $\delta$ is to empirically estimate the size of
	distributional shift using the training data. From the training set, we 
	partition the data in 20\% of cities as our validation set with distribution
	denoted by $\P ^{20}$, and we use $\P ^{80}$ to denote the distribution of
	the remaining 80\% of the training set. We  estimate the $D(\P ^{20}||\P %
	^{80})$, which reasonably quantifies the size of distributional shift across
	different cities. To this end, we decompose distributional shift into two
	parts,
	\begin{equation*}
		D(\P ^{20}||\P ^{80}) = \underbrace{D(\P _X^{20}||\P _X^{80})}_{%
			\mbox{marginal distribution of $X$}} + \underbrace{\mathbf{E}_{\P _X^{20}}[D(%
			\P _Y^{20}|X||\P _Y^{80}|X)]}_{%
			\mbox{conditional distribution of $Y$ given
				$X$}},
	\end{equation*}
	where $\P _X^{i}$ denote the $X$-marginal distribution of $\P ^{i}$, and $\P %
	_Y^{i}|X$ denote the conditional distribution of $Y$ given $X$ for $\P ^{i}$%
	, for $i = 20,80$. To estimate the size of marginal distributional shift $D(%
	\P _X^{20}||\P _X^{80})$, we first apply grouping to features such as
	year of birth, in order to avoid of infinite KL-divergence. Next we focus on
	the conditional distributional shift $D(\P _Y^{20}|X||\P _Y^{80}|X)$.
	Noticing that the value of $Y(a)$ is binary for each $a$, we fit two logistic regression models separately for $\P ^{20}$
	and $\P ^{80}$ to estimate the conditional distribution of $Y(a)$ given $X$%
	. We estimate $D(\P _Y^{20}|X||\P _Y^{80}|X)$ using the fitted logistic
	regression model, then take the expectation of $X$ over $\P _X^{20}$. We
	repeat the 80\%/20\% random splitting  100 times, and compute $D(\P %
	^{20}||\P ^{80})$ using this procedure. Additional experimental details are reported in   \ref{appendix:experiments_details}.  The empirical CDF of the
	estimated $\delta$ from those $100$ experiments is reported in Figure \ref%
	{fig:delta_cdf}. It is easy to see approximately 90\% percent of $\delta$s are less
	than $0.2$.
	
	\begin{figure}[!ht]
		\centering
		\begin{subfigure}[b]{0.45\textwidth}
			\centering
			\includegraphics[width = \textwidth]{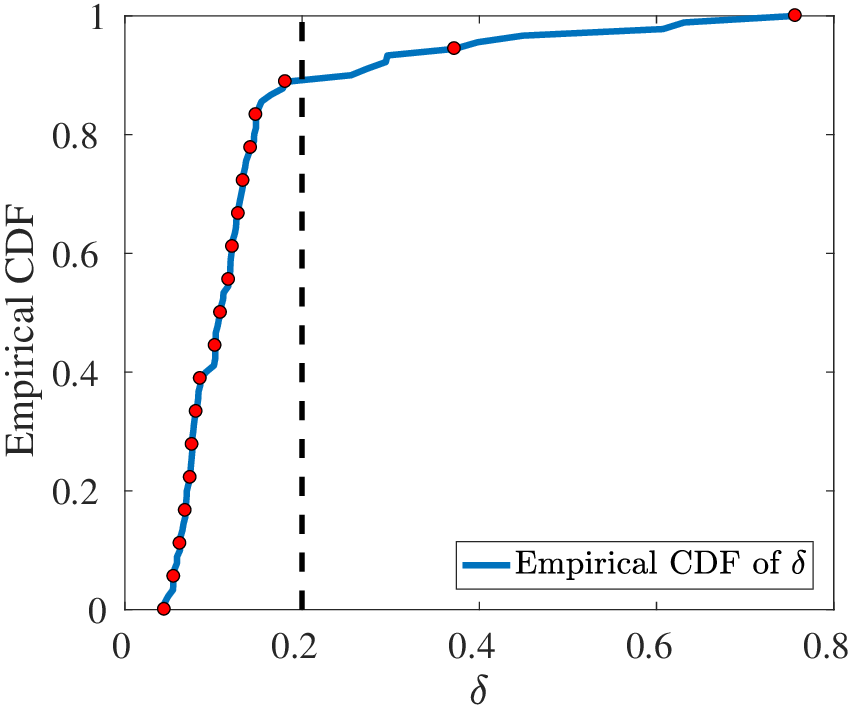}
			\caption{Empirical CDF for estimated $\delta$.}
			\label{fig:delta_cdf}
		\end{subfigure}
		\hfill
		\begin{subfigure}[b]{0.45\textwidth}
			\centering
			\includegraphics[width = \textwidth]{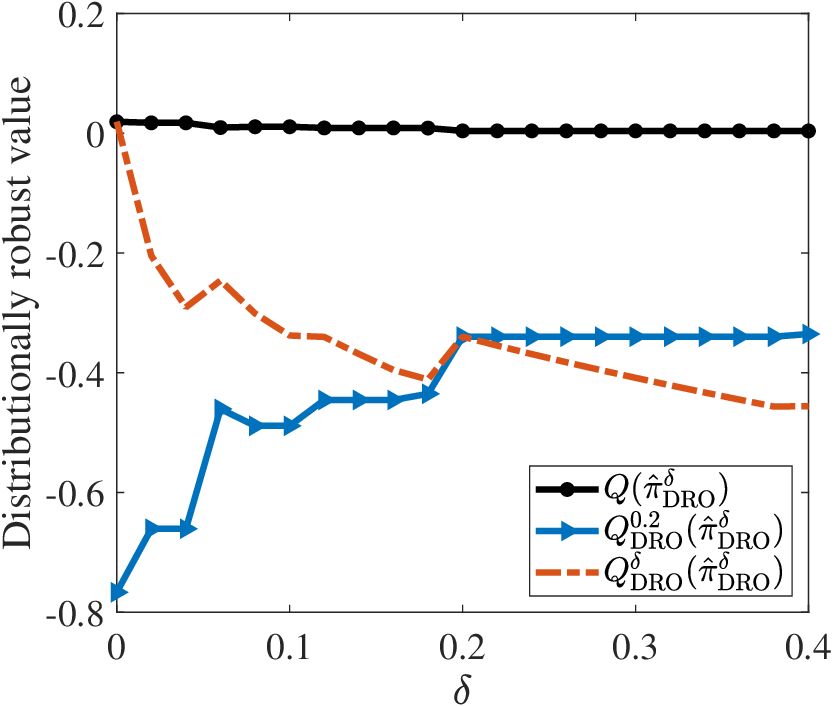}
			\caption{Sensitivity of policy on $\delta$.}
			\label{fig:delta_loss}
		\end{subfigure}
		\caption{Selection of distributional shift size $\protect\delta$.}
	\end{figure}
	
	Beside explicitly estimating the uncertainty size, we also check the
	sensitivity of our policy on $\delta$. We present the reward profile for our
	distributionally robust policy in Figure \ref{fig:delta_loss}. In the
	figure, we employ the x-axis to represent the $\delta$ used in the policy
	training process. The top black line is the non-robust pure value function,
	which appears to be almost invariant among policies with different robust level
	for $\delta \in [0,0.4]$. The blue line is the distributionally robust
	value function with $\delta$ fixed to $0.2$. We remark that the non-robust
	policy $\hat{\pi}_{\mathrm{DRO}}^{0}$ has a deficient performance in terms of
	robust value function, yet the robust value function improves as the robust
	level of the policy is increasing and becomes non-sensitive to the delta
	when $\delta$ larger than $0.2$. Finally, this red line is the
	distributionally robust value function with respect to the same $\delta$ in
	the training process. Thus, it is the actual training reward.
}
\subsection{Experimental Result and Interpretation}

We summarize some important statistics of $\hat{Q}_{\mathrm{IPW}}(\pi)$
scores in Table \ref{tab:voting}, including mean, standard deviation,
minimal value, 5th percentile, 10th percentile, and 20th percentile. All the
statistics are calculated based on the result of 101 test sets.
\begin{table}[!ht]
	\centering
	\begin{tabular}{l|ccccccc}
		\toprule
		\multicolumn{2}{l}{} & mean & std & min & 5th percentile & 10th percentile
		& 20th percentile \\ \bottomrule
		\multicolumn{2}{l}{$\hat{Q}_{\mathrm{IPW}}(\hat{\pi}_3)$} & 0.0386 & 0.0991
		& -0.2844 & -0.1104 & -0.0686 & -0.0358 \\ \midrule 
		\multirow{4}{*}{$ \hat{Q}_{\mathrm{IPW}}(\hat{\pi}_{\mathrm{DRO}})$} & $%
		\delta = 0.1$ & 0.0458 & 0.0989 & -0.2321 & -0.1007 & -0.0489 & -0.0223 \\
		& $\delta = 0.2$ & 0.0368 & 0.0895 & -0.2314 & -0.0785 & -0.0518 & -0.0217
		\\ 
		& $\delta = 0.3$ & 0.0397 & 0.0864 & -0.2313 & -0.0677 & -0.0407 & -0.0190
		\\ 
		& $\delta = 0.4$ & 0.0383 & 0.0863 & -0.2312 & -0.0677 & -0.0429 & -0.0202
		\\ \bottomrule
	\end{tabular}%
	\caption{Comparison of important statistics for voting dataset.}
	\label{tab:voting}
\end{table}

We remark that the mean value of $\hat{Q}_{%
	\mathrm{IPW}}(\hat{\pi}_{\mathrm{DRO}})$ is comparable to $\hat{%
	Q}_{\mathrm{IPW}}(\hat{\pi}_{3})$, and it is even better when an appropriate
value of $\delta$ (such as $\delta = 0.1$) is selected. One can also observe
that $\hat{Q}_{\mathrm{IPW}}(\hat{\pi}_{\mathrm{DRO}})$ has
a smaller standard deviation and a larger minimal value when comparing to $%
\hat{Q}_{\mathrm{IPW}}(\hat{\pi}_{3})$, and the difference becomes larger as
$\delta$ increases. The comparison of 5th, 10th, and 20th percentiles  also indicates that $\hat{\pi}_{\mathrm{DRO}}$ perform better 
than $\hat{\pi}_{3}$ in ``bad'' (or ``adversarial'') scenarios of
environmental change, which is exactly the desired behavior of $\hat{\pi}_%
\mathrm{DRO}$ by design.

To reinforce our observation in Table \ref{tab:voting}, we visualize and
compare the distribution of $\hat{Q}_{\mathrm{IPW}}(\hat{\pi}_{\mathrm{DRO}%
}) $ and $\hat{Q}_{\mathrm{IPW}}(\hat{\pi}_{3})$ in Figure \ref%
{fig:voting-exp-hist}, for (a) $\delta = 0.1$ and (b) $\delta = 0.4$. We
notice that the histogram of $\hat{Q}_{\mathrm{IPW}}(\hat{\pi%
}_{\mathrm{DRO}})$ is more concentrated than the histogram of $\hat{Q}_{%
	\mathrm{IPW}}(\hat{\pi}_{3})$, which supports our observation that $\hat{\pi}%
_{\mathrm{DRO}}$ is more robust.

\begin{figure}[!ht]
	\centering
	\begin{subfigure}[b]{0.45\textwidth}
		\centering
		\includegraphics[width = \textwidth]{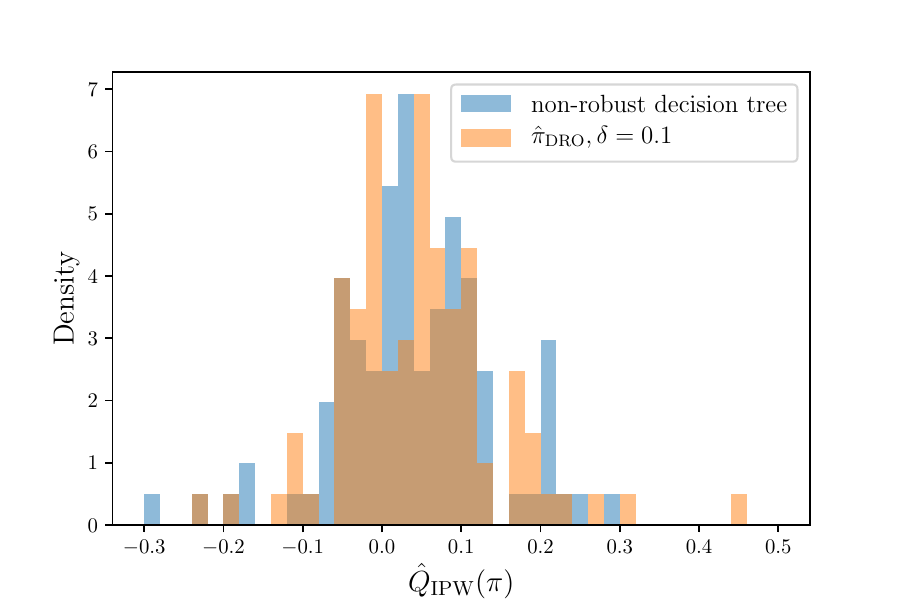}
		\caption{$\delta = 0.1$}
	\end{subfigure}
	\hfill
	\begin{subfigure}[b]{0.45\textwidth}
		\centering
		\includegraphics[width = \textwidth]{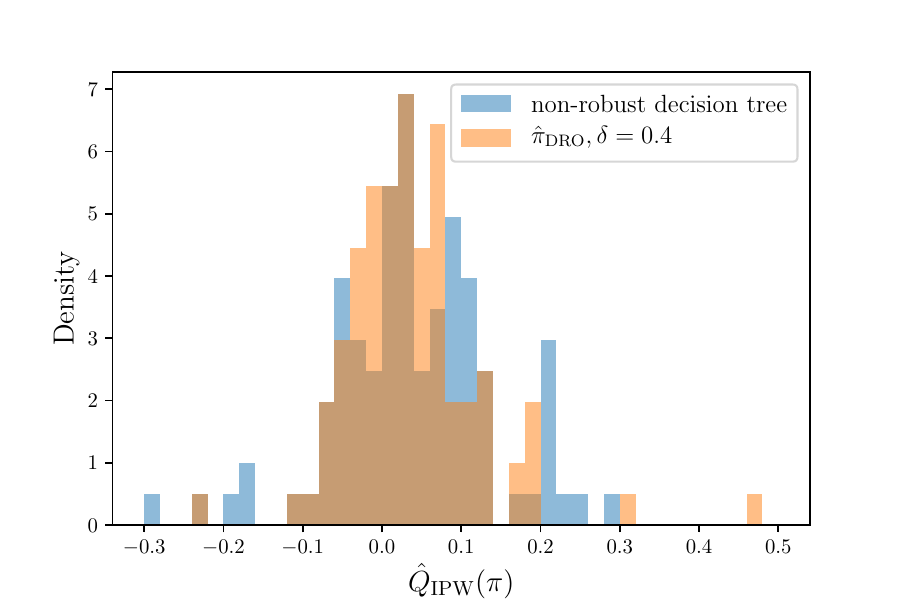}
		\caption{$\delta = 0.4$}
	\end{subfigure}
	\caption{Comparison of the distribution of $\hat{Q}_{\mathrm{IPW}}$ between
		distributionally robust decision tree against non-robust decision tree. (a) $%
		\protect\delta = 0.1$, (b) $\protect\delta = 0.4$.}
	\label{fig:voting-exp-hist}
\end{figure}

We present two instances of distributionally robust decision trees in Figure %
\ref{fig:voting-tree}: (a) is an instance of robust tree with $\delta =0.1$,
and (b) is an instance of robust tree with $\delta =0.4$. We remark that the
decision tree in (b) deploys the action \textbf{Nothing} to most of the
potential voters, because almost all the individuals in the dataset were born
after 1917 and have a household size fewer than 6. For a large value of $%
\delta $, the distributionally robust policy $\hat{\pi}_{\mathrm{DRO}}$
becomes almost degenerate, which only selects \textbf{Nothing}, the action
with a minimal reward variation.

\begin{figure}[!ht]
	\centering
	\begin{subfigure}[b]{0.45\textwidth}
		\centering
		\includegraphics[width = \textwidth]{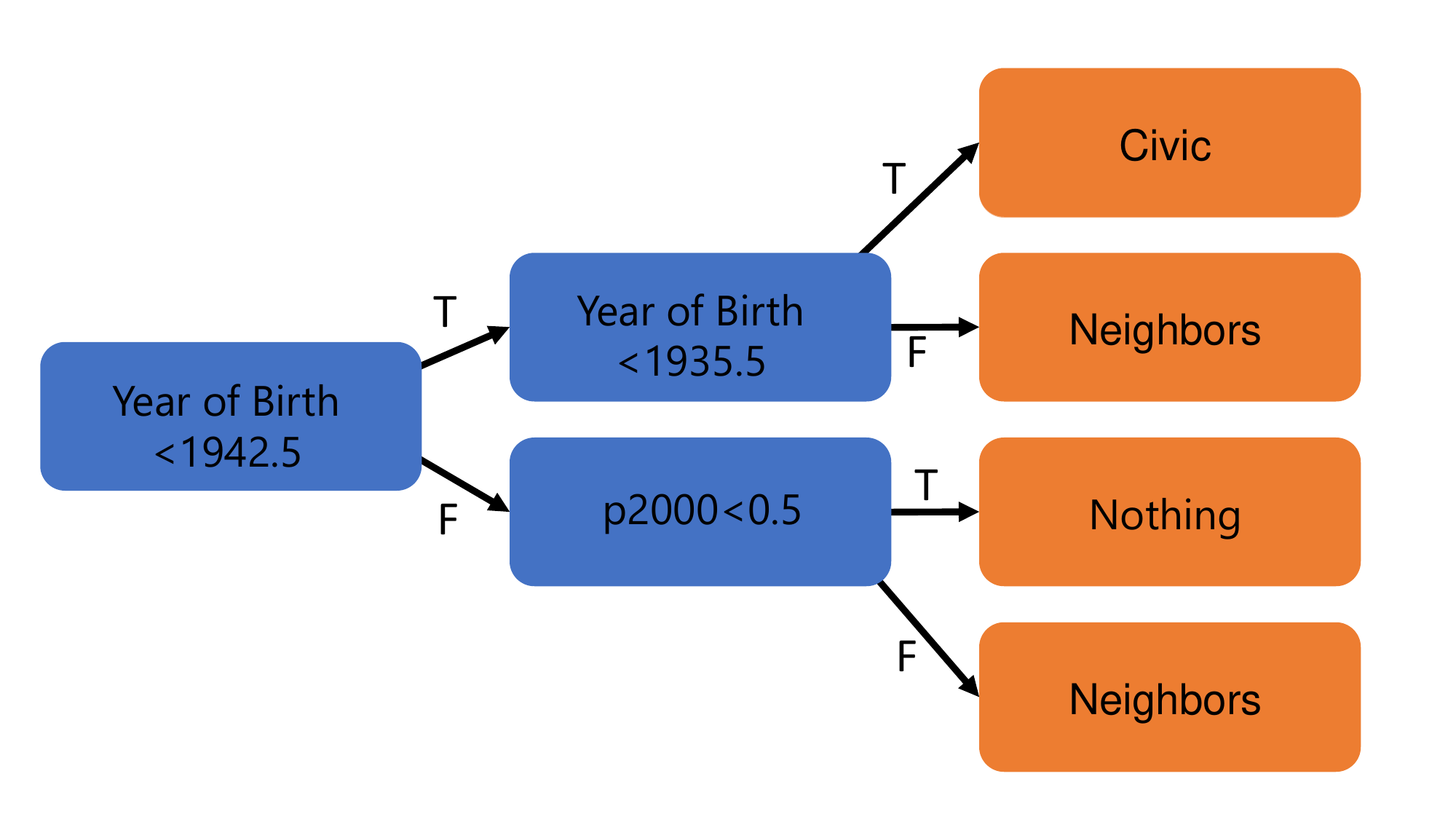}
		\caption{$\delta = 0.1$}
	\end{subfigure}
	\hfill
	\begin{subfigure}[b]{0.45\textwidth}
		\centering
		\includegraphics[width = \textwidth]{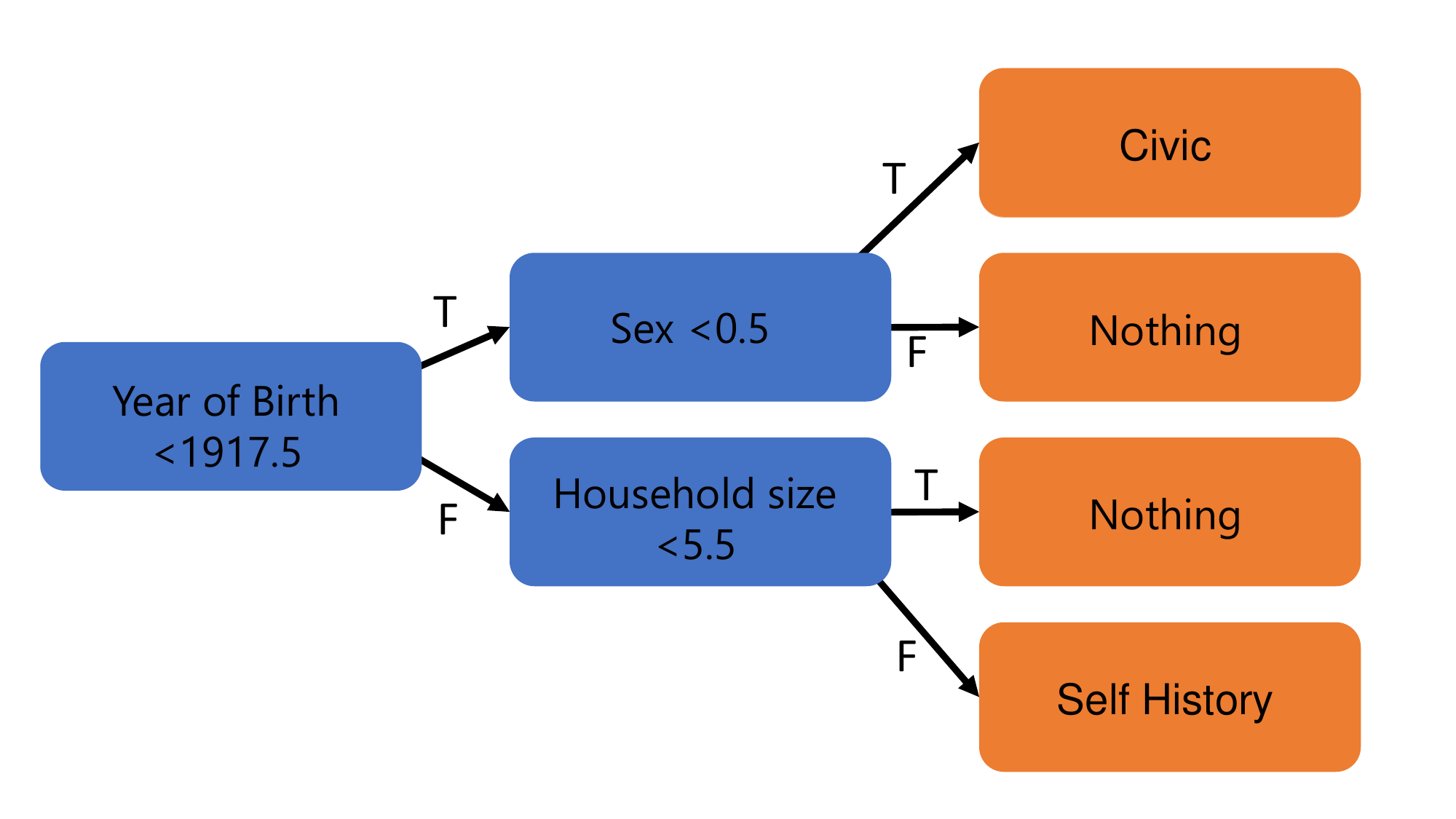}
		\caption{$\delta = 0.4$}
	\end{subfigure}
	\caption{Examples of distributionally robust decision trees when (a) $%
		\protect\delta = 0.1$, (b) $\protect\delta = 0.4$.}
	\label{fig:voting-tree}
\end{figure}

\section{Extension to $f$-divergence Uncertainty Set}

\label{sec:extension} In this section, we generalize the KL-divergence to $f$%
-divergence. Here, we define $f$-divergence between $\P $ and $\P _0 $ as
\begin{equation*}  \label{def:f-divergence}
	D_f(\P ||\P _0)\triangleq\int f\left(\frac{d\P }{d\P _0} \right)d\P _0,
\end{equation*}%
where $f:\mathbf{R} \rightarrow \mathbf{R}_{+} \cup \{+\infty\}$ is a convex
function satisfying $f(1)=0$ and $f(t)=+\infty$ for any $t<0$. Then, we define
the $f$-divergence uncertainty set as $\mathcal{U}^f_{\P _0}(\delta)
\triangleq \{\P \ll \P _0\mid D_f(\P ||\P _0)\leq \delta\}.$ Accordingly,
the distributionally robust value function is defined below.
\begin{definition}\quad
	\label{def:f-divergence_dro}
	For a given $\delta > 0$, the distributionally robust value function $Q^f_{\mathrm{\rm DRO}}:\Pi \rightarrow \reals$ is defined as: $Q^f_{\mathrm{\rm DRO}}(\pi) \triangleq\inf_{\P\in\mathcal{U}^f_{\P_0}(\delta)} \E_{\P}[Y(\pi(X))]$.
\end{definition}

We focus on Cressie-Read family of $f$-divergence, defined in \cite%
{cressie1984multinomial}. For $k\in (1,\infty ),$ function $f_{k}$ is
defined as
\begin{equation*}
	f_{k}(t) \triangleq\frac{t^{k}-kt+k-1}{k(k-1)}.
\end{equation*}%
As $k\rightarrow 1,$ $f_{k}\rightarrow f_{1}(t)=t\log t-t+1,$which becomes
KL-divergence. For the ease of notation, we use $Q_{\mathrm{\mathrm{DRO}}%
}^{k}\left( \cdot \right) $, $\mathcal{U}_{\P _0}^k (\delta)$, and $%
D_{k}\left( \cdot ||\cdot \right) $ as shorthands of $Q_{\mathrm{\mathrm{DRO}%
}}^{f}\left( \cdot \right) $, $\mathcal{U}_{\P _0}^f (\delta)$, and $%
D_{f}\left( \cdot ||\cdot \right) , $ respectively, for $k\in \lbrack
1,\infty ).$ We further define $k_{\ast }\triangleq k/(k-1),$ and $%
c_{k}(\delta )\triangleq(1+k(k-1)\delta )^{1/k}.$ Then, \cite%
{duchi2018learning} give the following duality results.

\begin{lemma}
	For any Borel measure $\P$ supported on the space $\mathcal{X}
	\times \prod_{j=1}^d \mathcal{Y}_j$ and $k\in (1,\infty ),$ we have
	\begin{equation*}
		\inf_{\mathbf{Q}\in\mathcal{U}^k_{\P}(\delta)} \E_{\mathbf{Q}}[Y(\pi(X))] =\sup_{\alpha \in \reals}\left\{
		-c_{k}\left( \delta \right) {\mathbf{E}}_{\P}\left[ \left( -Y(\pi(X))+\alpha \right) _{+}^{k_{\ast }}\right] ^{\frac{1}{k_{\ast }}%
		}+\alpha \right\} .
	\end{equation*}
	\label{lma:f_duality}
\end{lemma}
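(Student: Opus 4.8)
The plan is to reduce the problem to a scalar Lagrangian duality after first rewriting the Cressie--Read constraint as a single moment bound, and then to solve the resulting inner problem by H\"older's inequality. Write $Z \triangleq Y(\pi(X))$ and let $L \triangleq {\rm d}\mathbf{Q}/{\rm d}\P$ be the likelihood ratio, so that the primal objective is $\E_{\P}[L Z]$ and the membership $\mathbf{Q}\in\mathcal{U}^{k}_{\P}(\delta)$ becomes the three constraints $L\geq 0$, $\E_{\P}[L]=1$, and $\E_{\P}[f_{k}(L)]\leq\delta$. The first observation is that the divergence constraint collapses: since
\[
\E_{\P}[f_{k}(L)] = \frac{\E_{\P}[L^{k}] - k\,\E_{\P}[L] + k - 1}{k(k-1)} = \frac{\E_{\P}[L^{k}]-1}{k(k-1)}
\]
when $\E_{\P}[L]=1$, the bound $\E_{\P}[f_{k}(L)]\leq\delta$ is equivalent to $\E_{\P}[L^{k}]\leq 1+k(k-1)\delta = c_{k}(\delta)^{k}$, i.e. $\|L\|_{L^{k}(\P)}\leq c_{k}(\delta)$.

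Next I would dualize \emph{only} the normalization constraint $\E_{\P}[L]=1$ via a scalar multiplier $\alpha\in\reals$, keeping the convex constraints $L\geq 0$ and $\|L\|_{k}\leq c_{k}(\delta)$ inside the inner problem. This yields the penalized form
\[
\inf_{\substack{L\geq 0 \\ \|L\|_{k}\leq c_{k}(\delta)}}\ \sup_{\alpha\in\reals}\left\{\alpha + \E_{\P}\big[L(Z-\alpha)\big]\right\},
\]
where the inner supremum returns $\E_{\P}[LZ]$ when $\E_{\P}[L]=1$ and $+\infty$ otherwise, so this equals the primal. The crucial analytic step is to interchange the infimum and supremum. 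Because $k\in(1,\infty)$, the space $L^{k}(\P)$ is reflexive, so the feasible set $\{L\geq 0,\ \|L\|_{k}\leq c_{k}(\delta)\}$ is convex and weakly compact; the objective is affine in $\alpha$ and linear (hence weakly continuous) in $L$, so Sion's minimax theorem applies with the compact $L$-domain and licenses the swap to $\sup_{\alpha}\inf_{L}$.

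With the order reversed, the inner problem for fixed $\alpha$ is $\inf_{L\geq 0,\ \|L\|_{k}\leq c_{k}(\delta)}\E_{\P}[L(Z-\alpha)]$. Using $Z-\alpha\geq -(\alpha-Z)_{+}$ together with $L\geq 0$, and then H\"older's inequality with conjugate exponents $k$ and $k_{\ast}=k/(k-1)$,
\[
\E_{\P}\big[L(Z-\alpha)\big] \geq -\,\E_{\P}\big[L(\alpha-Z)_{+}\big] \geq -\,\|L\|_{k}\,\big\|(\alpha-Z)_{+}\big\|_{k_{\ast}} \geq -\,c_{k}(\delta)\,\E_{\P}\big[(\alpha-Z)_{+}^{k_{\ast}}\big]^{1/k_{\ast}}.
\]
This lower bound is attained by taking $L\propto(\alpha-Z)_{+}^{k_{\ast}-1}$ scaled to the boundary of the norm ball (and $L\equiv 0$ when $\alpha\leq\essinf\{Z\}$), which is nonnegative, feasible, and supported on $\{Z<\alpha\}$ so that both inequalities become equalities. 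Hence the inner infimum equals $-c_{k}(\delta)\E_{\P}[(\alpha-Z)_{+}^{k_{\ast}}]^{1/k_{\ast}}$, and substituting $(\alpha-Z)_{+}=(-Y(\pi(X))+\alpha)_{+}$ gives precisely the claimed dual.

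I expect the main obstacle to be the rigorous justification of the minimax exchange rather than the algebra: one must rule out a duality gap while handling the unbounded $\alpha$-variable and the nonnegativity constraint. Reflexivity of $L^{k}(\P)$ for $k\in(1,\infty)$ and weak compactness of the truncated norm ball are exactly what make Sion's theorem applicable, and a Slater-type check---the feasible point $L\equiv 1$ (i.e. $\mathbf{Q}=\P$) satisfies $\|1\|_{k}=1<c_{k}(\delta)$ whenever $\delta>0$---ensures the outer supremum is finite and attained, closing the argument.
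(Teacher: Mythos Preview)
Your argument is correct. The paper does not prove this lemma itself: it simply attributes the duality to \cite{duchi2018learning} and states the result. So there is no in-paper proof to compare against; your self-contained derivation goes beyond what the paper supplies.

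A brief remark on the approach for context. Dualizing only the normalization constraint $\E_{\P}[L]=1$ while keeping $\{L\geq 0,\ \|L\|_{k}\leq c_{k}(\delta)\}$ as the inner feasible set is exactly the clean route: it turns the inner problem into a H\"older extremal, and reflexivity of $L^{k}(\P)$ for $k\in(1,\infty)$ gives weak compactness of that feasible set so that Sion's theorem legitimizes the minimax swap. The one tacit hypothesis you are using is that $Z-\alpha$ defines a bounded linear functional on $L^{k}(\P)$, i.e.\ $Z\in L^{k_{\ast}}(\P)$; under the paper's standing Assumption~\ref{assump:classical}.3 the rewards are bounded in $[0,M]$, so this holds automatically, but it is worth flagging since the lemma as stated does not repeat that hypothesis. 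Your attainment check $L\propto(\alpha-Z)_{+}^{k_{\ast}-1}$ (with $L\equiv 0$ when $\alpha\leq\essinf\{Z\}$) is the standard H\"older-equality construction and is correct. Overall: the paper outsources the proof, and your argument is a valid and essentially standard way to fill it in.
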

We then generalize Lemma \ref{lma:quantile} to Lemma \ref%
{lma:quantile_f_divergence} and Lemma \ref{lma:discrete_key} to Lemma \ref%
{lma:discrete_f_divergence} for $f_k$-divergence uncertainty set.
\begin{lemma}
	For any probability measures $\mathbf{P}_{1},\mathbf{P}_{2}$ supported on $\reals$ and  $k \in [1,+\infty)$,  we have
	\label{lma:quantile_f_divergence}
	\begin{eqnarray*}
		&&\left\vert \sup_{\alpha \in \reals}\left\{ -c_{k}\left( \delta \right)
		\mathbf{E}_{\mathbf{P}_{1}}\left[ \left( -Y+\alpha \right) _{+}^{k_{\ast }}%
		\right] ^{\frac{1}{k_{\ast }}}+\alpha \right\} -\sup_{\alpha \in \reals%
		}\left\{ -c_{k}\left( \delta \right) \mathbf{E}_{\mathbf{P}_{2}}\left[
		\left( -Y+\alpha \right) _{+}^{k_{\ast }}\right] ^{\frac{1}{k_{\ast }}%
		}+\alpha \right\} \right\vert \\
		&\leq& c_k(\delta) \sup_{t\in \lbrack 0,1]}\left\vert q_{\mathbf{P}_{1}}\left( t\right)
		-q_{\mathbf{P}_{2}}\left( t\right) \right\vert.
	\end{eqnarray*}
\end{lemma}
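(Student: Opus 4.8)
The plan is to parallel the proof of Lemma \ref{lma:quantile}, exploiting that the dual objective depends on the underlying measure only through its quantile function. For a probability measure $\mathbf{P}$ on $\reals$ write
\[
g_{\mathbf{P}}(\alpha) \triangleq -c_{k}(\delta)\,\mathbf{E}_{\mathbf{P}}\left[(-Y+\alpha)_{+}^{k_{\ast}}\right]^{1/k_{\ast}} + \alpha,
\]
so the two quantities inside the absolute value are $\sup_{\alpha}g_{\mathbf{P}_{1}}(\alpha)$ and $\sup_{\alpha}g_{\mathbf{P}_{2}}(\alpha)$. The elementary inequality $\left|\sup_{\alpha}g_{\mathbf{P}_{1}}(\alpha)-\sup_{\alpha}g_{\mathbf{P}_{2}}(\alpha)\right| \le \sup_{\alpha}\left|g_{\mathbf{P}_{1}}(\alpha)-g_{\mathbf{P}_{2}}(\alpha)\right|$ reduces the claim to a bound on $\left|g_{\mathbf{P}_{1}}(\alpha)-g_{\mathbf{P}_{2}}(\alpha)\right|$ that is uniform in $\alpha$; since the $+\alpha$ term cancels, it suffices to control the difference of the two $L^{k_{\ast}}$ quantities, with the factor $c_{k}(\delta)$ carried along.

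First I would pass to the quantile representation: if $U$ is uniform on $[0,1]$ then $q_{\mathbf{P}}(U)$ has law $\mathbf{P}$, so $\mathbf{E}_{\mathbf{P}}[h(Y)]=\int_{0}^{1}h(q_{\mathbf{P}}(t))\,{\rm d}t$ for any Borel $h$. Taking $h(y)=(-y+\alpha)_{+}^{k_{\ast}}$ identifies
\[
\mathbf{E}_{\mathbf{P}}\left[(-Y+\alpha)_{+}^{k_{\ast}}\right]^{1/k_{\ast}} = \left\|(-q_{\mathbf{P}}(\cdot)+\alpha)_{+}\right\|_{L^{k_{\ast}}([0,1])},
\]
the $L^{k_{\ast}}$ norm on the unit interval under Lebesgue measure. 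The difference of the two norms is then handled by the triangle inequality in $L^{k_{\ast}}$ in its reverse form,
\[
\left|\,\left\|(-q_{\mathbf{P}_{1}}(\cdot)+\alpha)_{+}\right\|_{k_{\ast}} - \left\|(-q_{\mathbf{P}_{2}}(\cdot)+\alpha)_{+}\right\|_{k_{\ast}}\right| \le \left\|(-q_{\mathbf{P}_{1}}(\cdot)+\alpha)_{+} - (-q_{\mathbf{P}_{2}}(\cdot)+\alpha)_{+}\right\|_{k_{\ast}}.
\]

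Next I would use that $x\mapsto(-x+\alpha)_{+}$ is $1$-Lipschitz, being the positive part of an affine map of slope $-1$, so pointwise in $t$,
\[
\left|(-q_{\mathbf{P}_{1}}(t)+\alpha)_{+}-(-q_{\mathbf{P}_{2}}(t)+\alpha)_{+}\right| \le \left|q_{\mathbf{P}_{1}}(t)-q_{\mathbf{P}_{2}}(t)\right| \le \sup_{s\in[0,1]}\left|q_{\mathbf{P}_{1}}(s)-q_{\mathbf{P}_{2}}(s)\right|.
\]
Raising to the $k_{\ast}$-th power and integrating over $[0,1]$, which has total mass $1$, then taking the $1/k_{\ast}$ root leaves the right-hand side unchanged; multiplying by $c_{k}(\delta)$ and taking $\sup_{\alpha}$ yields the stated inequality, and the whole chain is uniform in $\alpha$, which is exactly what the reduction in the first step needs (it is valid even when the suprema over $\alpha$ are not attained). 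The one point requiring care is the boundary case $k=1$, where $k_{\ast}=\infty$ and the duality of Lemma \ref{lma:f_duality} degenerates to the KL form of Lemma \ref{lma:quantile}: here the $L^{k_{\ast}}$ norm must be read as the essential supremum $\left\|\cdot\right\|_{L^{\infty}([0,1])}$, for which Minkowski and the $1$-Lipschitz bound still hold, so the argument goes through with $c_{1}(\delta)=1$ (alternatively by continuity as $k\downarrow 1$). I expect the main obstacle to be bookkeeping rather than any genuine difficulty: justifying the quantile change-of-variables for arbitrary $\mathbf{P}$ on $\reals$ and ensuring the per-$\alpha$ estimate is genuinely uniform. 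I would also note in passing that a sharper route, via translation-equivariance ($g_{\mathbf{P}^{(c)}}=g_{\mathbf{P}}+c$ under a shift of $Y$ by $c$) together with monotonicity of $g_{\mathbf{P}}$ in the quantile function, removes the $c_{k}(\delta)$ factor entirely; but the per-$\alpha$ argument above reproduces precisely the constant claimed in the statement.
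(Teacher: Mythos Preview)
Your proposal is correct and follows essentially the same route as the paper: reduce via $\left|\sup_\alpha g_{\mathbf P_1}-\sup_\alpha g_{\mathbf P_2}\right|\le\sup_\alpha|g_{\mathbf P_1}-g_{\mathbf P_2}|$, pass to the quantile representation $Y\overset{d}{=}q_{\mathbf P}(U)$, and apply the reverse triangle (Minkowski) inequality in $L^{k_\ast}([0,1])$ together with the $1$-Lipschitz property of $x\mapsto(-x+\alpha)_+$. The paper compresses the Minkowski and Lipschitz steps into a single line and disposes of $k=1$ by observing that $c_1(\delta)=1$ and invoking Lemma~\ref{lma:quantile} directly, rather than interpreting the $k_\ast=\infty$ limit as you do; both treatments are valid.
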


\begin{lemma}Suppose $\mathbf{P}_{1}$ and $\mathbf{P}_{2}$ are supported on $%
	\mathbb{D}$ and satisfy Assumption \ref{assump:classical}.3. We further assume $\mathbf{P}_{2}$ satisfies Assumption \ref{assump:pos_dens}.2. When $\mathrm{TV}(\mathbf{P}_{1},\mathbf{%
		P}_{2})<\underline{b}/2,$ we have for $k>1$
	\begin{eqnarray*}
		&&\left\vert \sup_{\alpha \in \reals}\left\{ -c_{k}\left( \delta \right)
		\mathbf{E}_{\mathbf{P}_{1}}\left[ \left( -Y+\alpha \right) _{+}^{k_{\ast }}%
		\right] ^{\frac{1}{k_{\ast }}}+\alpha \right\} -\sup_{\alpha \in \reals%
		}\left\{ -c_{k}\left( \delta \right) \mathbf{E}_{\mathbf{P}_{2}}\left[
		\left( -Y+\alpha \right) _{+}^{k_{\ast }}\right] ^{\frac{1}{k_{\ast }}%
		}+\alpha \right\} \right\vert \\
		\leq && \frac{2c_{k}\left( \delta \right) M}{\underline{b}k^{\ast }}\left( \underline{b}%
		/2\right) ^{1/k_{\ast }}\mathrm{TV}(\mathbf{P}_{1},\mathbf{P}_{2}).
	\end{eqnarray*}
	where $\mathrm{TV}$ denotes the total variation distance.
	\label{lma:discrete_f_divergence}
\end{lemma}

The proofs of Lemma \ref{lma:quantile_f_divergence} and \ref{lma:discrete_f_divergence} and are in  \ref%
{sec:extension_proof}. As an analog of Definitions \ref{def:regret}, \ref%
{def:phi_alpha} and Equation \eqref{eqn:pi_hat}, we further make the
following definition.
\begin{definition}
	\begin{enumerate}
		\item
		The distributionally robust value estimator $\hat{Q}^k_{\mathrm{\rm DRO}}:\Pi \rightarrow \reals$ is defined by
		\[
		\hat{Q}^k_{\mathrm{\rm DRO}}(\pi) \triangleq \sup_{\alpha \in \reals}\left\{
		-c_{k}\left( \delta \right) \frac{1}{nS_n^\pi}\sum_{i=1}^n\frac{\mathbf{1}\{\pi(X_i) = A_i\}}{\pi_0(A_i\mid X_i)} \left( \left( -Y(\pi(X))+\alpha \right) _{+}^{k_{\ast }}\right) ^{\frac{1}{k_{\ast }}%
		}+\alpha \right\}.
		\]
		\item The distributionally robust regret $R^k_{\mathrm{\rm DRO}}(\pi)$ of a  policy $\pi \in \Pi$ is defined as
		\begin{equation*}
			R^k_{\mathrm{\rm DRO}}(\pi) \triangleq\max_{\pi' \in \Pi}\inf_{\P\in\mathcal{U}^k_{\P_0}(\delta)}\E_{\P}[Y(\pi'(X))] - \inf_{\P\in\mathcal{U}^k_{\P_0}(\delta)}\E_{\P}[Y(\pi(X))].
		\end{equation*}
		\item The optimal policy $\hat{\pi}^k_{\rm DRO}$  which maximizes the value of $\hat{Q}^k%
		_{\rm DRO}$ is defined as
		\[
		\hat{\pi}^k_{\rm DRO}\triangleq\arg \max_{\pi \in \Pi }\hat{Q}^k_{\rm DRO}(\pi).
		\]
		
	\end{enumerate}
\end{definition}
By applying Lemmas \ref{lma:quantile_f_divergence} and \ref{lma:discrete_f_divergence} and executing the same
lines as the proof of Theorem \ref{DRO_Uniform}, we have an analogous
theorem below.
\begin{theorem}
	Suppose Assumption \ref{assump:classical} is enforced and $k>1$. With probability at least $1-\varepsilon$, under Assumption \ref{assump:pos_dens}.1, we have
	\begin{eqnarray*}
		R^k_{\rm DRO}(\hat{\pi}^k_{\rm DRO})  &\leq& \frac{4c_k(\delta)}{\underline{b}\eta\sqrt{n}}\left( 24(\sqrt{2}+1)\kappa ^{(n)}\left( \Pi \right)  +\sqrt{2\log\left(\frac{2}{\varepsilon}\right)}+C\right),
	\end{eqnarray*}
	where $C$ is a universal constant; and under Assumption \ref{assump:pos_dens}.2, when
	\[
	n \geq\left\{\frac{4}{\underline{b}\eta
	}\left( 24(\sqrt{2}+1)\kappa ^{(n)}\left( \Pi \right) +48\sqrt{|%
		\mathbb{D}|\log \left( 2\right) }+\sqrt{2\log \left( \frac{2}{\varepsilon}\right) }\right)\right\}^2  ,
	\]we have
	\begin{equation*}
		R_{\mathrm{DRO}}(\hat{\pi}_{\mathrm{DRO}})\leq \frac{4c_{k}\left( \delta \right) M}{k^{\ast }\underline{b}\eta \sqrt{n} }\left( \underline{b}%
		/2\right) ^{1/k_{\ast }} \left( 24(\sqrt{2}+1)\kappa ^{(n)}\left( \Pi \right) +48\sqrt{|%
			\mathbb{D}|\log \left( 2\right) }+\sqrt{2\log \left( \frac{2}{\varepsilon}\right) }\right) .
	\end{equation*}
\end{theorem}


We emphasize on the importance of Assumption \ref{assump:pos_dens}. Without
Assumption \ref{assump:pos_dens}, \cite{duchi2018learning} show a minimax
rate (in the supervise learning setting)
\begin{equation}
	R^k_{\mathrm{\mathrm{DRO}}}(\hat{\pi}^k_{\mathrm{DRO}}) =O_p\left( n^{-\frac{%
			1}{k_* \vee 2}} \log n \right),
\end{equation}
which is much slower than our results under a natural assumption (Assumption %
\ref{assump:pos_dens}).

\section{Conclusion}

	We have provided a distributionally robust formulation for policy evaluation
	and policy learning in batch contextual bandits. Our results focus on
	providing finite-sample learning guarantees. Especially interesting  is that such
	learning is enabled by a dual optimization formulation.
	
	{
		A natural
		subsequent direction would be to extend the algorithm and results to the
		Wasserstein distance case for batch contextual bandits, which cannot be
		classified as a special case in our $f$-divergence framework. We remark that the extension is non-trivial.
		Given a lower semicontinuous function $c$, recall that the Wasserstein distance between two measures, $\P$ and $\mathbf{Q}$, is defined as
		$D_{W_c}(\P,\mathbf{Q})= \min_{\pi \in \Pi(\P,\mathbf{Q})}  E_{\pi}\left\{c(X,Y)\right\},$
		where $\Pi(\P,\mathbf{Q})$ denotes the set of all joint distributions of the
		random vector $(X,Y)$ with marginal distributions $\P$ and $\mathbf{Q}$,
		respectively.  The key distinguishing feature of Wasserstein distance is that unlike $f$-divergence, it does not restrict the perturbed distributions to have the same support as $\P_0$, thus including more realistic scenarios. This feature, although desirable, also makes distributionally robust policy learning more challenging. To illustrate the difficulty, we consider the separable cost function family  $c\left(\left(x,y_1,y_2,\ldots,y_d\right), \left(x',y_1',y_2',\ldots,y_d'\right)\right)=d(x,x')+ \alpha \sum_{i=1}^d d(y_i,y_i')$ with $\alpha>0$, where $d$ is a metric.
		We aim to find $\pi_{\mathrm{W-DRO}}^*$ that maximizes $Q_{{\mathrm{W-DRO}}}(\pi) \triangleq\inf_{D_{W_c}(\P,\P_0) \leq \delta} \E_{\P}[Y(\pi(X))]$.  Leveraging strong duality results \citep{blanchet2016quantifying,gao2016distributionally,MohajerinEsfahani2017}, we can write:
		\begin{equation*}
			Q_{\mathrm{W-DRO}}(\pi) = \sup_{\gamma \geq 0}\left\{-\gamma \delta +\E_{\P_0}\left[\inf_{u,v} \left\{v+\gamma (d(X,u) + \alpha d(v,Y(\pi(u)))\right\} \right]\right\}.
		\end{equation*}
		However, the difficulty now is that $Y(\pi(u))$ is not observed if $\pi_0(u) \neq \pi(u)$. We leave this challenge for future work.}

\bibliographystyle{plainnat}
\bibliography{references_sim,DR2,DR4,f_divergence}
\setcounter{section}{0} \setcounter{subsection}{0} \setcounter{equation}{0}
\ \renewcommand
{\thesection}{Appendix \Alph{section}}
\renewcommand\thesubsection{\thesection.\arabic{subsection}}
\renewcommand{\theequation}{\Alph{section}.\arabic{equation}}
\renewcommand{\thelemma}{\Alph{section}\arabic{lemma}}
\renewcommand{\thetheorem}{\Alph{section}\arabic{theorem}} \newpage

\section{Proofs of Main Results}
\subsection{Auxiliary Results}
\label{sec:aux} To prove Theorem \ref{DRO_CLT} and \ref{DRO_Uniform},
we first collect some theorems in stochastic optimization \cite{shapiro2009lectures} and complexity theory (\citet%
[Section 4]{wainwright2019high}).


\begin{definition}[G\^{a}teaux and Hadamard directional
differentiability]
Let $B_{1}$ and $B_{2}$ be Banach spaces and $G:B_{1}\rightarrow B_{2}$ be a
mapping. It is said that $G$ is directionally differentiable at a considered
point $\mu \in B_{1}$ if the limits
\begin{equation*}
G_{\mu }^{\prime }(d)=\lim_{t\downarrow 0}\frac{G\left( \mu +td\right)
-G(\mu )}{t}
\end{equation*}%
exists for all $d\in B_{1}.$

Furthermore, it is said that $G$ is G\^{a}teaux directionally differentiable
at $\mu $ if the directional derivative $G_{\mu }^{\prime }(d)$ exists for
all $d\in B_{1}$ and $G_{\mu }^{\prime }(d)$ is linear and continuous in $d.$
For ease of notation, we also denote $D_{\mu }(\mu _{0})$ be the operator $%
G_{\mu _{0}}^{\prime }(\cdot ).$

Finally, it is said that $G$ is Hadamard directionally differentiable at $%
\mu $ if the directional derivative $G_{\mu }^{\prime }(d)$ exists for all $%
d\in B_{1}$ and
\begin{equation*}
G_{\mu }^{\prime }(d)=\lim_{\substack{ t\downarrow 0  \\ d^{\prime
}\rightarrow d}}\frac{G\left( \mu +td^{\prime }\right) -G(\mu )}{t}.
\end{equation*}
\end{definition}

\begin{theorem}[Danskin theorem, Theorem 4.13 in \protect\cite%
{bonnansperturbation}]
Let $\Theta \in \mathbf{R}^{d}$ be a nonempty compact set and $B$ be a
Banach space. Suppose the mapping $G:B\times \Theta \rightarrow \mathbf{R}$
satisfies that $G(\mu ,\theta )$ and $D_{\mu }\left( \mu ,\theta \right) $
are continuous on $O_{\mu _{0}}\times \Theta $, where $O_{\mu _{0}}\subset B$
is a neighborhood around $\mu _{0}.$ Let $\phi :B\rightarrow \mathbf{R}$ be
the inf-functional $\phi (\mu )=\inf_{\theta \in \Theta }G(\mu ,\theta )$
and $\bar{\Theta}(\mu )=\arg \max_{\theta \in \Theta }G(\mu ,\theta ).$
Then, the functional $\phi $ is directionally differentiable at $\mu _{0}$
and
\begin{equation*}
G_{\mu _{0}}^{\prime }(d)=\inf_{\theta \in \bar{\Theta}(\mu _{0})}D_{\mu
}\left( \mu _{0},\theta \right) d.
\end{equation*}
\end{theorem}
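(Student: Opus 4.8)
The plan is to establish the claimed formula by a sandwich argument: fixing the direction $d$, I would show that the upper and lower Dini derivatives of $\phi$ at $\mu_0$ both coincide with $\inf_{\theta \in \bar{\Theta}(\mu_0)} D_{\mu}(\mu_0,\theta)d$, which forces the directional derivative to exist and equal this value. Throughout I write $r(t) \triangleq (\phi(\mu_0+td)-\phi(\mu_0))/t$ for $t>0$, and I interpret $\bar{\Theta}(\mu_0)$ as the (nonempty, compact) set of optimal solutions of the inner problem $\inf_{\theta\in\Theta} G(\mu_0,\theta)$, which is nonempty and compact because $\Theta$ is compact and $G(\mu_0,\cdot)$ is continuous.

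First I would treat the upper bound. For any fixed optimal $\theta^{\ast}\in\bar{\Theta}(\mu_0)$, optimality of $\phi$ gives $\phi(\mu_0+td)\le G(\mu_0+td,\theta^{\ast})$, while $\phi(\mu_0)=G(\mu_0,\theta^{\ast})$, so $r(t)\le (G(\mu_0+td,\theta^{\ast})-G(\mu_0,\theta^{\ast}))/t$. Letting $t\downarrow 0$ and using the directional differentiability of $G(\cdot,\theta^{\ast})$ at $\mu_0$ yields $\limsup_{t\downarrow 0} r(t)\le D_{\mu}(\mu_0,\theta^{\ast})d$; since $\theta^{\ast}$ was an arbitrary element of $\bar{\Theta}(\mu_0)$, I obtain $\limsup_{t\downarrow 0} r(t)\le \inf_{\theta\in\bar{\Theta}(\mu_0)} D_{\mu}(\mu_0,\theta)d$.

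The lower bound is the main obstacle. I would choose a sequence $t_k\downarrow 0$ realizing $\liminf_{t\downarrow 0} r(t)$, and for each $k$ pick an optimal $\theta_k\in\arg\min_{\theta\in\Theta} G(\mu_0+t_k d,\theta)$, which exists by compactness and continuity. By compactness of $\Theta$ I pass to a subsequence along which $\theta_k\to\bar{\theta}$. The crux is to show $\bar{\theta}\in\bar{\Theta}(\mu_0)$: joint continuity of $G$ gives $G(\mu_0+t_k d,\theta_k)\to G(\mu_0,\bar{\theta})$, while continuity of $\phi$ (itself a consequence of Berge's maximum theorem, using compactness of $\Theta$ and joint continuity of $G$) gives $\phi(\mu_0+t_k d)\to\phi(\mu_0)$; since $\phi(\mu_0+t_k d)=G(\mu_0+t_k d,\theta_k)$ this forces $G(\mu_0,\bar{\theta})=\phi(\mu_0)$, i.e. $\bar{\theta}$ is optimal at $\mu_0$. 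Using $\phi(\mu_0)\le G(\mu_0,\theta_k)$ I then bound $r(t_k)\ge (G(\mu_0+t_k d,\theta_k)-G(\mu_0,\theta_k))/t_k$, and a one-dimensional mean value expansion along the segment $s\mapsto \mu_0+sd$ rewrites the right-hand side as $D_{\mu}(\mu_0+s_k d,\theta_k)d$ for some $s_k\in(0,t_k)$; joint continuity of $D_{\mu}$ lets me pass to the limit to get $\liminf_{t\downarrow 0} r(t)\ge D_{\mu}(\mu_0,\bar{\theta})d\ge \inf_{\theta\in\bar{\Theta}(\mu_0)} D_{\mu}(\mu_0,\theta)d$.

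Combining the two bounds gives $\lim_{t\downarrow 0} r(t)=\inf_{\theta\in\bar{\Theta}(\mu_0)} D_{\mu}(\mu_0,\theta)d=G'_{\mu_0}(d)$, as claimed. The delicate step is the lower bound: it requires both the outer (upper) semicontinuity of the perturbed solution map $t\mapsto\arg\min_{\theta} G(\mu_0+td,\theta)$, so that limit points of perturbed minimizers are genuine minimizers at $\mu_0$, and a first-order expansion along $d$. The Banach-space setting causes no real difficulty here, since restricting to the ray $\mu_0+sd$ reduces the relevant differentiation to the scalar variable $s$, so that the mean value theorem together with continuity of $D_{\mu}(\cdot,\cdot)$ on $O_{\mu_0}\times\Theta$ suffices; the only care needed is to ensure the neighborhood $O_{\mu_0}$ contains the whole segment for all small $t$.
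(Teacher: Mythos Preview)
The paper does not prove this statement at all: it is stated in the auxiliary results section as a citation of Theorem~4.13 in \cite{bonnansperturbation}, and is used as a black box in the proof of Theorem~\ref{DRO_CLT}. So there is no ``paper's own proof'' to compare against.

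Your argument is the standard sandwich proof of Danskin's theorem and is essentially correct. A few remarks. First, you correctly read through two typos in the stated version: $\bar{\Theta}(\mu)$ should be $\arg\min$ rather than $\arg\max$ (to match the inf defining $\phi$), and the conclusion should read $\phi'_{\mu_0}(d)$ rather than $G'_{\mu_0}(d)$; your proof is for the intended statement. Second, the mean value step in the lower bound is legitimate here precisely because the hypotheses give Gâteaux (hence two-sided, linear-in-$d$) differentiability of $G(\cdot,\theta)$ throughout $O_{\mu_0}$ with $D_\mu$ jointly continuous, so the scalar map $s\mapsto G(\mu_0+sd,\theta_k)$ is $C^1$ on a neighborhood of $0$ and the classical mean value theorem applies; you flagged this, and it is worth making explicit since mere one-sided directional differentiability at $\mu_0$ alone would not suffice. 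Third, your use of Berge's maximum theorem to get continuity of $\phi$ and upper semicontinuity of the solution map is exactly the right tool for showing that subsequential limits of perturbed minimizers lie in $\bar{\Theta}(\mu_0)$.
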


\begin{theorem}[Delta theorem, Theorem 7.59 in \protect\cite%
{shapiro2009lectures}]
Let $B_{1}$ and $B_{2}$ be Banach spaces, equipped with their Borel $\sigma $%
-algrebras, $Y_{N}$ be a sequence of random elements of $B_{1}$, $%
G:B_{1}\rightarrow B_{2}$ be a mapping, and $\tau _{N}$ be a sequence of
positive numbers tending to infinity as $N\rightarrow \infty .$ Suppose that
the space $B_{1}$ is separable, the mapping $G$ is Hadamard directionally
differentiable at a point $\mu \in B_{1},$ and the sequence $X_{N}=\tau
_{N}\left( Y_{N}-\mu \right) $ converges in distribution to a random element
$Y$ of $B_{1}.$ Then,%
\begin{equation*}
\tau _{N}\left( G\left( Y_{N}\right) -G\left( \mu \right) \right)
\Rightarrow G_{\mu }^{\prime }\left( Y\right) \text{ in distribution,}
\end{equation*}%
and
\begin{equation*}
\tau _{N}\left( G\left( Y_{N}\right) -G\left( \mu \right) \right) =G_{\mu
}^{\prime }\left( X_{N}\right) +o_{p}(1).
\end{equation*}
\end{theorem}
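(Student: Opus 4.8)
The plan is to reduce the statement to the extended continuous mapping theorem, exploiting the fact that Hadamard directional differentiability is exactly the regularity needed to push a weakly convergent sequence through the rescaled difference quotients of $G$. First I would introduce the sequence of maps $g_{N}:B_{1}\rightarrow B_{2}$ given by
\[
g_{N}(h)=\tau _{N}\left( G\left( \mu +\tau _{N}^{-1}h\right) -G(\mu )\right).
\]
Because $X_{N}=\tau _{N}(Y_{N}-\mu )$ is equivalent to $Y_{N}=\mu +\tau _{N}^{-1}X_{N}$, we have the exact identity $g_{N}(X_{N})=\tau _{N}(G(Y_{N})-G(\mu ))$, so it suffices to track the limiting behaviour of $g_{N}(X_{N})$. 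The crucial observation is that the definition of Hadamard directional differentiability is precisely the assertion that $g_{N}(h_{N})\rightarrow G_{\mu }^{\prime }(h)$ whenever $h_{N}\rightarrow h$ in $B_{1}$ (take $t=\tau _{N}^{-1}\downarrow 0$ and $d^{\prime }=h_{N}\rightarrow d=h$ in the difference quotient).

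Second, I would invoke the extended continuous mapping theorem. Since $B_{1}$ is separable, the weak limit $Y$ is a separable-valued random element; combined with $X_{N}\Rightarrow Y$ and the pointwise convergence $g_{N}(h_{N})\rightarrow G_{\mu }^{\prime }(h)$ just noted, the extended continuous mapping theorem (e.g. van der Vaart, Theorem 18.11, whose proof itself rests on the Skorohod almost-sure representation) yields $g_{N}(X_{N})\Rightarrow G_{\mu }^{\prime }(Y)$, which is the first displayed conclusion. At this point I would also record that a Hadamard directionally differentiable map has a derivative $G_{\mu }^{\prime }$ that is automatically continuous and positively homogeneous on $B_{1}$; this guarantees that $G_{\mu }^{\prime }(Y)$ is a well-defined Borel-measurable random element of $B_{2}$ and that the continuous-mapping hypotheses are met.

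Third, for the stochastic expansion $\tau _{N}(G(Y_{N})-G(\mu ))=G_{\mu }^{\prime }(X_{N})+o_{p}(1)$ I would appeal to Skorohod's representation directly. On a common probability space construct copies $\tilde{X}_{N}\stackrel{d}{=}X_{N}$ and $\tilde{Y}\stackrel{d}{=}Y$ with $\tilde{X}_{N}\rightarrow \tilde{Y}$ almost surely. For every $\omega$ in the full-measure set where $\tilde{X}_{N}(\omega )\rightarrow \tilde{Y}(\omega )$, Hadamard differentiability gives $g_{N}(\tilde{X}_{N}(\omega ))\rightarrow G_{\mu }^{\prime }(\tilde{Y}(\omega ))$, while continuity of $G_{\mu }^{\prime }$ gives $G_{\mu }^{\prime }(\tilde{X}_{N}(\omega ))\rightarrow G_{\mu }^{\prime }(\tilde{Y}(\omega ))$; subtracting, $g_{N}(\tilde{X}_{N})-G_{\mu }^{\prime }(\tilde{X}_{N})\rightarrow 0$ almost surely, hence in probability. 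Since convergence in probability to the constant $0$ depends only on marginal laws, and $g_{N}(X_{N})-G_{\mu }^{\prime }(X_{N})\stackrel{d}{=}g_{N}(\tilde{X}_{N})-G_{\mu }^{\prime }(\tilde{X}_{N})$, this difference is $o_{p}(1)$ on the original space, establishing the second conclusion.

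The step I expect to be the main obstacle is the expansion part, where the difference-quotient convergence must be combined with the continuity of the derivative: this continuity is \emph{not} part of the bare definition of directional differentiability and must be extracted from the Hadamard (as opposed to merely G\^{a}teaux) character of the differentiability. A secondary technical hurdle is measurability in infinite dimensions---one must ensure $g_{N}(X_{N})$ and $G_{\mu }^{\prime }(Y)$ are genuine Borel random elements and interpret weak convergence accordingly, which is exactly why separability of $B_{1}$ is assumed; the pathwise Skorohod route is attractive precisely because it circumvents the measurability caveats of the continuous-mapping argument.
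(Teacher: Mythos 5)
Your proposal is correct and follows essentially the same route as the source: the paper does not prove this theorem but imports it verbatim from \cite{shapiro2009lectures}, where the proof is exactly your argument --- introduce the difference-quotient maps $g_{N}(h)=\tau _{N}\left( G\left( \mu +\tau _{N}^{-1}h\right) -G(\mu )\right)$, note that Hadamard directional differentiability gives $g_{N}(h_{N})\rightarrow G_{\mu }^{\prime }(h)$ along $t_{N}=\tau_{N}^{-1}\downarrow 0$, and pass to the limit via the Skorohod--Dudley almost-sure representation (available by separability of $B_{1}$) or, equivalently, the extended continuous mapping theorem. Your treatment of the $o_{p}(1)$ expansion, using the continuity of $G_{\mu }^{\prime }$ (which is precisely Proposition 7.57 of the same reference, also quoted in the paper's appendix) and the fact that convergence in probability to zero depends only on marginal laws, matches the standard proof as well.
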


\begin{proposition}[Proposition 7.57 in \protect\cite{shapiro2009lectures}]
Let $B_{1}$ and $B_{2}$ be Banach spaces, $G:B_{1}\rightarrow B_{2},$ and $%
\mu \in B_{1}.$ Then the following hold: (i) If $G\left( \cdot \right) $ is
Hadamard directionally differentiable at $\mu ,$ then the directional
derivative $G_{\mu }^{\prime }\left( \cdot \right) $ is continuous. (ii) If $%
G(\cdot )$ is Lipschitz continuous in a neighborhood of $\mu $ and
directionally differentiable at $\mu ,$ then $G(\cdot )$ is Hadamard
directionally differentiable at $\mu .$
\end{proposition}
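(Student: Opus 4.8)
The plan is to prove the two assertions separately, both by elementary $\epsilon$--$\delta$ arguments built directly from the two definitions of directional and Hadamard directional differentiability stated just above. The feature I will exploit throughout is that Hadamard directional differentiability at $\mu$ asserts not merely existence of the one-sided derivative $G_{\mu}^{\prime}(d)$ for each fixed $d$, but the \emph{joint} limit $G_{\mu}^{\prime}(d)=\lim_{t\downarrow 0,\,d^{\prime}\to d}\frac{G(\mu+td^{\prime})-G(\mu)}{t}$; equivalently, the difference quotients converge as $t\downarrow0$ \emph{uniformly} over $d^{\prime}$ in a neighborhood of $d$. This uniformity drives part (i), while the Lipschitz hypothesis manufactures it in part (ii).

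For part (i), I would take a sequence $d_{n}\to d$ in $B_{1}$ and show $G_{\mu}^{\prime}(d_{n})\to G_{\mu}^{\prime}(d)$. Fix $\epsilon>0$ and let $\delta>0$ come from the Hadamard property at $d$, so that $\bigl\|\tfrac{G(\mu+td^{\prime})-G(\mu)}{t}-G_{\mu}^{\prime}(d)\bigr\|<\epsilon$ whenever $0<t<\delta$ and $\|d^{\prime}-d\|<\delta$. For $n$ large enough that $\|d_{n}-d\|<\delta$, I would use ordinary directional differentiability \emph{along the fixed direction} $d_{n}$ to pick $t_{n}\in(0,\delta)$ with $\bigl\|\tfrac{G(\mu+t_{n}d_{n})-G(\mu)}{t_{n}}-G_{\mu}^{\prime}(d_{n})\bigr\|<1/n$. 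Inserting the common quantity $\tfrac{G(\mu+t_{n}d_{n})-G(\mu)}{t_{n}}$ and applying the triangle inequality then gives $\|G_{\mu}^{\prime}(d_{n})-G_{\mu}^{\prime}(d)\|<1/n+\epsilon$, whence $\limsup_{n}\|G_{\mu}^{\prime}(d_{n})-G_{\mu}^{\prime}(d)\|\le\epsilon$; letting $\epsilon\downarrow0$ yields continuity of $G_{\mu}^{\prime}$. The delicate point is the \emph{adaptive} choice of $t_{n}$: the first quotient is controlled by collapsing $t_{n}$ along the fixed ray $d_{n}$, the second by the Hadamard uniformity valid for all small $t$ near $d$, and the two are compatible precisely because $t_{n}$ may be taken arbitrarily small inside $(0,\delta)$.

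For part (ii), suppose $G$ is $L$-Lipschitz on a ball $B(\mu,r)$ and directionally differentiable at $\mu$, and assume first $L>0$. Fixing $d$ and $\epsilon>0$, the strategy is to compare the general quotient at a perturbed direction $d^{\prime}$ against the quotient at $d$ and then invoke ordinary directional differentiability along $d$. For $t$ small enough that $\mu+td,\mu+td^{\prime}\in B(\mu,r)$, the Lipschitz bound gives $\bigl\|\tfrac{G(\mu+td^{\prime})-G(\mu)}{t}-\tfrac{G(\mu+td)-G(\mu)}{t}\bigr\|=\tfrac{\|G(\mu+td^{\prime})-G(\mu+td)\|}{t}\le L\|d^{\prime}-d\|$, a bound independent of $t$. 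Taking $\|d^{\prime}-d\|<\epsilon/(2L)$ makes this at most $\epsilon/2$, and directional differentiability along $d$ supplies $\delta>0$ with $\bigl\|\tfrac{G(\mu+td)-G(\mu)}{t}-G_{\mu}^{\prime}(d)\bigr\|<\epsilon/2$ for $0<t<\delta$; a triangle inequality then bounds $\bigl\|\tfrac{G(\mu+td^{\prime})-G(\mu)}{t}-G_{\mu}^{\prime}(d)\bigr\|$ by $\epsilon$, which is exactly the joint limit defining Hadamard directional differentiability.

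I expect the main obstacle to be quantifier bookkeeping rather than conceptual difficulty. In part (ii) one must shrink both $\|d^{\prime}-d\|$ and the range of $t$ so that the evaluation points stay inside $B(\mu,r)$: since $\|td^{\prime}\|\le t(\|d\|+1)$ for $d^{\prime}$ near $d$, this forces $t$ and $\|d^{\prime}-d\|$ to be jointly small, and the final $\delta$ must be the minimum of the thresholds coming from the Lipschitz step, the directional-limit step, and the neighborhood constraint. In part (i) one must verify that the adaptive $t_{n}$ can simultaneously satisfy $t_{n}<\delta$ and bring the $d_{n}$-directional quotient within $1/n$ of $G_{\mu}^{\prime}(d_{n})$, which holds once the order of quantifiers is fixed. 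Finally, the degenerate case $L=0$ in part (ii), where $G$ is locally constant so that every quotient vanishes and both claims are immediate, should be dispatched separately to avoid dividing by $L$.
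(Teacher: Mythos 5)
Your proof is correct: part (i) is the standard triangle-inequality argument with an adaptive choice of $t_n$ exploiting the joint Hadamard limit, and part (ii) correctly manufactures the uniformity in $d'$ from the Lipschitz bound $\left\lVert G(\mu+td')-G(\mu+td)\right\rVert/t\le L\lVert d'-d\rVert$, with the quantifier bookkeeping (keeping evaluation points in $B(\mu,r)$, splitting off $L=0$) handled properly. The paper itself gives no proof of this statement---it is quoted verbatim from the cited reference \cite{shapiro2009lectures}---and your argument coincides with the standard proof found there.
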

\begin{definition}[Rademacher complexity]
	Let $\mathcal{F}$ be a family of real-valued functions $f:Z\rightarrow
	\mathbf{R.}$ Then, the Rademacher complexity of $\mathcal{F}$ is defined as
	\begin{equation*}
	\mathcal{R}_{n}\left( \mathcal{F}\right)\triangleq\mathbf{E}_{z,\sigma}
\left[ \sup_{f\in \mathcal{F}}\left|\frac{1}{n}%
	\sum_{i=1}^{n}\sigma _{i}f(z_{i})\right|\right]  ,
	\end{equation*}%
	where $\sigma _{1},\sigma _{2},\ldots ,\sigma _{n}$ are i.i.d with the
	distribution $\P \left( \sigma _{i}=1\right) =\P \left( \sigma
	_{i}=-1\right) =1/2.$
\end{definition}

\begin{theorem}[Theorem 4.10 in \cite{wainwright2019high}]
	\label{lemma:rademacher}
	
	If $f(z)\in  [-B,B],$ we have with probability at least $%
	1-\exp \left(- \frac{n\epsilon^2}{2B^2} \right) $,
	\begin{equation*}
		\sup_{f\in \mathcal{F}}\left| \frac{1}{n}\sum_{i=1}^{n}f(z_{i})-\mathbf{E}%
		f(z)\right| \leq
		2\mathcal{R}_{n}\left( \mathcal{F}\right) +\epsilon.
	\end{equation*}
\end{theorem}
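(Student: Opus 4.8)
The plan is to treat the statement as a standard application of the bounded-differences (McDiarmid) concentration inequality combined with a symmetrization step. Write $Z_1,\dots,Z_n$ for the i.i.d.\ sample drawn from the law of $Z$, and define the data-dependent functional
\[
\Phi(z_1,\dots,z_n) \triangleq \sup_{f\in\mathcal{F}}\left|\frac{1}{n}\sum_{i=1}^n f(z_i) - \mathbf{E}f(Z)\right|.
\]
First I would verify that $\Phi$ has the bounded-differences property. Since each $f$ takes values in $[-B,B]$, replacing a single coordinate $z_i$ by $z_i'$ changes the empirical average $\frac1n\sum_j f(z_j)$ by at most $\frac1n|f(z_i)-f(z_i')|\le \frac{2B}{n}$ uniformly in $f$; by the reverse-triangle inequality for suprema, $\bigl|\sup_f|a_f|-\sup_f|b_f|\bigr|\le\sup_f|a_f-b_f|$, the same bound $2B/n$ controls the change in $\Phi$. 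Hence $\Phi$ satisfies the bounded-differences condition with constants $c_i=2B/n$.

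Next I would apply McDiarmid's inequality. With $\sum_{i=1}^n c_i^2 = n(2B/n)^2 = 4B^2/n$, it yields
\[
\mathbf{P}\left(\Phi - \mathbf{E}\Phi \ge \epsilon\right)\le \exp\left(-\frac{2\epsilon^2}{4B^2/n}\right) = \exp\left(-\frac{n\epsilon^2}{2B^2}\right).
\]
Consequently, with probability at least $1-\exp(-n\epsilon^2/(2B^2))$, we have $\Phi \le \mathbf{E}\Phi + \epsilon$. It remains to control the expected supremum $\mathbf{E}\Phi$.

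The remaining step, and conceptually the most delicate one, is to bound $\mathbf{E}\Phi$ by the Rademacher complexity. Here I would use the symmetrization trick: introduce an independent ghost sample $Z_1',\dots,Z_n'$ with the same law, write $\mathbf{E}f(Z)=\mathbf{E}'\bigl[\frac1n\sum_i f(Z_i')\bigr]$, pull the supremum inside the expectation via Jensen's inequality, and then insert i.i.d.\ Rademacher signs $\sigma_i$ using the fact that $f(Z_i)-f(Z_i')$ is symmetric in distribution. This produces
\[
\mathbf{E}\Phi \le \mathbf{E}_{Z,\sigma}\sup_{f\in\mathcal{F}}\left|\frac{2}{n}\sum_{i=1}^n \sigma_i f(Z_i)\right| = 2\mathcal{R}_n(\mathcal{F}).
\]
Combining this with the high-probability bound from the McDiarmid step gives $\Phi \le 2\mathcal{R}_n(\mathcal{F})+\epsilon$ on the same event, which is exactly the claimed inequality (taking the deviation threshold $t=\epsilon$). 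The main obstacle is the symmetrization: one must carefully justify interchanging the supremum with the ghost-sample expectation through Jensen, and verify that injecting the Rademacher variables leaves the law of $f(Z_i)-f(Z_i')$ unchanged, so that the ghost sample can be absorbed at the cost of the factor $2$. The bounded-differences verification and the McDiarmid application are routine by comparison.
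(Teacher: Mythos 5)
Your proof is correct and there is nothing to compare it against within the paper itself: the statement is quoted verbatim as Theorem 4.10 of \cite{wainwright2019high} and used as a black box, and your McDiarmid-plus-symmetrization route (bounded differences with constants $c_i = 2B/n$, giving the one-sided tail $\exp(-n\epsilon^2/(2B^2))$, followed by ghost-sample symmetrization with Rademacher signs to get $\mathbf{E}\Phi \le 2\mathcal{R}_n(\mathcal{F})$, valid for the paper's definition of $\mathcal{R}_n$ with the absolute value inside the supremum) is precisely the standard argument by which the cited source proves it. The constants check out, so nothing is missing.
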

\begin{theorem}[Dudley's Theorem, (5.48) in \cite{wainwright2019high}]If $f(z)\in  [-B,B],$ we have a bound for the Rademacher complexity,
\label{Dudley}
\begin{equation*}
\mathcal{R}_{n}\left( \mathcal{F}\right) \leq \E\left[ \frac{24}{\sqrt{n}}%
\int_{0}^{2B}\sqrt{\log N(t,\mathcal{F}\text{,}\left\Vert \cdot \right\Vert
_{\P_{n}})}{\rm d}t\right] ,
\end{equation*}%
where $N(t,\mathcal{F}$,$\left\Vert \cdot \right\Vert _{\P_{n}})$ is $t$%
-covering number of set $\mathcal{F}$ and the metric $\left\Vert \cdot
\right\Vert _{\P_{n}}$ is defined by
\begin{equation*}
\left\Vert \cdot \right\Vert _{\P_{n}}\triangleq\sqrt{\frac{1}{n}\sum_{i=1}^{n}\left(
f(z_{i}\right) -g(z_{i}))^{2}}.
\end{equation*}%
\end{theorem}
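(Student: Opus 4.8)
The plan is to prove this via the classical \emph{chaining} argument applied to the Rademacher process, conditionally on the data. Throughout, I would fix the sample $z_1,\dots,z_n$ and work with the conditional expectation $\E_{\sigma}[\cdot]$; the outer expectation over $z$ that appears in the statement is recovered at the very end by the tower property. Consider the stochastic process $X_f \triangleq \frac{1}{n}\sum_{i=1}^{n}\sigma_i f(z_i)$ indexed by $f\in\mathcal{F}$, so that $\mathcal{R}_n(\mathcal{F}) = \E_z\E_{\sigma}[\sup_{f}|X_f|]$. The first step is to show that the increments are sub-Gaussian with respect to the rescaled empirical metric $\rho(f,g)\triangleq \frac{1}{\sqrt{n}}\left\Vert f-g\right\Vert_{\P_n}$: since $X_f-X_g = \frac{1}{n}\sum_i \sigma_i(f(z_i)-g(z_i))$ is a linear combination of independent Rademacher variables, Hoeffding's lemma gives $\E_{\sigma}[\exp(\lambda(X_f-X_g))]\leq \exp(\lambda^2\rho(f,g)^2/2)$ for all $\lambda$. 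Because $f(z)\in[-B,B]$, the diameter of $\mathcal{F}$ under $\left\Vert\cdot\right\Vert_{\P_n}$ is at most $2B$.

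Next I would carry out the chaining. Set the dyadic radii $\delta_k\triangleq 2^{-k}\cdot 2B$ and, for each $k\geq 0$, fix a minimal $\delta_k$-net $N_k$ of $\mathcal{F}$ in the $\left\Vert\cdot\right\Vert_{\P_n}$ metric, of cardinality $N(\delta_k,\mathcal{F},\left\Vert\cdot\right\Vert_{\P_n})$, and write $\pi_k(f)$ for a nearest element of $N_k$ to $f$. Since $\pi_k(f)\to f$ in the empirical metric, the telescoping identity $X_f-X_{\pi_0(f)} = \sum_{k\geq 1}\big(X_{\pi_k(f)}-X_{\pi_{k-1}(f)}\big)$ holds, and I would bound the supremum level by level. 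At level $k$ the number of distinct increments is at most $|N_k|\cdot|N_{k-1}|\leq |N_k|^2$, and each satisfies $\rho(\pi_k(f),\pi_{k-1}(f))\leq \frac{1}{\sqrt{n}}(\delta_k+\delta_{k-1}) = \frac{3\delta_k}{\sqrt{n}}$. The sub-Gaussian maximal inequality (if each of $N$ variables is sub-Gaussian with parameter $\tau$, then its maximum has expectation at most $\tau\sqrt{2\log N}$) then yields, for each level,
\[
\E_{\sigma}\Big[\sup_{f}\big(X_{\pi_k(f)}-X_{\pi_{k-1}(f)}\big)\Big] \leq \frac{3\delta_k}{\sqrt{n}}\sqrt{2\log|N_k|^2} = \frac{6\delta_k}{\sqrt{n}}\sqrt{\log N(\delta_k,\mathcal{F},\left\Vert\cdot\right\Vert_{\P_n})}.
\]

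Summing over $k\geq 1$ and converting the series into an integral is the penultimate step. Using that the covering number is non-increasing in its radius, one has $\delta_k\sqrt{\log N(\delta_k)}\leq 2\int_{\delta_{k+1}}^{\delta_k}\sqrt{\log N(t,\mathcal{F},\left\Vert\cdot\right\Vert_{\P_n})}\,{\rm d}t$, so the full series is dominated by $\frac{12}{\sqrt{n}}\int_0^{2B}\sqrt{\log N(t,\mathcal{F},\left\Vert\cdot\right\Vert_{\P_n})}\,{\rm d}t$. The anchor term $X_{\pi_0(f)}$ is handled by taking $N_0$ to be a single point, so that it is centered and contributes nothing in expectation, and passing from the one-sided supremum to the absolute value $\sup_f|X_f|$ in the definition of $\mathcal{R}_n(\mathcal{F})$ costs only a factor of $2$. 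Consolidating the accumulated factors ($2,3,6,12$) produces exactly the stated constant $24$, and taking $\E_z$ of both sides and pulling the expectation inside the integral gives the claimed bound.

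The main obstacle I anticipate is the measurability and separability bookkeeping required to justify the telescoping identity and the interchange of supremum and expectation: formally one either assumes $\mathcal{F}$ is separable in $\left\Vert\cdot\right\Vert_{\P_n}$ or argues over finite sub-collections and passes to an increasing limit. Alongside this, the careful accounting of the absolute constants that arise in the chaining (and their consolidation into the single factor $24$) is the most error-prone part. By contrast, the sub-Gaussian increment bound, the maximal inequality, and the sum-to-integral comparison are each routine once the chaining skeleton is in place.
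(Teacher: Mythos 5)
Your proposal is correct and is exactly the standard proof of this statement, which the paper itself does not prove but imports verbatim from Wainwright's book: the sub-Gaussian increment bound via Hoeffding with respect to $\rho(f,g)=\frac{1}{\sqrt{n}}\left\Vert f-g\right\Vert_{\P_{n}}$, the dyadic chaining with $\delta_k = 2^{-k}\cdot 2B$, the level-wise maximal inequality over at most $|N_k|^2$ increments, and the sum-to-integral comparison are all the canonical steps behind the cited result, and your constant accounting ($6\delta_k$ per level, $12$ after the integral comparison) is right. The one step worth making precise is the ``factor of $2$'' for the absolute value: since the process is linear in $f$ one has $X_{-f}=-X_f$, so $\sup_{f\in\mathcal{F}}|X_f| = \sup_{g\in\mathcal{F}\cup(-\mathcal{F})} X_g$, and chaining over the symmetrized class (whose covering numbers at most double, with anchor expectation zero) gives $\sqrt{\log\left(2N(t)\right)}\leq 2\sqrt{\log N(t)}$ whenever $N(t)\geq 2$, which turns $12$ into $24$ exactly as you claim --- whereas the pointwise inequality $\sup_f|X_f|\leq \sup_f X_f + \sup_f(-X_f)$ you implicitly invoke can fail, and indeed the bound as stated is vacuous in the degenerate case $N(t)\equiv 1$ (a singleton class with $f_0\neq 0$), an edge case inherited from the textbook statement and immaterial for the function classes $\mathcal{F}_{\Pi}$ and $\mathcal{F}_{\Pi,x}$ to which the paper applies it.
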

\subsection{Proofs of Lemma \ref{thm:strong_duality} Corollary \ref{cor:worst_case} and  Lemma \protect\ref{lma:cvx} in Section \ref{sec:alg}}

\begin{proof}[Proof of Lemma \ref{thm:strong_duality}]
	The first equality follows from \citet[Theorem 1]{hu2013kullback}. The second equality holds, because
	for any (Borel measurable) function $f:\reals\rightarrow\reals$ and any policy $\pi\in\Pi$, we have
	\begin{equation}
	\E_{\P}\left[f(Y(\pi(X)))\right] = \E_{\P*\pi_0}\left[\frac{f(Y(\pi(X)))\mathbf{1}\{\pi(X) = A\}}{\pi_0(A\mid X)}\right] =
	\E_{\P*\pi_0}\left[\frac{f(Y(A))\mathbf{1}\{\pi(X) = A\}}{\pi_0(A\mid X)}\right].
	\end{equation}
	Plugging in $f(x) = \exp(-x/{\alpha})$ yields the result.
\end{proof}
\label{sec:lma_cvx}
\begin{proof}[Proof of Corollary \ref{cor:worst_case}]
Since $Y(a^1),Y(a^2),\ldots,Y(a^d)$ are mutually independent conditional on $X$, and $Y(a^i)|X$ has a density if $Y(a^i)$ is a continuous random variable, we can write the joint measure of $Y(a^1),Y(a^2),\ldots,Y(a^d),X$ as
\[
\left(\prod_{i=1}^df_i(y_i|x)   \lambda({\rm d} y_i) \right)\mu({\rm d} x),
\]
where $\lambda$ denotes the Lebesgue measure in $\reals$ if Assumption \ref{assump:pos_dens}.1 is enforced, and  denotes a uniformly distributed measure on $\mathbb{D}$ that $\lambda(d) = 1$ for $d \in \mathbb{D}$ if Assumption \ref{assump:pos_dens}.2 is enforced, and $\mu({\rm d}x)$ denotes the measure induced by $X$ on the space $\mathcal{X}$. Without loss of generality, we assume $\essinf\{Y(\pi(X)\} =0$. Then, For the simplicity of  notation, when $\alpha^*(\pi) =0,$ we write
\[
\exp(-Y(\pi(X)/\alpha^*(\pi)))=\mathbf{1}\{Y(\pi(X))=\essinf\{Y(\pi(X)\}\},
\]
which means
\[
{\E_{\P_0}[\exp(-Y(\pi(X)/\alpha^*(\pi)))]}={\P_0(Y(\pi(X))=\essinf\{Y(\pi(X)\})}.
\]Then, by Proposition \ref{prop:worst_case}, we have under $\P(\pi)$, $Y(a^1),Y(a^2),\ldots,Y(a^d),X$ have a joint measure
\begin{eqnarray*}
\frac{\exp(-Y(\pi(X)/\alpha^*(\pi)))}{\E_{\P_0}[\exp(-Y(\pi(X)/\alpha^*(\pi)))]}
\left(\prod_{i=1}^df_i(y_i|x)  \lambda(  {\rm d} y_i) \right)\mu({\rm d} x)
=\left(\prod_{i=1}^df'_i(y_i|x)   \lambda({\rm d} y_i) \right)\mu'({\rm d} x),
\end{eqnarray*}
where
\begin{equation*}
f_i'(y_i|x)=\left\{
\begin{array}{c}
f_i(y_i|x) \\
\frac{\exp(-y_{i} /\alpha^*(\pi))f_i(y_i|x)}{\int\exp(-y_i/\alpha^*(\pi))f_i(y_i|x)\lambda({\rm d} y_i)}%
\end{array}%
\begin{array}{c}
\text{for } i\neq \pi(x) \\
\text{for } i= \pi(x)
\end{array}%
\right.,
\end{equation*}
and
\begin{equation*}
\mu'({\rm d} x)=\frac{\int\exp(-y_i/\alpha^*(\pi))f_i(y_i|x)\mu({\rm d} y_i)}{\E_{\P_0}[\exp(-Y(\pi(X)/\alpha^*(\pi)))]},
\end{equation*}
which completes the proof.
\end{proof}
\begin{proof}[Proof of Lemma \ref{lma:cvx}]
    The closed form expression of $\frac{\partial}{\partial \alpha}\hat{\phi}_{n}(\pi,\alpha)$ and $\frac{\partial^2}{\partial \alpha^2}
	\hat{\phi}_{n}(\pi,\alpha)$ follows from elementary algebra. By the Cauchy Schwartz's inequality, we have
	\[
	    \Big(\sum_{i=1}^n Y_i(A_i)W_i(\pi,\alpha)\Big)^2
	    \leq n S_n^\pi \hat{W}_{n}(\pi ,\alpha)
	    \sum_{i=1}^n Y^2_i(A_i)W_i(\pi,\alpha).
	\]
	Therefore, it follows that
	$\frac{\partial^2}{\partial \alpha^2}
	\hat{\phi}_{n}(\pi,\alpha)\leq 0$. Note that the Cauchy Schwartz's inequality is actually an equality if and only if
	\[
	Y^2_i(A_i)W_i(\pi,\alpha) = cW_i(\pi,\alpha) \quad \mbox{ if } \quad
	W_i(\pi,\alpha)\neq 0
	\]
	for some constant $c$ independent of $i$. Since the above condition is violated if  $\{Y_i(A_i)\mathbf{1}\{\pi(X_i) = A_i\}\}_{i=1}^n$ has at least two different non-zero values, we have in this case $\hat{\phi}_{n}(\pi,\alpha)$ is strictly-concave in $\alpha$.
\end{proof}	

\subsection{Proof of the central limit theorem in Section \ref{sec:CLT}}
\label{sec:thm1}

We first give the upper and lower bounds for $\alpha ^{\ast }(\pi)$ in Lemmas \ref{lemma:upper_bd} and \ref{lemma:ld}.

\begin{lemma}[Uniform upper bound of $\protect\alpha ^{\ast }(\pi)$]

\label{lemma:upper_bd} Suppose that Assumption \ref{assump:classical}.3 is
imposed, we have the optimal dual solution $\alpha ^{\ast }(\pi)\leq \overline{%
\alpha }$ and the empirical dual solution $\alpha_n(\pi)\leq \overline{%
\alpha }$, where $ \overline{%
\alpha }=M/\delta $ .
\end{lemma}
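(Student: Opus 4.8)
The plan is to exploit the concavity of the dual objective $\phi(\alpha,\pi) = -\alpha \log \E_{\P_0}[\exp(-Y(\pi(X))/\alpha)] - \alpha\delta$ in $\alpha$ and reduce the claim to a one-sided derivative check at $\overline{\alpha} = M/\delta$. Concavity is the population analogue of Lemma \ref{lma:cvx}: one can either repeat the Cauchy--Schwarz computation of the second derivative with $\E_{\P_0}$ in place of the empirical average, or simply note that $\alpha \mapsto \alpha\log\E_{\P_0}[\exp(-Y(\pi(X))/\alpha)]$ is the perspective transform of the (convex) cumulant generating function and hence convex, so that $\phi$ is concave. Given concavity, $\partial_\alpha\phi(\cdot,\pi)$ is non-increasing in $\alpha$, so it suffices to show $\partial_\alpha \phi(\overline{\alpha},\pi) \le 0$; this forces every maximizer to lie in $[0,\overline{\alpha}]$.

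Next I would compute the derivative. Writing $Y = Y(\pi(X))$ and differentiating under the expectation (justified since $0 \le Y \le M$ makes the integrand and its $\alpha$-derivative uniformly bounded on any interval bounded away from $0$), one obtains
\[
\frac{\partial}{\partial\alpha}\phi(\alpha,\pi) = -\log\E_{\P_0}[\exp(-Y/\alpha)] - \frac{1}{\alpha}\frac{\E_{\P_0}[Y\exp(-Y/\alpha)]}{\E_{\P_0}[\exp(-Y/\alpha)]} - \delta.
\]
The key observation is that this derivative is dominated by $M/\alpha - \delta$ for \emph{every} underlying distribution. Indeed, since $0 \le Y \le M$ we have $\exp(-Y/\alpha) \ge \exp(-M/\alpha)$, hence $\E_{\P_0}[\exp(-Y/\alpha)] \ge \exp(-M/\alpha)$ and therefore $-\log\E_{\P_0}[\exp(-Y/\alpha)] \le M/\alpha$; moreover $Y \ge 0$ makes the middle term non-negative, so dropping it only loosens the bound. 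Combining, $\partial_\alpha\phi(\alpha,\pi) \le M/\alpha - \delta$, and evaluating at $\alpha = \overline{\alpha} = M/\delta$ gives $\partial_\alpha\phi(\overline{\alpha},\pi) \le \delta - \delta = 0$, which is precisely the one-sided condition required.

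The argument is essentially frictionless; the only points needing a little care are the interchange of differentiation and expectation and the use of a population (rather than empirical) version of the concavity statement in Lemma \ref{lma:cvx}, but both are routine under the bounded-reward Assumption \ref{assump:classical}.3. The conclusion $\alpha^*(\pi) \le \overline{\alpha} = M/\delta$ then follows, and it is uniform over $\pi \in \Pi$ since the dominating bound $M/\alpha - \delta$ does not depend on $\pi$.
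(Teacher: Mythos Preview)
Your argument is correct but takes a different route from the paper's. The paper proves the bound by a direct function-value comparison rather than a first-order condition: it observes that the primal value $\inf_{\P\in\mathcal U_{\P_0}(\delta)}\E_{\P}[Y(\pi(X))]\ge \essinf Y(\pi(X))\ge 0$, while the dual objective satisfies $-\alpha\log\E_{\P_0}[\exp(-Y(\pi(X))/\alpha)]-\alpha\delta\le M-\alpha\delta$ (using the same inequality $\E_{\P_0}[\exp(-Y/\alpha)]\ge e^{-M/\alpha}$ that you invoke). Hence any $\alpha>M/\delta$ yields a strictly negative objective, below the optimum, so the maximizer must lie in $[0,M/\delta]$. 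Compared with your approach, the paper's proof is shorter and needs neither the concavity of $\phi$ nor the derivative computation; your approach, on the other hand, is self-contained in that it does not appeal to the primal--dual equality (strong duality) to obtain the nonnegativity of the optimal value. Both are valid, use the same key pointwise bound on the exponential moment, and both give a bound uniform in $\pi$.
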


\begin{proof}[Proof]
First note that $\inf_{\P \in \mathcal{U}_{\P _{0}(\delta )}}\mathbf{E}_{\P }%
\left[ Y(\pi (X))\right] \geq \essinf{Y(\pi(X))}\geq 0$ and
\begin{equation*}
-\alpha \log \mathbf{E}_{\mathbf{P}_{0}}\left[ \exp \left( -Y(\pi (X)\right)
/\alpha \right] -\alpha \delta \leq M-\alpha \delta .
\end{equation*}%
$M-\alpha \delta \geq 0$ gives the upper bound $\alpha ^{\ast }(\pi)\leq
\overline{\alpha }:=M/\delta .$

\end{proof}

\begin{lemma}[Lower bound of $\protect\alpha ^{\ast }(\pi)$]
	\label{lemma:ld} Suppose that Assumption \ref{assump:pos_dens}.1 is
	imposed, we have
$\alpha ^{\ast } (\pi)>0$.
\end{lemma}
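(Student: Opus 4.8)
The plan is to compare the dual objective $\phi(\cdot,\pi)$ at $\alpha=0$, which by the preceding remark equals $\essinf\{Y(\pi(X))\}$, against its supremum over $\alpha>0$, and to show the former is strictly smaller. Since the definition of $\alpha^*(\pi)$ together with the uniform upper bound of Lemma \ref{lemma:upper_bd} guarantees that the maximizer is attained in $[0,\overline{\alpha}]$, ruling out $\alpha=0$ as a maximizer forces $\alpha^*(\pi)>0$. Throughout I would write $Y=Y(\pi(X))$ and $y_0=\essinf\{Y\}$, and recall that for $\alpha>0$ the quantity $\E_{\P_0}[\exp(-Y/\alpha)]\in(0,1]$, so $\phi(\alpha,\pi)$ is finite.

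First I would factor out the essential infimum. Using $\E_{\P_0}[\exp(-Y/\alpha)]=\exp(-y_0/\alpha)\,\E_{\P_0}[\exp(-(Y-y_0)/\alpha)]$, one obtains for every $\alpha>0$
\[
\phi(\alpha,\pi)-y_0 = -\alpha\left(\log \E_{\P_0}\left[\exp\left(-(Y-y_0)/\alpha\right)\right]+\delta\right).
\]
Because $Y-y_0\ge 0$ almost surely, the inequality $\phi(\alpha,\pi)>y_0$ is therefore equivalent to $\E_{\P_0}[\exp(-(Y-y_0)/\alpha)]<e^{-\delta}$. Hence it suffices to exhibit a single $\alpha>0$ at which this strict inequality holds.

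The key step is to control $\E_{\P_0}[\exp(-(Y-y_0)/\alpha)]$ as $\alpha\downarrow 0$. Pointwise one has $\exp(-(Y-y_0)/\alpha)\to\mathbf{1}\{Y=y_0\}$, and the integrand is dominated by $1$, so dominated convergence gives $\E_{\P_0}[\exp(-(Y-y_0)/\alpha)]\to \P_0(Y=y_0)$. I would then invoke Assumption \ref{assump:classical}.4: conditional on $X=x$ the variable $Y(\pi(x))$ admits the density $f_{\pi(x)}(\cdot\mid x)$, so $\P_0(Y(\pi(x))=y_0\mid X=x)=0$ for every $x$; integrating over $X$ yields $\P_0(Y=y_0)=0$, i.e. $Y$ has no atom at $y_0$. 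Since $\delta>0$ we have $e^{-\delta}>0=\P_0(Y=y_0)$, so for all sufficiently small $\alpha>0$ the strict inequality $\E_{\P_0}[\exp(-(Y-y_0)/\alpha)]<e^{-\delta}$ holds, whence $\phi(\alpha,\pi)>y_0=\phi(0,\pi)$. This shows $\alpha=0$ is not a maximizer and completes the argument.

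The main obstacle is the atomlessness claim $\P_0(Y(\pi(X))=y_0)=0$, which is precisely where Assumption \ref{assump:classical}.4 (a uniform positive lower bound on the conditional reward densities) is needed; this is the structural ingredient that separates the $O_p(n^{-1/2})$ regime from the slower rate discussed in Section \ref{sec:extension}. A subtlety I would handle with care is that $Y(\pi(X))$ is a mixture over $X$ of the conditional laws, so rather than working with the (possibly unwieldy) marginal law directly, I would establish the no-atom property at the conditional level and then integrate it out against the law of $X$.
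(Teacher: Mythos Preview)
Your proof is correct, but it takes a different route from the paper's. The paper differentiates: it computes $\partial\phi(\pi,\alpha)/\partial\alpha$ explicitly and shows this derivative tends to $+\infty$ as $\alpha\downarrow 0$, using the lower bound $\underline{b}$ on the density to get $\liminf_{\alpha\to 0}\E[\exp(-Y/\alpha)]/\alpha\ge\underline{b}$ and an upper bound $\overline{b}(\pi)$ on $f_Y$ (asserted via continuity of $f_Y$) to bound $\E[(Y/\alpha)\exp(-Y/\alpha)]$, while the term $-\log\E[\exp(-Y/\alpha)]$ diverges to $+\infty$. You instead work at the level of values: factoring out $y_0=\essinf Y$ and using dominated convergence to obtain $\E[\exp(-(Y-y_0)/\alpha)]\to\P_0(Y=y_0)=0<e^{-\delta}$, which directly yields $\phi(\alpha,\pi)>y_0=\phi(0,\pi)$ for small $\alpha>0$. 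Your argument is more elementary and requires only the atomlessness of $Y(\pi(X))$ at its essential infimum---strictly weaker than the full density lower bound of Assumption~\ref{assump:classical}.4, and it sidesteps the paper's implicit appeal to an upper bound on $f_Y$. The paper's derivative computation, on the other hand, delivers the slightly stronger qualitative statement that the slope at the boundary is infinite rather than merely that the boundary value is suboptimal.
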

\begin{proof}[Proof]
To ease the notation, we abbreviate $Y(\pi (X))$ as $Y.$ It is easy to check
the density of $f_{Y}$ has a lower bound $\underline{b}>0.$ Since $f_{Y}$ is
a continuous function on a compact space, we have $f_{Y}$ is upper bounded.
Let $\overline{b}(\pi)=\sup_{x\in \lbrack 0,M]}f_{Y}(x).$

First, note that $\lim_{\alpha \rightarrow 0}\phi \left( \pi ,\alpha \right)
=0.$ We only need to show $\lim \inf_{\alpha \rightarrow 0}\frac{\partial
\phi \left( \pi ,\alpha \right) }{\partial \alpha }>0.$

The derivative of $\phi \left( \pi ,\alpha \right) $ is given by
\begin{equation*}
\frac{\partial \phi \left( \pi ,\alpha \right) }{\partial \alpha }=-\frac{%
\mathbf{E}\left[ Y/\alpha \exp \left( -Y/\alpha \right) \right] }{\mathbf{E}%
\left[ \exp \left( -Y/\alpha \right) \right] }-\log \left( \mathbf{E}\left[
\exp \left( -Y/\alpha \right) \right] \right) -\delta .
\end{equation*}%
Since $\mathbf{P}_{0 }$ has a continuous density, we have $\log \left(
\mathbf{E}\left[ \exp \left( -Y/\alpha \right) \right] \right) \rightarrow
-\infty .$ Notice that
\begin{equation*}
\mathbf{E}\left[ Y/\alpha \exp \left( -Y/\alpha \right) \right] \leq \alpha
\overline{b}\text{ and }\lim \inf_{\alpha \rightarrow 0}\mathbf{E}\left[
\exp \left( -Y/\alpha \right) \right] /\alpha \geq \underline{b}.
\end{equation*}%
Therefore, we have%
\begin{equation*}
\lim \sup_{\alpha \rightarrow 0}\frac{\mathbf{E}\left[ Y/\alpha \exp \left(
-Y/\alpha \right) \right] }{\mathbf{E}\left[ \exp \left( -Y/\alpha \right) %
\right] }\leq \frac{\overline{b}(\pi)}{\underline{b}}.
\end{equation*}%
Finally, we arrive the desired result
\begin{equation*}
\lim \inf_{\alpha \rightarrow 0}\frac{\partial \phi \left( \pi ,\alpha
\right) }{\partial \alpha }\rightarrow +\infty,
\end{equation*}
which completes the proof.
\end{proof}

\begin{lemma} Suppose Assumption \ref{assump:classical}.1 is enforced. We have pointwise central limit theorem,
\begin{eqnarray*}
\sqrt{n}\left( \hat{W}_{n}(\pi ,\alpha )-\mathbf{E}\left[ W_{i}\left( \pi
,\alpha \right) \right] \right) \Rightarrow \mathcal{N}\left( 0,\mathbf{E}\left[ \frac{1}{\pi _{0}\left( \pi
(X)\mid X\right) } \left( \exp \left( -Y(\pi (X))/\alpha \right)
-\mathbf{E}\left[ \exp \left( -Y(\pi (X))/\alpha \right) \right] \right)
^{2} \right] \right) ,
\end{eqnarray*}%
for any $\pi \in \Pi$ and $\alpha>0$.
\label{lma:CLT_pre}
\end{lemma}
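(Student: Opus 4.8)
The plan is to recognize $\hat{W}_n(\pi,\alpha)$ as a self-normalized (ratio) estimator and linearize it around its mean. Write $S_i \triangleq \mathbf{1}\{\pi(X_i) = A_i\}/\pi_0(A_i\mid X_i)$, so that $S_n^{\pi} = \frac{1}{n}\sum_{i=1}^n S_i$ and $\hat{W}_n(\pi,\alpha) = \bar{W}_n / S_n^{\pi}$ with $\bar{W}_n \triangleq \frac{1}{n}\sum_{i=1}^n W_i(\pi,\alpha)$. By the importance-sampling identity already established in the proof of Lemma \ref{thm:strong_duality} (applied with $f\equiv 1$ and with $f=\exp(-\,\cdot\,/\alpha)$ respectively), one has $\mathbf{E}[S_i] = 1$ and $\mu_W \triangleq \mathbf{E}[W_i(\pi,\alpha)] = \mathbf{E}_{\P_0}[\exp(-Y(\pi(X))/\alpha)]$. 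The crucial algebraic identity is
\[
\sqrt{n}\left(\hat{W}_n(\pi,\alpha) - \mu_W\right) = \frac{1}{S_n^{\pi}}\cdot\frac{1}{\sqrt{n}}\sum_{i=1}^n \big(W_i(\pi,\alpha) - \mu_W S_i\big),
\]
which holds because $\bar{W}_n - \mu_W S_n^{\pi} = S_n^{\pi}\big(\hat{W}_n(\pi,\alpha) - \mu_W\big)$.

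Next I would apply the classical one-dimensional central limit theorem to the iid summands $W_i(\pi,\alpha) - \mu_W S_i$. These have mean zero, since $\mathbf{E}[W_i(\pi,\alpha)] - \mu_W\,\mathbf{E}[S_i] = \mu_W - \mu_W = 0$, and finite variance: by the overlap assumption $1/\pi_0(A_i\mid X_i) \le 1/\eta$, and since $Y\ge 0$ with $\alpha>0$ we have $\exp(-Y_i(A_i)/\alpha)\le 1$, so both $W_i(\pi,\alpha)$ and $S_i$ take values in $[0,1/\eta]$ and are bounded. The CLT then gives $\frac{1}{\sqrt{n}}\sum_{i=1}^n(W_i(\pi,\alpha) - \mu_W S_i) \Rightarrow \mathcal{N}(0,\sigma^2)$ with $\sigma^2 = \mathrm{Var}(W_i(\pi,\alpha) - \mu_W S_i) = \mathbf{E}[(W_i(\pi,\alpha) - \mu_W S_i)^2]$. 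Since $S_n^{\pi}\to \mathbf{E}[S_i]=1$ in probability by the law of large numbers, Slutsky's lemma applied to the displayed identity yields the claimed weak limit with variance $\sigma^2$.

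It remains to simplify $\sigma^2$ to the stated form, which is the one genuinely substantive step. Using $\mathbf{1}\{\pi(X_i)=A_i\}^2 = \mathbf{1}\{\pi(X_i)=A_i\}$ and $Y_i(A_i) = Y_i(\pi(X_i))$ on the event $\{\pi(X_i)=A_i\}$, I would factor
\[
W_i(\pi,\alpha) - \mu_W S_i = \frac{\mathbf{1}\{\pi(X_i)=A_i\}}{\pi_0(A_i\mid X_i)}\Big(\exp(-Y_i(\pi(X_i))/\alpha) - \mu_W\Big).
\]
Squaring and taking expectations, I would condition on $(X_i, Y_i(a^1),\dots,Y_i(a^d))$ and integrate out $A_i$; by unconfoundedness $A_i$ is independent of the rewards given $X_i$, so
\[
\mathbf{E}\!\left[\frac{\mathbf{1}\{\pi(X_i)=A_i\}}{\pi_0(A_i\mid X_i)^2}\,\Big|\,X_i\right] = \frac{1}{\pi_0(\pi(X_i)\mid X_i)},
\]
which collapses the squared importance weight $1/\pi_0^2$ to a single $1/\pi_0$ and produces exactly $\sigma^2 = \mathbf{E}\big[\tfrac{1}{\pi_0(\pi(X)\mid X)}(\exp(-Y(\pi(X))/\alpha) - \mu_W)^2\big]$, matching the statement with $\mu_W = \mathbf{E}[\exp(-Y(\pi(X))/\alpha)]$.

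I expect the variance simplification to be the only delicate point: one must track that the squared weight $1/\pi_0^2$ reduces to $1/\pi_0$ precisely because of the indicator together with unconfoundedness, and that replacing $Y_i(A_i)$ by $Y_i(\pi(X_i))$ is legitimate only on $\{\pi(X_i)=A_i\}$. Everything else—the boundedness giving finite variance, the LLN for $S_n^{\pi}$, and the final Slutsky step—is routine.
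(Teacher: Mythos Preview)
Your proposal is correct and follows essentially the same route as the paper: both write $\hat{W}_n(\pi,\alpha)-\mu_W$ as $(S_n^{\pi})^{-1}$ times the iid average of $W_i(\pi,\alpha)-\mu_W S_i$, apply the one-dimensional CLT to the numerator, the law of large numbers to $S_n^{\pi}$, and finish with Slutsky. Your variance computation---factoring $W_i-\mu_W S_i$ first and then conditioning on $(X_i,Y_i(\cdot))$ to collapse $1/\pi_0^2$ to $1/\pi_0$---is slightly more streamlined than the paper's term-by-term expansion, but the arguments are otherwise identical.
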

\begin{proof}[Proof] After reformulation, we have
\begin{eqnarray*}
\hat{W}_{n}(\pi ,\alpha )-\mathbf{E}\left[ W_{i}\left( \pi ,\alpha \right) %
\right]  &=&\frac{\frac{1}{n}\sum_{i=1}^{n}W_{i}\left( \pi ,\alpha \right)
-\left( \frac{1}{n}\sum_{i=1}^{n}\frac{\mathbf{1}\left\{ \pi
(X_{i})=A_{i}\right\} }{\pi _{0}\left( A_{i}|X_{i}\right) }\right) \mathbf{E}%
\left[ W_{i}\left( \pi ,\alpha \right) \right] }{\frac{1}{n}\sum_{i=1}^{n}%
\frac{\mathbf{1}\left\{ \pi (X_{i})=A_{i}\right\} }{\pi _{0}\left(
A_{i}|X_{i}\right) }} \\
&=&\frac{\frac{1}{n}\sum_{i=1}^{n}\left( W_{i}\left( \pi ,\alpha \right) -%
\frac{\mathbf{1}\left\{ \pi (X_{i})=A_{i}\right\} }{\pi _{0}\left(
A_{i}|X_{i}\right) }\mathbf{E}\left[ W_{i}\left( \pi ,\alpha \right) \right]
\right) }{\frac{1}{n}\sum_{i=1}^{n}\frac{\mathbf{1}\left\{ \pi
(X_{i})=A_{i}\right\} }{\pi _{0}\left( A_{i}|X_{i}\right) }}.
\end{eqnarray*}%
The denominator $\frac{1}{n}\sum_{i=1}^{n}\frac{\mathbf{1}\left\{ \pi
(X_{i})=A_{i}\right\} }{\pi _{0}\left( A_{i}|X_{i}\right) }\overset{p}{%
\rightarrow }1$ and the nominator converges as
\begin{eqnarray*}
&&\frac{1}{\sqrt{n}}\sum_{i=1}^{n}\left( W_{i}\left( \pi ,\alpha \right) -%
\frac{\mathbf{1}\left\{ \pi (X_{i})=A_{i}\right\} }{\pi _{0}\left(
A_{i}|X_{i}\right) }\mathbf{E}\left[ W_{i}\left( \pi ,\alpha \right) \right]
\right)  \\
&\Rightarrow &N(0,\var \left( W_{i}\left( \pi ,\alpha \right) -\frac{%
\mathbf{1}\left\{ \pi (X_{i})=A_{i}\right\} }{\pi _{0}\left(
A_{i}|X_{i}\right) }\mathbf{E}\left[ W_{i}\left( \pi ,\alpha \right) \right]
\right) ,
\end{eqnarray*}%
where
\begin{eqnarray}
&&\mathbf{Var}\left( W_{i}\left( \pi ,\alpha \right) -\frac{\mathbf{1}%
\left\{ \pi (X_{i})=A_{i}\right\} }{\pi _{0}\left( A_{i}|X_{i}\right) }%
\mathbf{E}\left[ W_{i}\left( \pi ,\alpha \right) \right] \right) \notag \\
&=&\mathbf{E}\left[ W_{i}\left( \pi ,\alpha \right) ^{2}\right] -2\mathbf{E}%
\left[ \left( \frac{\mathbf{1}\left\{ \pi (X_{i})=A_{i}\right\} }{\pi
_{0}\left( A_{i}|X_{i}\right) }\right) ^{2}\exp \left( -Y(\pi (X))/\alpha
\right) \right] \mathbf{E}\left[ W_{i}\left( \pi ,\alpha \right) \right]  \notag \\
&&+\mathbf{E}\left[ \frac{\mathbf{1}\left\{ \pi (X_{i})=A_{i}\right\} }{\pi
_{0}\left( A_{i}|X_{i}\right) }\right] ^{2}\mathbf{E}\left[ W_{i}\left( \pi
,\alpha \right) \right] ^{2}  \label{eq:var_new} \\
&=&\mathbf{E}\left[ \frac{1}{\pi _{0}\left( \pi (X)|X\right) }\mathbf{E}%
\left[ \exp \left( -2Y(\pi (X))/\alpha \right) |X\right] \right] -2\mathbf{E}%
\left[ \frac{1}{\pi _{0}\left( \pi (X)|X\right) }\mathbf{E}\left[ \exp
\left( -Y(\pi (X))/\alpha \right) |X\right] \right] \mathbf{E}\left[
W_{i}\left( \pi ,\alpha \right) \right]  \notag \\
&&+\mathbf{E}\left[ \frac{1}{\pi _{0}\left( \pi (X)|X\right) }\right]
\mathbf{E}\left[ \exp \left( -Y(\pi (X))/\alpha \right) \right] ^{2} \notag\\
&=&\mathbf{E}\left[ \frac{1}{\pi _{0}\left( \pi (X)|X\right) }%
\left( \exp \left( -Y(\pi (X))/\alpha \right) -\mathbf{E}\left[ \exp
\left( -Y(\pi (X))/\alpha \right) \right] \right) ^{2} \right]  .  \notag
\end{eqnarray}%
By Slutsky's theorem, the desired result follows.

 \end{proof}
To ease the proofs below, we define $\mathbf{\hat{P}}^\pi_{n}$ as the weighted empirical distribution by
\begin{equation*}
\mathbf{\hat{P}}_{n}^\pi\triangleq\frac{1}{nS_{n}^{\pi }}\sum_{i=1}^{n}\frac{%
\mathbf{1\{}\pi (X_{i})=A_{i}\mathbf{\}}}{\pi _{0}\left( A_{i}|X_{i}\right) }%
\Delta \left\{ Y_{i},X_{i}\right\},
\end{equation*}%
where $\Delta \{\cdot\}$ denote a dirac measure.
\begin{lemma}
\label{lma:discrete_conv}
Suppose  Assumptions \ref{assump:classical}.1 and \ref{assump:pos_dens}.2 are enforced. Then, we have
\[
\lim_{n\rightarrow +\infty }\sup_{v\in \mathbb{D}}\left( |\hat{\P }_{n}^{\pi
}(Y=v)-\P _{0}(Y=v)|\right) =0\text{ almost surely.}
\]
\end{lemma}
\begin{proof}[Proof]
Note that%
\begin{equation*}
\hat{\P}_{n}^{\pi }(Y=v)=\frac{1}{nS_{n}^{\pi }}\sum_{i=1}^{n}\frac{%
\mathbf{1\{}\pi (X_{i})=A_{i}\mathbf{\}1\{}Y_{i}=v\mathbf{\}}}{\pi
_{0}\left( A_{i}|X_{i}\right) },
\end{equation*}%
and by Assumption \ref{assump:classical}.1,
\begin{eqnarray*}
\mathbf{E}\left[ \frac{\mathbf{1\{}\pi (X_{i})=A_{i}\mathbf{\}1\{}Y_{i}=v%
\mathbf{\}}}{\pi _{0}\left( A_{i}|X_{i}\right) }\right]  &=&\mathbf{E}\left[
\frac{\mathbf{E}\left[ \mathbf{1\{}\pi (X_{i})=A_{i}|X\mathbf{\}}\right]
\mathbf{E}\left[ \mathbf{1\{}Y_{i}=v|X\mathbf{\}}\right] }{\pi _{0}\left(
A_{i}|X_{i}\right) }\right]
=\P _{0}(Y=v).
\end{eqnarray*}%
By the law of large number, we have $S_{n}^{\pi }\rightarrow 1$ almost
surely, and
\begin{equation*}
\frac{1}{n}\sum_{i=1}^{n}\frac{\mathbf{1\{}\pi (X_{i})=A_{i}\mathbf{\}1\{}%
Y_{i}=v\mathbf{\}}}{\pi _{0}\left( A_{i}|X_{i}\right) }\rightarrow \P %
_{0}(Y=v)\text{ almost surely.}
\end{equation*}%
By Slutsky's lemma \citep[e.g.][Theorem
1.8.10]{lehmann2006theory}), we have $\hat{\P }_{n}^{\pi }(Y=v)-\P _{0}(Y=v) \rightarrow 0$
almost surely. Since $\mathbb{D}$ is a finite set, we arrive the desired
results.
\end{proof}
\begin{lemma}Suppose  Assumptions \ref{assump:classical}.1 and \ref{assump:pos_dens}.2 are enforced. We have that when $\alpha = 0$,
\[
\lim_{n\rightarrow \infty}\P_\otimes(\hat{\phi}_n(\pi,0) ={\phi}(\pi,0)) = 1,
\]
where  $\P_\otimes$ denotes  the product measure $\prod_{i=1}^{\infty}\P_0$, which is guaranteed to be unique by  the Kolmogorov extension theorem. and thus
\begin{eqnarray*}
\sqrt{n}(\hat{\phi}_n(\pi,0)-{\phi}(\pi,0)) \rightarrow 0 \text{ in probability.}
\end{eqnarray*}%
\label{lma:alpha0}
\end{lemma}
\begin{proof}[Proof]
By Remark \ref{remark:alpha=0}, we have
\[
\hat{\phi}_n(\pi,0) = \essinf \hat{Y}_n^\pi,
\]
where $\hat{Y}_n^\pi$ is the distribution of $Y$ under the measure $\P_n^\pi$. Therefore,
\[
\P_\otimes(\hat{\phi}_n(\pi,0) ={\phi}(\pi,0))=\P_\otimes(\hat{\P}_n^\pi(Y=\min_{v\in \mathbb{D}}{v })>0) \rightarrow 1,
\]
because $\hat{\P}_n^\pi(Y=\min_{v\in \mathbb{D}}{v })\rightarrow \P_0(Y=\min_{v\in \mathbb{D}}{v })$.
\end{proof}

Now we are ready to show the proof of Theorem \ref{DRO_CLT}.

\proof{Proof of Theorem \ref{DRO_CLT}}
We divide the proof into two parts: continuous case and discrete case.

(1) Continuous case, i.e., Assumption 2.1 is satisfied: Note that
\begin{equation*}
\sqrt{n}\left( \hat{W}_{n}(\pi ,\alpha )-\mathbf{E}[W_{i}(\pi ,\alpha
)]\right) \Rightarrow Z_1\left( \alpha \right) ,
\end{equation*}
where by Lemma \ref{lma:CLT_pre}
\begin{equation*}
Z_1\left( \alpha \right) \sim \mathcal{N}\left( 0,\mathbf{E}\left[ \frac{1}{\pi _{0}\left( \pi (X)|X\right) }%
\left( \exp \left( -Y(\pi (X))/\alpha \right) -\mathbf{E}\left[ \exp
\left( -Y(\pi (X))/\alpha \right) \right] \right) ^{2} \right] \right).
\end{equation*}
By Lemma \ref{lemma:ld}, there exists $\underline{\alpha}(\pi)>0$ such that $\alpha^*(\pi) \geq \underline{\alpha}(\pi)$. To ease the notation, we abbreviate $\underline{\alpha}(\pi),\alpha^*(\pi)$ as $\underline{\alpha},\alpha^*$.
Since $\hat{W}_n(\pi ,\alpha )$ is Lipschitz continuous over the set $\alpha \in [\underline{\alpha}/2,2\overline{\alpha}]$, we have
\begin{equation}
\sqrt{n}\left( \hat{W}_{n}(\pi ,\cdot )-\mathbf{E}[W_{i}(\pi ,\cdot
)]\right) \Rightarrow Z_1\left( \cdot \right) ,
\label{eqn:uniform_conv_alpha}
\end{equation}
uniformly in Banach space $\mathcal{C}([\underline{\alpha}/2,2\overline{\alpha}])$ of
continuous functions $\psi :[\underline{\alpha}/2,2\overline{\alpha}]%
\mathcal{\ \rightarrow }\reals$ equipped with the the sup-norm $%
\left\Vert \psi \right\Vert :=\sup_{x\in \lbrack \underline{\alpha}/2,2%
\overline{\alpha} ]}\psi (x)$ \citep[e.g.][Corollary 7.17]{araujo1980central}. $Z_1$ is a random element in $\mathcal{C}([%
\underline{\alpha}/2,2\overline{\alpha}])$.

Define the functionals
\begin{equation*}
G_1(\psi,\alpha )=\alpha \log \left( \psi (\alpha )\right) +\alpha \delta, \text{ and } V_1(\psi )=\inf_{\alpha \in \lbrack \underline{\alpha }/2,2\overline{\alpha }%
]}G_1(\psi,\alpha ),
\end{equation*}%
for $\psi>0$. By the Danskin theorem \citep[Theorem 4.13]{bonnansperturbation}, $%
V_1\left( \cdot \right) $ is directionally differentiable at any $\mu \in
\mathcal{C}([\underline{\alpha }/2,2\overline{\alpha }])$ with $\mu >0$ and
\begin{equation*}
V_{1,\mu }^{\prime }\left( \nu \right) =\inf_{\alpha \in \bar{X}\left( \mu
\right) }\alpha \left( 1/\mu (\alpha )\right) \nu (\alpha ),\text{ }\forall
\nu \in \mathcal{C}([\underline{\alpha }/2,2\overline{\alpha }])\mathcal{)}%
\text{,}
\end{equation*}%
where $\bar{X}\left( \mu \right) =\arg \min_{\alpha \in \lbrack \underline{%
\alpha }/2,2\overline{\alpha }])}\alpha \log \left( \mu (\alpha)\right)
+\alpha \delta $ and $V_{1,\mu }^{\prime }\left( \nu \right) $ is the
directional derivative of $V_1\left( \cdot \right) $ at $\mu $ in the
direction of $\nu .$ On the other hand, $V_1(\psi )$ is Lipschitz continuous
if $\psi \left( \cdot \right) $ is bounded away from zero. Notice that
\begin{equation}
\mathbf{E}[W_{i}(\pi ,\alpha )]=\mathbf{E}[\exp \left( -Y(\pi (X))/\alpha
\right) ]\geq \exp \left( -2M/\underline{\alpha }\right).
\label{E_ld}
\end{equation}%
Therefore, $V_1\left( \cdot \right) $ is Hadamard directionally differentiable
at $\mu =\mathbf{E}[W_{i}(\pi ,\cdot )]$ (see, for example, Proposition 7.57
in \cite{shapiro2009lectures}). By the Delta theorem (Theorem 7.59 in \cite%
{shapiro2009lectures}), we have
\begin{equation*}
\sqrt{n}\left( V_1(\hat{W}_{n}(\pi ,\cdot ))-V_1(\mathbf{E}[W_{i}(\pi ,\cdot
)])\right) \Rightarrow V_{1,\mathbf{E}[W_{i}(\pi ,\cdot )]}^{\prime }\left(
Z_1\right) .
\end{equation*}%
Furthermore, we know that $\log \left( \mathbf{E}\left( \exp \left( -\beta
Y\right) \right) \right) $ is strictly convex w.r.t $\beta $ given $\mathbf{%
Var}\left( Y\right) >0$ and $xf(1/x)$ is strictly convex if $f(x)$ is
strictly convex. Therefore, $\alpha \log \left( \mathbf{E}[W_{i}(\pi ,\alpha
)]\right) +\alpha \delta $ is strictly convex for $\alpha >0$ and thus
\begin{eqnarray*}
&&V_{1,\mathbf{E}[W_{i}(\pi ,\cdot )]}^{\prime }\left( Z_1\right) =\alpha ^{\ast
}\left( 1/\mathbf{E}[W_{i}(\pi ,\alpha ^{\ast })]\right) Z_1\left( \alpha
^{\ast }\right) \\
&\overset{d}{=}&\mathcal{N}\left( 0,\frac{\left( \alpha ^{\ast }\right)
^{2}}{ \mathbf{E}\left[ W_{i}(\pi ,\alpha ^{\ast })\right] ^{2}}\mathbf{E}\left[ \frac{1}{\pi _{0}\left( \pi (X)|X\right) }%
\left( \exp \left( -Y(\pi (X))/\alpha \right) -\mathbf{E}\left[ \exp
\left( -Y(\pi (X))/\alpha \right) \right] \right) ^{2} \right] \right),
\end{eqnarray*}
where $\overset{d}{=}$ denotes equality in distribution.
By Lemma \ref{thm:strong_duality}, we have that
\begin{equation*}
\hat{Q}_{\mathrm{\rm DRO}}(\pi )=-\inf_{\alpha \geq 0}\left( \alpha \log \left(
\hat{W}_{n}(\pi ,\alpha )\right) +\alpha \delta \right) \text{,}
\end{equation*}
and
\begin{equation*}
Q_{\mathrm{\rm DRO}}(\pi )=-\inf_{\alpha \geq 0}\left( \alpha \log \left(
\mathbf{E}[W_{i}(\pi ,\alpha )]\right) +\alpha \delta \right) =-V_1(\mathbf{E}
[W_{i}(\pi ,\alpha )]).
\end{equation*}

We remain to show $\mathbf{P}$($\hat{Q}_{\mathrm{\rm DRO}}(\pi )\neq -V_1(\hat{W}%
_{n}(\pi ,\alpha )))\rightarrow 0$, as $n\rightarrow \infty .$  The weak convergence \eqref{eqn:uniform_conv_alpha} also implies the uniform convergence,%
\begin{equation*}
\sup_{\alpha \in \lbrack \underline{\alpha }/2,2\overline{\alpha }%
])}\left\vert \hat{W}_{n}(\pi ,\alpha )-\mathbf{E}[W_{i}(\pi ,\alpha
)]\right\vert \rightarrow 0\text{ a.s..}
\end{equation*}%
Therefore, we further have
\begin{equation*}
\sup_{\alpha \in \lbrack \underline{\alpha }/2,2\overline{\alpha }%
])}\left\vert \left( \alpha \log \left( \hat{W}_{n}(\pi ,\alpha )\right)
+\alpha \delta \right) -\left( \alpha \log \left( \mathbf{E}[W_{i}(\pi
,\alpha )]\right) +\alpha \delta \right) \right\vert \rightarrow 0\text{ a.s.%
}
\end{equation*}%
given $\mathbf{E}[W_{i}(\pi ,\alpha )]$ is bounded away from zero in \eqref{E_ld}.  Let
\begin{equation*}
\epsilon =\min \left\{ \underline{\alpha }/2\log \left( \mathbf{E}%
[W_{i}(\pi ,\underline{\alpha }/2)]\right) +\underline{\alpha }\delta /2,2%
\overline{\alpha }\log \left( \mathbf{E}[W_{i}(\pi ,2\overline{\alpha }%
)]\right) +2\overline{\alpha }\delta \right\} -\left( \alpha ^{\ast }\log
\left( \mathbf{E}[W_{i}(\pi ,\alpha ^{\ast })]\right) +\alpha ^{\ast }\delta
\right)>0 .
\end{equation*}%
Then, given the event
\begin{equation*}
\left\{ \sup_{\alpha \in \lbrack \underline{\alpha }/2,2\overline{\alpha }%
])}\left\vert \left( \alpha \log \left( \hat{W}_{n}(\pi ,\alpha )\right)
+\alpha \delta \right) -\left( \alpha \log \left( \mathbf{E}[W_{i}(\pi
,\alpha )]\right) +\alpha \delta \right) \right\vert <\epsilon /2\right\} ,
\end{equation*}%
we have
\begin{equation*}
\alpha ^{\ast }\log \left( \hat{W}_{n}(\pi ,\alpha )\right) +\alpha ^{\ast
}\delta <\min \left\{ \underline{\alpha }/2\log \left( \hat{W}_{n}(\pi ,%
\underline{\alpha }/2)\right) +\underline{\alpha }\delta /2,2\overline{%
\alpha }\log \left( \hat{W}_{n}(\pi ,2\overline{\alpha })\right) +2\overline{%
\alpha }\delta \right\} ,
\end{equation*}%
which means $\hat{Q}_{\mathrm{\rm DRO}}(\pi )=-V_1(\hat{W}_{n}(\pi ,\alpha ))$ by
the convexity of $\alpha \log \left( \hat{W}_{n}(\pi ,\alpha )\right)
+\alpha \delta .$

Finally, we complete the proof by Slutsky's lemma \citep[e.g.][Theorem 1.8.10]{lehmann2006theory}).:%
\begin{eqnarray*}
\sqrt{n}\left( \hat{Q}_{\mathrm{\rm DRO}}(\pi )-Q_{\mathrm{\rm DRO}}(\pi )\right)
&=&\sqrt{n}\left( \hat{Q}_{\mathrm{\rm DRO}}(\pi )+V_1(\hat{W}_{n}(\pi ,\alpha
))\right) +\sqrt{n}\left( V_1(\mathbf{E}[W_{i}(\pi ,\alpha )])-V_1(\hat{W}%
_{n}(\pi ,\alpha ))\right)  \\
&\Rightarrow &0+\mathcal{N}\left( 0,\sigma ^{2}(\alpha^* ) \right)
\overset{d}{=}\mathcal{N}\left( 0,\sigma ^{2}(\alpha^* ) \right),
\end{eqnarray*}
where \[\sigma ^{2}(\alpha )=\frac{ \alpha^{2} }{\E\left[ W_i(\pi, \alpha)\right]^2} \mathbf{E}\left[ \frac{1}{\pi _{0}\left( \pi
(X)|X\right) } \left( \exp \left( -Y(\pi (X))/\alpha \right)
-\mathbf{E}\left[ \exp \left( -Y(\pi (X))/\alpha \right) \right] \right)
^{2} \right]. \]

(2) Discrete case, i.e., Assumption 2.2 is satisfied: {First, if $\mathbf{Var}(Y(\pi(X)))=0$, we have $\hat{Q}_{\rm DRO}(\pi) = {Q}_{\rm DRO}(\pi) = Y$ almost surely with $\alpha^*(\pi) = 0$. Therefore, in the following, we focus on the case $\mathbf{Var}(Y(\pi(X)))>0$}. Without loss of generality, we
assume the smallest element in the set $\mathbb{D}$ is zero since we can
always translate $Y$. Note that%
\begin{equation*}
\sqrt{n}\left( \hat{W}_{n}(\pi ,\alpha )-\mathbf{E}[W_{i}(\pi ,\alpha
)]\right) \Rightarrow Z_1\left( \alpha \right) ,
\end{equation*}
for $\alpha >0$, 
and
\begin{equation*}
\sqrt{n}\left( \hat{W}_{n}(\pi ,0 )-\mathbf{E}[W_{i}(\pi ,0
)]\right) \Rightarrow  \mathcal{N}\left( 0,\mathbf{E}\left[ \frac{1}{\pi _{0}\left( \pi
(X)\mid X\right) } \left( 1 - \P[Y=0]\right)
^{2} \right] \right).
\end{equation*}
Further, in the discrete case, we have
\begin{equation}
\mathbf{E}[W_{i}(\pi ,\alpha )]=\mathbf{E}[\exp \left( -Y(\pi (X))/\alpha
\right) ]\geq \underline{b} \text{  for } \alpha \geq 0. 
\label{E_ld2}
\end{equation}%

Since $\mathbf{Var}(Y)>0$, the proof of continuous case shows that $\phi(\pi,\alpha)$ is strictly concave for $\alpha >0$. 
Then, $\phi(\pi,\alpha)$ has a unique maximizer in $[0,\overline{\alpha}]$.
The remaining proof is the same as the continuous case.
\endproof
\subsection{Proof of the statistical performance guarantee in Section \ref{sec:uniform_convergence}}
\label{sec:proof_uniform}
\begin{proof}[Proof of Lemma \ref{lma:discrete_key}]
For any probability measure $\mathbf{P}_{1}$ supported on $\mathbb{D}$%
, we have
\[
\sup_{\alpha \geq 0}\left\{ -\alpha \log \mathbf{E}_{\mathbf{P}_{1}}\left[
\exp \left( -Y/\alpha \right) \right] -\alpha \delta \right\} =\sup_{\alpha
\geq 0}\left\{ -\alpha \log \left( \sum_{d\in \mathbb{D}}\left[ \exp \left(
-d/\alpha \right) \mathbf{P}_{1}\left( Y=d\right) \right] \right) -\alpha
\delta \right\} .
\]%
Therefore, we have
\begin{eqnarray}
&&\left\vert \sup_{\alpha \geq 0}\left\{ -\alpha \log \mathbf{E}_{\mathbf{P}%
_{1}}\left[ \exp \left( -Y/\alpha \right) \right] -\alpha \delta \right\}
-\sup_{\alpha \geq 0}\left\{ -\alpha \log \mathbf{E}_{\mathbf{P}_{2}}\left[
\exp \left( -Y/\alpha \right) \right] -\alpha \delta \right\} \right\vert
\nonumber \\
&\leq &\sup_{\alpha \geq 0}\left\vert \alpha \log \left( \frac{\sum_{d\in
\mathbb{D}}\left[ \exp \left( -d/\alpha \right) \mathbf{P}_{1}\left(
Y=d\right) \right] }{\sum_{d\in \mathbb{D}}\left[ \exp \left( -d/\alpha
\right) \mathbf{P}_{2}\left( Y=d\right) \right] }\right) \right\vert
\label{eqn:discrete_log} \\
&=&\sup_{\alpha \geq 0}\left\vert \alpha \log \left( \frac{\sum_{d\in
\mathbb{D}}\left[ \exp \left( -d/\alpha \right) \left( \mathbf{P}_{1}\left(
Y=d\right) -\mathbf{P}_{2}\left( Y=d\right) \right) \right] }{\sum_{d\in
\mathbb{D}}\left[ \exp \left( -d/\alpha \right) \mathbf{P}_{2}\left(
Y=d\right) \right] }+1\right) \right\vert .  \nonumber
\end{eqnarray}%
Since we can always divide the denominator and nominator of (\ref%
{eqn:discrete_log}) by $\exp \left( -\min_{d\in \mathbb{D}}d/\alpha \right) ,
$ we can assume $\min_{d\in \mathbb{D}}d=0$ without loss of generality.
Therefore, we have
\[
\sum_{d\in \mathbb{D}}\left[ \exp \left( -d/\alpha \right) \mathbf{P}%
_{2}\left( Y=d\right) \right] =\mathbf{P}_{2}\left( Y=0\right) +\sum_{d\in
\mathbb{D},d\neq 0}\left[ \exp \left( -d/\alpha \right) \mathbf{P}_{2}\left(
Y=d\right) \right] \geq \underline{b}.
\]%
Then, if $\left\vert \sum_{d\in \mathbb{D}}\left[ \exp \left( -d/\alpha
\right) \left( \mathbf{P}_{1}\left( Y=d\right) -\mathbf{P}_{2}\left(
Y=d\right) \right) \right] \right\vert <\underline{b}/2,$ we have
\begin{eqnarray}
&&\sup_{\alpha \geq 0}\left\vert \alpha \log \left( \frac{\sum_{d\in \mathbb{%
D}}\left[ \exp \left( -d/\alpha \right) \left( \mathbf{P}_{1}\left(
Y=d\right) -\mathbf{P}_{2}\left( Y=d\right) \right) \right] }{\sum_{d\in
\mathbb{D}}\left[ \exp \left( -d/\alpha \right) \mathbf{P}_{2}\left(
Y=d\right) \right] }+1\right) \right\vert   \nonumber \\
&\leq &2\sup_{\alpha \geq 0}\left\{ \alpha \frac{\left\vert \sum_{d\in
\mathbb{D}}\left[ \exp \left( -d/\alpha \right) \left( \mathbf{P}_{1}\left(
Y=d\right) -\mathbf{P}_{2}\left( Y=d\right) \right) \right] \right\vert }{%
\sum_{d\in \mathbb{D}}\left[ \exp \left( -d/\alpha \right) \mathbf{P}%
_{2}\left( Y=d\right) \right] }\right\}   \nonumber \\
&\leq &\frac{2}{\underline{b}}\sup_{\alpha \geq 0}\left\{ \alpha \left\vert
\sum_{d\in \mathbb{D}}\left[ \exp \left( -d/\alpha \right) \left( \mathbf{P}%
_{1}\left( Y=d\right) -\mathbf{P}_{2}\left( Y=d\right) \right) \right]
\right\vert \right\} .  \label{eqn:bd_discrete_log}
\end{eqnarray}
Then, we turn to (\ref{eqn:bd_discrete_log}),
\begin{eqnarray*}
&&\left\vert \sum_{d\in \mathbb{D}}\left[ \exp \left( -d/\alpha \right)
\left( \mathbf{P}_{1}\left( Y=d\right) -\mathbf{P}_{2}\left( Y=d\right)
\right) \right] \right\vert  \\
&=&\left\vert \left( \sum_{\substack{d: \mathbf{P}_{1}\left( Y=d\right)
\\  \geq\mathbf{P}_{2}\left( Y=d\right) }}\left[ e^{-d/\alpha }\left( \mathbf{P}%
_{1}\left( Y=d\right) -\mathbf{P}_{2}\left( Y=d\right) \right) \right]
\right) -\right. \left. \left( \sum_{\substack{d: \mathbf{P}_{1}\left(
Y=d\right)  \\ <\mathbf{P}_{2}\left( Y=d\right) }}\left[ e^{-d/\alpha
}\left( \mathbf{P}_{2}\left( Y=d\right) -\mathbf{P}_{1}\left( Y=d\right)
\right) \right] \right) \right\vert ,
\end{eqnarray*}%
which can be bounded by
\begin{align*}
&\left( \sum_{\substack{d: \mathbf{P}_{1}\left( Y=d\right)  \\ \geq \mathbf{P}%
_{2}\left( Y=d\right) }}e^{-M/\alpha }\left( \mathbf{P}_{1}\left( Y=d\right)
-\mathbf{P}_{2}\left( Y=d\right) \right) \right) -\left( \sum_{\substack{
d:\mathbf{P}_{1}\left( Y=d\right)  \\ <\mathbf{P}_{2}\left( Y=d\right) }}%
\left( \mathbf{P}_{2}\left( Y=d\right) -\mathbf{P}_{1}\left( Y=d\right)
\right) \right)  \\
&\leq \left( \sum_{\substack{d: \mathbf{P}_{1}\left( Y=d\right)   \\
\geq\mathbf{P}_{2}\left( Y=d\right) }}\left[ e^{-d/\alpha }\left( \mathbf{P}%
_{1}\left( Y=d\right) -\mathbf{P}_{2}\left( Y=d\right) \right) \right]
\right) -\left( \sum_{\substack{d: \mathbf{P}_{1}\left( Y=d\right)  \\ <%
\mathbf{P}_{2}\left( Y=d\right) }}\left[ e^{-d/\alpha }\left( \mathbf{P}%
_{2}\left( Y=d\right) -\mathbf{P}_{1}\left( Y=d\right) \right) \right]
\right)  \\
&\leq \left( \sum_{\substack{d: \mathbf{P}_{1}\left( Y=d\right)  \\ \geq
\mathbf{P}_{2}\left( Y=d\right) }}\left( \mathbf{P}_{1}\left( Y=d\right) -%
\mathbf{P}_{2}\left( Y=d\right) \right) \right) -\left( \sum_{\substack{
d:\mathbf{P}_{1}\left( Y=d\right)  \\  <\mathbf{P}_{2}\left( Y=d\right) }}\left[
e^{-M/\alpha }\left( \mathbf{P}_{2}\left( Y=d\right) -\mathbf{P}_{1}\left(
Y=d\right) \right) \right] \right) .
\end{align*}
Further, we observe that
\begin{eqnarray*}
\mathrm{TV}(\mathbf{P}_{1},\mathbf{P}_{2}) &=&\sum_{d:\mathbf{P}_{1}\left( Y=d\right)
\geq \mathbf{P}_{2}\left( Y=d\right) }\left( \mathbf{P}_{1}\left( Y=d\right)
-\mathbf{P}_{2}\left( Y=d\right) \right)  \\
&=&\sum_{d:\mathbf{P}_{1}\left( Y=d\right) <\mathbf{P}_{2}\left( Y=d\right)
}\left( \mathbf{P}_{2}\left( Y=d\right) -\mathbf{P}_{1}\left( Y=d\right)
\right) ,
\end{eqnarray*}%
where $\mathrm{TV}(\mathbf{P}_{1},\mathbf{P}_{2})$ denotes the total variation
between $\mathbf{P}_{1}$ and $\mathbf{P}_{2}.$ Therefore, we have \
\begin{eqnarray*}
&&\sup_{\alpha \geq 0}\left\{ \alpha \left\vert \sum_{d\in \mathbb{D}}\left[
\exp \left( -d/\alpha \right) \left( \mathbf{P}_{1}\left( Y=d\right) -%
\mathbf{P}_{2}\left( Y=d\right) \right) \right] \right\vert \right\}  \\
&\leq &\sup_{\alpha \geq 0}\left\{ \alpha (1-\exp \left( -M/\alpha \right)
)\right\} \mathrm{TV}(\mathbf{P}_{1},\mathbf{P}_{2}).
\end{eqnarray*}%
And it is easy to verify that
\[
\sup_{\alpha \geq 0}\left\{ \alpha (1-\exp \left( -M/\alpha \right)
)\right\} \leq M.
\]%
By combining all the above together, we have when $\mathrm{TV}(\mathbf{P}_{1},\mathbf{%
P}_{2})<\underline{b}/2,$
\begin{eqnarray*}
&&\left\vert \sup_{\alpha \geq 0}\left\{ -\alpha \log \mathbf{E}_{\mathbf{P}%
_{1}}\left[ \exp \left( -Y/\alpha \right) \right] -\alpha \delta \right\}
-\sup_{\alpha \geq 0}\left\{ -\alpha \log \mathbf{E}_{\mathbf{P}_{2}}\left[
\exp \left( -Y/\alpha \right) \right] -\alpha \delta \right\} \right\vert  \\
&\leq &\frac{2M}{\underline{b}}\mathrm{TV}(\mathbf{P}_{1},\mathbf{P}_{2}).
\end{eqnarray*}
\end{proof}
We first give the proof of Lemma \ref{lma:quantile}.

\begin{proof}[Proof of Lemma \ref{lma:quantile}]
Notice that
\begin{eqnarray*}
&&\left\vert \sup_{\alpha \geq 0}\left\{ -\alpha \log \mathbf{E}_{\mathbf{P}%
_{1}}\left\{ \exp \left( -Y/\alpha \right) \right\} -\alpha \delta \right\}
-\sup_{\alpha \geq 0}\left\{ -\alpha \log \mathbf{E}_{\mathbf{P}_{2}}\left\{
\exp \left( -Y/\alpha \right) \right\} -\alpha \delta \right\} \right\vert
\\
&\leq &\sup_{\alpha \geq 0}\left\vert \alpha \log \mathbf{E}_{\mathbf{P}_{1}}%
\left[ \exp \left( -Y/\alpha \right) \right] -\alpha \log \mathbf{E}_{%
\mathbf{P}_{2}}\left[ \exp \left( -Y/\alpha \right) \right] \right\vert  \\
&=&\sup_{\alpha \geq 0}\left\vert \alpha \log \mathbf{E}_{\mathbf{P}_{U}}%
\left[ \exp \left( -q_{\mathbf{P}_{1}}\left( U\right) /\alpha \right) \right]
-\alpha \log \mathbf{E}_{\mathbf{P}_{U}}\left[ \exp \left( -q_{\mathbf{P}%
_{2}}\left( U\right) /\alpha \right) \right] \right\vert ,
\end{eqnarray*}%
where $\mathbf{P}_{U}\sim U([0,1])$ and the last equality is based on the
fact that $q_{\mathbf{P}}\left( U\right) \overset{d}{=}\mathbf{P.}$

Denote $T=\sup_{t\in \lbrack 0,1]}\left\vert q_{\mathbf{P}_{1}}\left(
t\right) -q_{\mathbf{P}_{2}}\left( t\right) \right\vert $ and we have%
\begin{eqnarray*}
&&\alpha \log \mathbf{E}_{\mathbf{P}_{U}}\left[ \exp \left( -q_{\mathbf{P}%
_{1}}\left( U\right) /\alpha \right) \right] -\alpha \log \left[ \mathbf{E}_{%
\mathbf{P}_{U}}\exp \left( -q_{\mathbf{P}_{2}}\left( U\right) /\alpha
\right) \right]  \\
&=&\alpha \log \left[ \int_{0}^{1}\exp \left( -q_{\mathbf{P}_{1}}\left(
u\right) /\alpha \right) {\rm d}u\right] -\alpha \log \left[ \int_{0}^{1}\exp
\left( -q_{\mathbf{P}_{2}}\left( u\right) /\alpha \right) {\rm d}u\right]  \\
&\leq &\alpha \log \left[ \int_{0}^{1}\exp \left( -q_{\mathbf{P}_{2}}\left(
u\right) /\alpha \right) \exp \left( T/\alpha \right) {\rm d}u\right] -\alpha \log %
\left[ \int_{0}^{1}\exp \left( -q_{\mathbf{P}_{2}}\left( u\right) /\alpha
\right) {\rm d}u\right]  \\
&=&T.
\end{eqnarray*}%
Similarly, we have
\begin{equation*}
\alpha \log \left[ \mathbf{E}_{\mathbf{P}_{U}}\exp \left( -q_{\mathbf{P}%
_{2}}\left( U\right) /\alpha \right) \right] -\alpha \log \left[\mathbf{E}_{%
\mathbf{P}_{U}} \exp \left( -q_{\mathbf{P}_{1}}\left( U\right) /\alpha
\right) \right] \leq T.
\end{equation*}%
The desired result then follows.

\end{proof}

\begin{lemma}
\label{lma:qtl_bd}
Suppose probability measures $\mathbf{P}_{1}$ and $\mathbf{P}_{2}$ supported
on $[0,M]$ and $\mathbf{P}_{1}$ has a positive density $f_{\mathbf{P}%
_{1}}\left( \cdot \right) $ with a lower bound $f_{\mathbf{P}_{1}}\geq
\underline{b}$ over the interval $[0,M].$ Then, we have
\begin{equation*}
\sup_{t\in \lbrack 0,1]}\left\vert q_{\mathbf{P}_{1}}\left( t\right) -q_{%
\mathbf{P}_{2}}\left( t\right) \right\vert \leq \frac{1}{\underline{b}}%
\sup_{x\in \lbrack 0,M]}\left\vert F_{\mathbf{P}_{1}}\left( x\right) -F_{%
\mathbf{P}_{2}}\left( x\right) \right\vert .
\end{equation*}
\end{lemma}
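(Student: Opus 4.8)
The plan is to exploit the fact that the lower bound $\underline{b}$ on the density of $\mathbf{P}_1$ forces its CDF $F_{\mathbf{P}_1}$ to grow at least linearly, and then to convert a gap between the two quantile functions at a level $t$ into a gap between the two CDFs of comparable size. Write $D\triangleq\sup_{x\in[0,M]}\lvert F_{\mathbf{P}_1}(x)-F_{\mathbf{P}_2}(x)\rvert$ for the Kolmogorov distance on the right-hand side. The first observation I would record is that since $f_{\mathbf{P}_1}\geq\underline{b}$ on $[0,M]$, for any $0\le x\le y\le M$ we have $F_{\mathbf{P}_1}(y)-F_{\mathbf{P}_1}(x)=\int_x^y f_{\mathbf{P}_1}(s)\,{\rm d}s\ge\underline{b}(y-x)$; in particular $F_{\mathbf{P}_1}$ is continuous and strictly increasing on $[0,M]$, so for every $t\in(0,1)$ there is a unique point with $F_{\mathbf{P}_1}(q_{\mathbf{P}_1}(t))=t$.

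Next I would fix $t\in(0,1)$, set $x_1=q_{\mathbf{P}_1}(t)$ and $x_2=q_{\mathbf{P}_2}(t)$, and aim to show $\lvert x_1-x_2\rvert\le D/\underline{b}$. The crucial elementary fact about the generalized inverse is the equivalence $q_{\mathbf{P}}(t)\le x\Leftrightarrow t\le F_{\mathbf{P}}(x)$, valid because a CDF is right-continuous, or equivalently $F_{\mathbf{P}}(x)<t\Leftrightarrow x<q_{\mathbf{P}}(t)$. Suppose first $x_2>x_1$. Then for every $x\in[x_1,x_2)$ we have $x<q_{\mathbf{P}_2}(t)$, hence $F_{\mathbf{P}_2}(x)<t=F_{\mathbf{P}_1}(x_1)$, and combining this with the linear-growth bound gives $F_{\mathbf{P}_1}(x)-F_{\mathbf{P}_2}(x)>F_{\mathbf{P}_1}(x)-F_{\mathbf{P}_1}(x_1)\ge\underline{b}(x-x_1)$; letting $x\uparrow x_2$ yields $D\ge\underline{b}(x_2-x_1)$. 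The case $x_1>x_2$ is symmetric: for $x\in[x_2,x_1)$ one has $x<q_{\mathbf{P}_1}(t)$ and $x\ge q_{\mathbf{P}_2}(t)$, so $F_{\mathbf{P}_1}(x)<t\le F_{\mathbf{P}_2}(x)$, whence $F_{\mathbf{P}_2}(x)-F_{\mathbf{P}_1}(x)\ge F_{\mathbf{P}_1}(x_1)-F_{\mathbf{P}_1}(x)\ge\underline{b}(x_1-x)$, and letting $x\downarrow x_2$ gives $D\ge\underline{b}(x_1-x_2)$. In either case $\lvert x_1-x_2\rvert\le D/\underline{b}$.

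Taking the supremum over $t\in(0,1)$ then delivers the claimed inequality. The boundary value $t=1$ is recovered by left-continuity of both quantile functions, so that $q_{\mathbf{P}_j}(1)=\lim_{t\uparrow1}q_{\mathbf{P}_j}(t)$ and the bound passes to the limit, while $t=0$ is handled by the usual convention; thus taking the supremum over the closed interval $[0,1]$ introduces nothing new. I do not expect a serious obstacle, as the argument is short, but the step needing the most care is the bookkeeping of strict versus non-strict inequalities and the correct use of the Galois equivalence for the generalized inverse, together with confirming $F_{\mathbf{P}_1}(q_{\mathbf{P}_1}(t))=t$ exactly, which relies on the continuity of $F_{\mathbf{P}_1}$ guaranteed by the assumed density. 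It is also worth verifying explicitly that only the one-sided lower bound on the density of $\mathbf{P}_1$ is used and that no regularity of $\mathbf{P}_2$ is invoked, matching the asymmetric hypotheses of the statement.
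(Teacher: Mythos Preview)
Your proposal is correct and follows essentially the same approach as the paper: fix $t$, set $x_1=q_{\mathbf{P}_1}(t)$ and $x_2=q_{\mathbf{P}_2}(t)$, use the linear growth $F_{\mathbf{P}_1}(y)-F_{\mathbf{P}_1}(x)\ge\underline{b}(y-x)$ together with the Galois relation between $F$ and $q$ to trap $|x_1-x_2|$ by a CDF gap at a suitable point. The only cosmetic difference is that the paper evaluates the CDF difference directly at $x_2$ (in one case via a sequence $x^{(n)}\uparrow x_2$ to handle a possible jump of $F_{\mathbf{P}_2}$), whereas you prove a pointwise inequality on the interval between $x_1$ and $x_2$ and pass to the limit; both routes are equivalent and require exactly the same care with strict versus non-strict inequalities that you flag.
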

\begin{proof}[Proof]
For ease of notation, let $x_{1}=q_{\mathbf{P}_{1}}\left( t\right) $ and $%
x_{2}=q_{\mathbf{P}_{2}}\left( t\right) .$ Since the distribution is
right-continuous with left limits and $\mathbf{P}_{1}$ is continuous, we
have
\begin{equation*}
F_{\mathbf{P}_{2}}\left( x_{2}-\right) \leq t,F_{\mathbf{P}_{2}}\left(
x_{2}\right) \geq t,\text{ and }F_{\mathbf{P}_{1}}\left( x_{1}\right) =t.
\end{equation*}%
If $x_{1}\geq x_{2},$ we have%
\begin{eqnarray*}
x_{1}-x_{2} &\leq &\frac{1}{\underline{b}}\left( F_{\mathbf{P}_{1}}\left(
x_{1}\right) -F_{\mathbf{P}_{1}}\left( x_{2}\right) \right)  \\
&\leq &\frac{1}{\underline{b}}\left( \left( F_{\mathbf{P}_{1}}\left(
x_{1}\right) -F_{\mathbf{P}_{1}}\left( x_{2}\right) \right) +\left( F_{%
\mathbf{P}_{2}}\left( x_{2}\right) -F_{\mathbf{P}_{1}}\left( x_{1}\right)
\right) \right)  \\
&=&\frac{1}{\underline{b}}\left( F_{\mathbf{P}_{2}}\left( x_{2}\right) -F_{%
\mathbf{P}_{1}}\left( x_{2}\right) \right) .
\end{eqnarray*}%
If $x_{1}<x_{2},$ we construct a monotone increasing sequence $%
x^{(1)},x^{(2)},\ldots $ with $x^{(n)}\uparrow x_{2}.$ Since $\mathbf{P}_{1}$
is continuous, we have  $F_{\mathbf{P}_{1}}\left( x^{(n)}\right) \uparrow F_{%
\mathbf{P}_{1}}\left( x_{2}\right) .$ Then, notice that
\begin{eqnarray*}
x_{2}-x_{1} &\leq &\frac{1}{\underline{b}}\left( F_{\mathbf{P}_{1}}\left(
x_{2}\right) -F_{\mathbf{P}_{1}}\left( x_{1}\right) \right)  \\
&\leq &\frac{1}{\underline{b}}\left( \left( F_{\mathbf{P}_{1}}\left(
x_{2}\right) -F_{\mathbf{P}_{1}}\left( x_{1}\right) \right) +\left( F_{%
\mathbf{P}_{1}}\left( x_{1}\right) -F_{\mathbf{P}_{2}}\left( x^{(n)}\right)
\right) \right)  \\
&=&\frac{1}{\underline{b}}\left( F_{\mathbf{P}_{1}}\left( x_{2}\right) -F_{%
\mathbf{P}_{2}}\left( x^{(n)}\right) \right)  \\
&=&\lim_{n\rightarrow \infty }\frac{1}{\underline{b}}\left( F_{\mathbf{P}%
_{1}}\left( x^{(n)}\right) -F_{\mathbf{P}_{2}}\left( x^{(n)}\right) \right) .
\end{eqnarray*}%
Therefore, for every $t$, we have
\begin{equation*}
\left\vert q_{\mathbf{P}_{1}}\left( t\right) -q_{%
\mathbf{P}_{2}}\left( t\right) \right\vert=\left\vert x_{1}-x_{2}\right\vert \leq \frac{1}{\underline{b}}\sup_{x\in
\lbrack 0,M]}\left\vert F_{\mathbf{P}_{1}}\left( x\right) -F_{\mathbf{P}%
_{2}}\left( x\right) \right\vert .
\end{equation*}%
The desired results then follows.
\end{proof}

By utilizing Lemmas \ref{lma:quantile}, \ref{lma:discrete_key} and \ref{lma:qtl_bd}, we are ready to
prove Theorem \ref{DRO_Uniform}.

\begin{proof}[Proof of Theorem \ref{DRO_Uniform}]
Recall $R_{\rm DRO}(\hat{\pi}_{\rm DRO})=Q_{\rm DRO}(\pi ^{\ast}_{\rm DRO})-Q_{\rm DRO}(\hat{\pi}_{\rm DRO}).$ Then,
\begin{eqnarray*}
&&R_{\rm DRO}(\hat{\pi}_{\rm DRO}) \\
&=&Q_{\rm DRO}(\pi ^{\ast}_{\rm DRO})-\hat{Q}_{\rm DRO}(\hat{\pi}_{\rm DRO})+\hat{Q}_{\rm DRO}(\hat{\pi}_{\rm DRO}%
)-Q_{\rm DRO}(\hat{\pi}_{\rm DRO}) \\
&\leq &\left( Q_{\rm DRO}(\pi ^{\ast}_{\rm DRO})-\hat{Q}_{\rm DRO}(\pi ^{\ast }_{\rm DRO})\right)
+\left( \hat{Q}_{\rm DRO}(\hat{\pi}_{\rm DRO})-Q_{\rm DRO}(\hat{\pi}_{\rm DRO})\right) \\
&\leq &2\sup_{\pi \in \Pi }\left\vert \hat{Q}_{\rm DRO}(\pi )-Q_{\rm DRO}(\pi
)\right\vert.
\end{eqnarray*}%
Note that
\begin{equation}
\hat{Q}_{\rm DRO}(\pi )
=\sup_{\alpha \geq 0}\left\{ -\alpha \log \left[ \mathbf{E%
}_{\mathbf{\hat{P}}_{n}^\pi}\exp \left( -Y_{i}(\pi (X_{i}))\right) /\alpha )%
\right]  -\alpha \delta \right\}
.
\end{equation}%

Therefore, we have
\begin{eqnarray}
&&R_{\rm DRO}(\hat{\pi}_{\rm DRO})  \notag \\
&\leq &2\sup_{\pi \in \Pi }\left\vert \sup_{\alpha \geq 0}\left\{ -\alpha
\log \left[ \mathbf{E}_{\mathbf{\hat{P}}_{n}^\pi}\exp \left( -Y_{i}(\pi
(X_{i})\right) /\alpha )\right] -\alpha \delta \right\} -\sup_{\alpha \geq
0}\left\{ -\alpha \log \left[ \mathbf{E}_{\mathbf{P}_{0}}\exp \left( -Y(\pi
(X)\right) /\alpha )\right] -\alpha \delta \right\} \right\vert
\label{uniform_bd_1}
\end{eqnarray}

We then divide the proof into two parts: continuous case and discrete case.

(1) Continuous case, i.e., Assumption \ref{assump:pos_dens}.1 is satisfied: By Lemmas \ref{lma:quantile}, \ref{lma:qtl_bd} and
Assumption \ref{assump:classical}, we have
\begin{eqnarray}
&&\sup_{\pi \in \Pi }\left\vert \sup_{\alpha \geq 0}\left\{ -\alpha \log
\left[ \mathbf{E}_{\mathbf{\hat{P}}_{n}^\pi}\exp \left( -Y_{i}(\pi
(X_{i}))\right) /\alpha )\right] -\alpha \delta \right\} -\sup_{\alpha \geq
0}\left\{ -\alpha \log \left[ \mathbf{E}_{\mathbf{P}_{0}}\exp \left( -Y(\pi
(X))\right) /\alpha )\right] -\alpha \delta \right\} \right\vert  \notag \\
&\leq &\sup_{\pi \in \Pi }\sup_{x\in \lbrack 0,M]}\frac{1}{\underline{b}}%
\left\vert \mathbf{E}_{\mathbf{\hat{P}}_{n}^\pi}\left[ \mathbf{1}\left\{
Y_{i}(\pi (X_{i}))\leq x\right\} \right] -\mathbf{E}_{_{\mathbf{P}_{0}}}\left[
\mathbf{1}\left\{ Y_{i}(\pi (X_{i}))\leq x\right\} \right] \right\vert  \notag
\\
&=&\sup_{\pi \in \Pi ,x\in \lbrack 0,M]}\frac{1}{\underline{b}}\left\vert
\left( \frac{1}{nS_{n}^{\pi }}\sum_{i=1}^{n}\frac{\mathbf{1\{}\pi
(X_{i}))=A_{i}\mathbf{\}}}{\pi _{0}\left( A_{i}|X_{i}\right) }\mathbf{1}%
\left\{ Y_{i}(\pi (X_{i}))\leq x\right\} \right) -\mathbf{E}_{_{\mathbf{P}%
_{0}}}\left[ \mathbf{1}\left\{ Y(\pi (X))\leq x\right\} \right] \right\vert
\notag \\
&\leq &\sup_{\pi \in \Pi ,x\in \lbrack 0,M]}\frac{1}{\underline{b}}%
\left\vert \left( \frac{1}{n}\sum_{i=1}^{n}\frac{\mathbf{1\{}\pi
(X_{i}))=A_{i}\mathbf{\}}}{\pi _{0}\left( A_{i}|X_{i}\right) }\mathbf{1}%
\left\{ Y_{i}(\pi (X_{i}))\leq x\right\} \right) -\mathbf{E}_{_{\mathbf{P}%
_{0}}}\left[ \frac{\mathbf{1\{}\pi (X)=A_{i}\mathbf{\}}}{\pi _{0}\left(
A|X\right) }\mathbf{1}\left\{ Y(\pi (X))\leq x\right\} \right] \right\vert
\label{uniform_bd_1_1} \\
&&+\sup_{\pi \in \Pi ,x\in \lbrack 0,M]}\frac{1}{\underline{b}}\left\vert
\left( \frac{S_{n}^{\pi }-1}{nS_{n}^{\pi }}\sum_{i=1}^{n}\frac{\mathbf{1\{}%
\pi (X_{i})=A_{i}\mathbf{\}}}{\pi _{0}\left( A_{i}|X_{i}\right) }\mathbf{1}%
\left\{ Y(\pi (X_{i}))\leq x\right\} \right) \right\vert .
\label{uniform_bd_1_2}
\end{eqnarray}%
For \eqref{uniform_bd_1_1}, by \citet[Theorem 4.10]{wainwright2019high}, we have with
probability $1-\exp \left( -n\epsilon ^{2}\eta ^{2}/2\right) $
\begin{eqnarray}
&&\sup_{\pi \in \Pi ,x\in \lbrack 0,M]}\frac{1}{\underline{b}}\left\vert
\left( \frac{1}{n}\sum_{i=1}^{n}\frac{\mathbf{1\{}\pi (X_{i})=A_{i}\mathbf{\}%
}}{\pi _{0}\left( A_{i}|X_{i}\right) }\mathbf{1}\left\{ Y_{i}(\pi
(X_{i})\leq x\right\} \right) -\mathbf{E}_{_{\mathbf{P}_{0}}}\left[ \frac{%
\mathbf{1\{}\pi (X)=A\mathbf{\}}}{\pi _{0}\left( A|X\right) }\mathbf{1}%
\left\{ Y(\pi (X)\leq x\right\} \right] \right\vert  \notag \\
&\leq &\frac{1}{\underline{b}}\left( 2\mathcal{R}_{n}\left( \mathcal{F}_{\Pi
,x}\right)  +\epsilon \right) ,  \label{final_2}
\end{eqnarray}%
where the function class $\mathcal{F}_{\Pi ,x}$ is defined as \[\mathcal{F}_{\Pi ,x}\triangleq\left\{ f_{\pi ,x}(X,Y,A)=%
\left. \frac{\mathbf{1\{}\pi (X)=A\mathbf{\}1\{}Y(\pi (X))\leq x\}}{\pi_0(A|X)}\right|\pi \in \Pi ,x\in
\lbrack 0,M]\right\} .\]For \eqref{uniform_bd_1_2}, we have
\begin{equation}
\frac{1}{\underline{b}}\left\vert \left( \frac{S_{n}^{\pi }-1}{nS_{n}^{\pi }}%
\sum_{i=1}^{n}\frac{\mathbf{1\{}\pi (X_{i})=A_{i}\mathbf{\}}}{\pi _{0}\left(
A_{i}|X_{i}\right) }\mathbf{1}\left\{ Y(\pi (X_{i})\leq x\right\} \right)
\right\vert \leq \frac{1}{\underline{b}}\left\vert S_{n}^{\pi }%
-1\right\vert,  \label{final_3}
\end{equation}
Further, by \citet[Theorem 4.10]{wainwright2019high} and the fact $%
\frac{\mathbf{1}}{\pi_0(A|X)}\leq \frac{1}{\eta }$ and $\mathbf{E}\left[ \frac{\mathbf{1}\{\pi
(X)=A\}}{\pi_0(A|X)}\right] =1$, we have with probability
at least $1-\exp \left( -{n\epsilon ^{2}\eta ^{2}}/{2}\right) ,$
\begin{equation}
\sup_{\pi \in \Pi }\left\vert S_n^{\pi}-1\right\vert
\leq 2\mathcal{R}_{n}\left( \mathcal{F}_{\Pi }\right)
+\epsilon ,
\label{eq:final_4}
\end{equation}%
where $\mathcal{F}_{\Pi }\triangleq\left \{\left. f_{\pi }(X,A)=\frac{\mathbf{1}\{\pi
(X)=A\}}{\pi_0(A|X)}\right|\pi \in \Pi \right\}.$

By combining \eqref{final_2}, \eqref{final_3}, and \eqref{eq:final_4}, we have with probability at least $1-2\exp \left( -%
n\epsilon ^{2}\eta ^{2}/2\right) $,
\begin{equation}
R_{\rm DRO}(\hat{\pi}_{\rm DRO})\leq \frac{2}{\underline{b}}\left( 2\mathcal{R}_{n}\left(
\mathcal{F}_{\Pi ,x}\right)  +\epsilon \right) + \frac{2}{%
\underline{b}} \left( 2\mathcal{R}_{n}\left(
\mathcal{F}_{\Pi }\right)  +\epsilon \right) .  \notag
\end{equation}%
Now, we turn to the Rademacher complexity of the classes $\mathcal{F}_{\Pi ,x}$ and $\mathcal{F}_{\Pi}$. First, for the class $\mathcal{F}_{\Pi}$, notice that
\begin{eqnarray*}
&&\sqrt{\sum_{i=1}^{n}\frac{1}{n}\left( \frac{\mathbf{1}\{\pi _{1}(X_{i})=A_{i}\}}{\pi_0(A_i|X_i)}%
-\frac{\mathbf{1}\{\pi _{2}(X_{i})=A_{i}\}}{\pi_0(A_i|X_i)}\right) ^{2}} \\
&=&\sqrt{\frac{1}{n}\sum_{i=1}^{n}\left(\frac{\mathbf{1\{}\pi _{1}(X_{i})\neq
\pi _{2}(X_{i})\mathbf{\}}}{\pi_0(A_i|X_i)} \right) ^{2}} \\
&\leq& \frac{1}{\eta} \sqrt{H\left( \pi _{1},\pi _{2}\right) }.
\end{eqnarray*}%
Therefore, the covering number
\begin{equation*}
N\left( t,\mathcal{F}_{\Pi },\left\Vert \cdot \right\Vert _{\P_{n}}\right)
\leq N\left( t,\Pi ,\sqrt{H\left( \cdot ,\cdot \right) }/\eta\right)
=N_{H}^{\left( n\right) }\left(\eta^2 t^{2},\Pi,\{x_1,\ldots,x_n \} \right) \leq  N_{H}^{\left( n\right) }\left( \eta^2t^{2},\Pi \right).
\end{equation*}%
For the class $\mathcal{F}_{\Pi ,x}$,  we claim \[N\left( t,\mathcal{F}_{\Pi,x },\left\Vert \cdot \right\Vert
_{\P_{n}}\right) \leq N_{H}^{\left( n\right) }\left(\eta^2 t^{2}/2,\Pi \right)
\sup_{\P}N\left(\eta t/\sqrt{2},\mathcal{F}_{I},\left\Vert \cdot \right\Vert
_{\P}\right) ,\]
 where $\mathcal{F}_{I}=\{f(t)\triangleq\mathbf{1}\left\{ t\leq
x\right\} |x\in \lbrack 0,M]\}.$ For ease of notation,  let
\[N_\Pi(t)=N_{H}^{(n)}\left(\eta^2 t^{2}/2,\Pi \right), \text{ and }N_I(t)=\sup_{\P}N\left( \eta t/\sqrt{2},\mathcal{F}_{I},\left\Vert \cdot \right\Vert
_{\P}\right).
\]
Suppose $\{\pi _{1},\pi _{2},\ldots ,\pi
_{N_\Pi(t) }\}$ is a cover for $%
\Pi $ and $\left\{ \mathbf{1}^{\pi }\left\{ t\leq x_{1}\right\} ,\mathbf{1}%
^{\pi }\left\{ t\leq x_{2}\right\} ,\ldots ,\mathbf{1}^{\pi }\left\{ t\leq
x_{N_I(t)}\right\} \right\} $ is a cover for $\mathcal{F}_{I}$ under the
distance $\left\Vert \cdot \right\Vert _{\hat{\P}_{\pi }},$ defined by%
\begin{equation*}
\hat{\P}_{\pi }\triangleq\frac{1}{n}\sum_{i=1}^n \Delta \left\{ Y_{i}(\pi (X_{i}))\right\} .
\end{equation*}%
Then, we claim that $\mathcal{F}_{\Pi ,x}^{t}$ is a $t$-cover set for $\mathcal{F%
}_{\Pi ,x},$ where $\mathcal{F}_{\Pi ,x}^{t}$ is defined as
\begin{equation*}
\mathcal{F}_{\Pi ,x}^{t}\triangleq\left\{\left.\frac{\mathbf{1\{}\pi _{i}(X)=A\}\mathbf{1}^{\pi_i}\{Y(\pi
_{i}(X))\leq x_{j}\}}{\pi_0(A|X)}\right|i\leq N_\Pi(t)
 ,j\leq N_I(t) \right\}.
\end{equation*}%
For $f_{\pi ,x}(X,Y,A)\in \mathcal{F}_{\Pi ,x},$ we can pick $\tilde{\pi},\tilde{x%
}$ such that $f_{\tilde{\pi},\tilde{x}}(X,Y,A)\in \mathcal{F}_{\Pi ,x}^{t}$,
$H\left( \pi ,\tilde{\pi}\right) \leq t^{2}/2$ and $\left\Vert \mathbf{1}%
\left\{ Y\leq x\right\} -\mathbf{1}\left\{ Y\leq \tilde{x}\right\}
\right\Vert _{\hat{P}_{\tilde{\pi}}}\leq t/\sqrt{2}.$ Then, we have
\begin{eqnarray*}
&&\sqrt{\sum_{i=1}^{n}\frac{1}{n}\left(\frac{ \mathbf{1\{}\pi (X_{i})=A_{i}\mathbf{%
\}1\{}Y(\pi (X_{i}))\leq x_{1}\}}{\pi_0(A_i|X_i)}-\frac{\mathbf{1\{}\tilde{\pi}(X_{i})=A_{i}\mathbf{%
\}1\{}Y(\tilde{\pi}(X_{i}))\leq \tilde{x}\}}{\pi_0(A_i|X_i)}\right) ^{2}} \\
&\leq &\frac{1}{\eta}\sqrt{H\left( \pi ,\tilde{\pi}\right) +\frac{1}{n}\sum_{i=1}^{n}%
\mathbf{1\{}\pi (X_{i})=\tilde{\pi}(X_{i})\mathbf{\}}\left( \mathbf{1\{}Y(%
\tilde{\pi}(X_{i}))\leq x\}-\mathbf{1\{}Y(\tilde{\pi}(X_{i}))\leq \tilde{x}%
\}\right) ^{2}} \\
&\leq &\frac{1}{\eta}\sqrt{\eta^2t^{2}/2+\eta^2t^{2}/2}=t.
\end{eqnarray*}%
From Lemma 19.15 and Example 19.16 in \cite{van2000asymptotic}, we know%
\begin{equation*}
\sup_{\P}N\left( t,\mathcal{F}_{I},\left\Vert \cdot \right\Vert _{\P}\right)
\leq K\left( \frac{1}{t }\right) ^{2},
\end{equation*}%
where $K$ is a universal constant. Finally, by Dudley's theorem \citep[e.g.][(5.48)]{wainwright2019high}, we have
\begin{equation*}
\mathcal{R}_{n}\left( \mathcal{F}_{\Pi }\right) \leq 24\mathbf{E} \left[
\int_{0}^{2/\eta}\sqrt{\frac{\log N\left( t,\mathcal{F}_{\Pi },\left\Vert \cdot
\right\Vert _{\P_{n}}\right) }{n}}{\rm d}t \right]\leq \frac{24\kappa ^{(n)}\left(
\Pi \right) }{\eta \sqrt{n}},
\end{equation*}%
and
\begin{eqnarray*}
\mathcal{R}_{n}\left( \mathcal{F}_{\Pi ,x}\right) &\leq &\frac{24}{\sqrt{n}}%
\int_{0}^{2/\eta}\sqrt{\log \left( N_{H}^{\left( n\right) }\left(\eta^2 t^{2}/2,\Pi
\right) \right) +\log \left( \sup_{\P}N\left(\eta t/\sqrt{2},\mathcal{F}%
_{I},\left\Vert \cdot \right\Vert _{\P}\right) \right) }{\rm d}t \\
&= &\frac{24}{\eta\sqrt{n}}%
\int_{0}^{2}\sqrt{\log \left( N_{H}^{\left( n\right) }\left(s^2/2,\Pi
\right) \right) +\log \left( \sup_{\P}N\left(s/\sqrt{2},\mathcal{F}%
_{I},\left\Vert \cdot \right\Vert _{\P}\right) \right) }{\rm d}s  \\
&\leq &\frac{24}{\eta\sqrt{n}}\int_{0}^{\sqrt{2}}\left( \sqrt{\log \left( N_{H}^{\left(
n\right) }\left( s^{2}/2,\Pi \right) \right) }+\sqrt{\log K}+\sqrt{4\log
\left( 1/s \right) }\right) {\rm d}s \\
&\leq &\frac{24\sqrt{2}\kappa ^{(n)}\left( \Pi \right) }{\eta\sqrt{n}}+C/\sqrt{n},
\end{eqnarray*}%
where $C$ is a universal constant. Therefore, by picking $\varepsilon' =2\exp\left(-n \epsilon^2 \eta^2 /2 \right),$ we have
with probability $1-\varepsilon'$,
\begin{eqnarray}
R_{\rm DRO}(\hat{\pi}_{\rm DRO})  &\leq& \frac{4}{\underline{b}\eta\sqrt{n}}\left( 24(\sqrt{2}+1)\kappa ^{(n)}\left( \Pi \right)  +\sqrt{2\log\left(\frac{2}{\varepsilon'}\right)}+C\right).
\end{eqnarray}

(2) Discrete case, i.e., Assumption \ref{assump:pos_dens}.2 is satisfied:
By Lemmas \ref{lma:discrete_key} and Assumption \ref{assump:classical}, when
$\sup_{\pi \in \Pi }\mathrm{TV}\left( \mathbf{\hat{P}}_{n}^{\pi },\mathbf{P}%
_{0}\right) \leq \underline{b}/2,$ we have
\begin{eqnarray*}
&&\sup_{\pi \in \Pi }\left\vert \sup_{\alpha \geq 0}\left\{ -\alpha \log
\left[ \mathbf{E}_{\mathbf{\hat{P}}_{n}^{\pi }}\exp \left( -Y_{i}(\pi
(X_{i}))\right) /\alpha )\right] -\alpha \delta \right\} -\sup_{\alpha \geq
0}\left\{ -\alpha \log \left[ \mathbf{E}_{\mathbf{P}_{0}}\exp \left( -Y(\pi
(X))\right) /\alpha )\right] -\alpha \delta \right\} \right\vert \\
&\leq &\sup_{\pi \in \Pi }\frac{2M}{\underline{b}}\mathrm{TV}\left( \mathbf{\hat{P}}%
_{n}^{\pi },\mathbf{P}_{0}\right) \\
&=&\sup_{\pi \in \Pi }\sup_{S\in \mathcal{S}_{\mathbb{D}}}\frac{2M}{%
\underline{b}}\left\vert \mathbf{\hat{P}}_{n}^{\pi }\left[ S\right] -%
\mathbf{P}_{0}\left[ S\right] \right\vert ,
\end{eqnarray*}%
where $\mathcal{S}_{\mathbb{D}}=\left\{ S:S\subset \mathbb{D}\right\} $
contains all subsets of $\mathbb{D}.$ Then
\begin{eqnarray}
&&\sup_{\pi \in \Pi }\sup_{S\in \mathcal{S}_{\mathbb{D}}}\frac{2M}{%
\underline{b}}\left\vert \mathbf{\hat{P}}_{n}^{\pi }\left[ S\right] -%
\mathbf{P}_{0}\left[ S\right] \right\vert  \notag \\
&=&\sup_{\pi \in \Pi ,S\in \mathcal{S}_{\mathbb{D}}}\frac{2M}{\underline{b}}%
\left( \frac{1}{nS_{n}^{\pi }}\sum_{i=1}^{n}\frac{\mathbf{1\{}\pi
(X_{i})=A_{i}\mathbf{\}1\{}Y_{i}\in S\mathbf{\}}}{\pi _{0}\left(
A_{i}|X_{i}\right) }-\mathbf{E}_{\mathbf{P}_{0}}\left[ \mathbf{1\{}Y_{i}\in S%
\mathbf{\}}\right] \right)  \notag \\
&\leq &\sup_{\pi \in \Pi ,S\in \mathcal{S}_{\mathbb{D}}}\frac{2M}{\underline{%
b}}\left( \frac{1}{n}\sum_{i=1}^{n}\frac{\mathbf{1\{}\pi (X_{i})=A_{i}%
\mathbf{\}1\{}Y_{i}\in S\mathbf{\}}}{\pi _{0}\left( A_{i}|X_{i}\right) }-%
\mathbf{E}_{\mathbf{P}_{0}}\left[ \mathbf{1\{}Y_{i}\in S\mathbf{\}}\right]
\right)  \label{eqn:bd_discrete_1} \\
&&+\sup_{\pi \in \Pi ,S\in \mathcal{S}_{\mathbb{D}}}\frac{2M}{\underline{b}}%
\left( \frac{S_{n}^{\pi }-1}{nS_{n}^{\pi }}\sum_{i=1}^{n}\frac{\mathbf{1\{}%
\pi (X_{i})=A_{i}\mathbf{\}1\{}Y_{i}\in S\mathbf{\}}}{\pi _{0}\left(
A_{i}|X_{i}\right) }\right)  \label{eqn:bd_discrete_2}
\end{eqnarray}
For (\ref{eqn:bd_discrete_1}), by \citet[Theorem 4.10]{wainwright2019high},
we have with probability $1-\exp \left( -n\epsilon ^{2}\eta ^{2}/2\right) $
\begin{eqnarray}
&&\sup_{\pi \in \Pi ,S\in \mathcal{S}_{\mathbb{D}}}\frac{2M}{\underline{b}}%
\left( \frac{1}{n}\sum_{i=1}^{n}\frac{\mathbf{1\{}\pi (X_{i})=A_{i}\mathbf{%
\}1\{}Y_{i}\in S\mathbf{\}}}{\pi _{0}\left( A_{i}|X_{i}\right) }-\mathbf{E}_{%
\mathbf{P}_{0}}\left[ \mathbf{1\{}Y_{i}\in S\mathbf{\}}\right] \right)
\notag \\
&\leq &\frac{2M}{\underline{b}}\left( 2\mathcal{R}_{n}\left( \mathcal{F}%
_{\Pi ,\mathbb{D}}\right) +\epsilon \right) ,  \label{final_discrete_2}
\end{eqnarray}%
where the function class $\mathcal{F}_{\Pi ,\mathbb{D}}$ is defined as
\begin{equation*}
\mathcal{F}_{\Pi ,\mathbb{D}}\triangleq \left\{ f_{\pi ,S}(X,Y,A)=\left.
\frac{\mathbf{1\{}\pi (X_{i})=A_{i}\mathbf{\}1\{}Y_{i}\in S\mathbf{\}}}{\pi
_{0}(A|X)}\right\vert \pi \in \Pi ,S\subset \mathbb{D}\right\} .
\end{equation*}%
For (\ref{eqn:bd_discrete_2}), we have
\begin{equation}
\frac{2M}{\underline{b}}\left( \frac{S_{n}^{\pi }-1}{nS_{n}^{\pi }}%
\sum_{i=1}^{n}\frac{\mathbf{1\{}\pi (X_{i})=A_{i}\mathbf{\}1\{}Y_{i}\in S%
\mathbf{\}}}{\pi _{0}\left( A_{i}|X_{i}\right) }\right) \leq \frac{2M}{%
\underline{b}}\left\vert S_{n}^{\pi }-1\right\vert .
\label{eqn:final_discrete_3}
\end{equation}

By combining \eqref{final_discrete_2}, \eqref{eqn:final_discrete_3} and %
\eqref{eq:final_4}, we have with probability at least $1-2\exp \left(
-n\epsilon ^{2}\eta ^{2}/2\right) $,
\begin{equation}
R_{\mathrm{DRO}}(\hat{\pi}_{\mathrm{DRO}})\leq \frac{2M}{\underline{b}}%
\left( 2\mathcal{R}_{n}\left( \mathcal{F}_{\Pi ,\mathbb{D}}\right) +\epsilon
\right) +\frac{2M}{\underline{b}}\left( 2\mathcal{R}_{n}\left( \mathcal{F}%
_{\Pi }\right) +\epsilon \right) .  \notag
\end{equation}%
Now, we turn to the Rademacher complexity of the classes $\mathcal{F}_{\Pi ,%
\mathbb{D}}.$ We claim
\begin{equation*}
N\left( t,\mathcal{F}_{\Pi ,\mathbb{D}},\left\Vert \cdot \right\Vert _{\P %
_{n}}\right) \leq N_{H}^{\left( n\right) }\left( \eta ^{2}t^{2}/2,\Pi
\right) 2^{|\mathbb{D}|},
\end{equation*}%
where $|\mathbb{D}|$ denotes the cardinality of set $\mathbb{D}$. By similar
argument with the continuous case, we have $\mathcal{F}_{\Pi ,\mathbb{D}}^{t}
$ is a $t$-cover set for $\mathcal{F}_{\Pi ,\mathbb{D}},$ where $\mathcal{F}%
_{\Pi ,\mathbb{D}}^{t}$ is defined as
\begin{equation*}
\mathcal{F}_{\Pi ,\mathbb{D}}^{t}\triangleq \left\{ \left. \frac{\mathbf{1\{}%
\pi _{i}(X)=A\}\mathbf{1\{}Y_{i}(\pi _{i}(X_{i}))\in S\mathbf{\}}}{\pi
_{0}(A|X)}\right\vert i\leq N_{\Pi }(t),S\subset \mathbb{D}\right\} .
\end{equation*}%
Finally, by Dudley's theorem \citep[e.g.][(5.48)]{wainwright2019high}
\begin{eqnarray*}
\mathcal{R}_{n}\left( \mathcal{F}_{\Pi ,\mathbb{D}}\right)  &\leq &\frac{24}{%
\sqrt{n}}\int_{0}^{2/\eta }\sqrt{\log \left( N_{H}^{\left( n\right) }\left(
\eta ^{2}t^{2}/2,\Pi \right) \right) +|\mathbb{D}|\log \left( 2\right) }%
\mathrm{d}t \\
&=&\frac{24}{\eta \sqrt{n}}\int_{0}^{2}\sqrt{\log \left( N_{H}^{\left(
n\right) }\left( s^{2}/2,\Pi \right) \right) +|\mathbb{D}|\log \left(
2\right) }\mathrm{d}s \\
&\leq &\frac{24}{\eta \sqrt{n}}\left( \int_{0}^{\sqrt{2}}\left( \sqrt{\log
\left( N_{H}^{\left( n\right) }\left( s^{2}/2,\Pi \right) \right) }\right)
\mathrm{d}s+2\sqrt{|\mathbb{D}|\log \left( 2\right) }\right)  \\
&\leq &\frac{24\sqrt{2}\kappa ^{(n)}\left( \Pi \right) +48\sqrt{|\mathbb{D}%
|\log \left( 2\right) }}{\eta \sqrt{n}}
\end{eqnarray*}
Therefore, by picking $\varepsilon ^{\prime }=2\exp \left( -n\epsilon
^{2}\eta ^{2}/2\right) ,$ we have with probability $1-\varepsilon ^{\prime }$%
,
\begin{equation*}
R_{\mathrm{DRO}}(\hat{\pi}_{\mathrm{DRO}})\leq \frac{4M}{\underline{b}\eta
\sqrt{n}}\left( 24(\sqrt{2}+1)\kappa ^{(n)}\left( \Pi \right) +48\sqrt{|%
\mathbb{D}|\log \left( 2\right) }+\sqrt{2\log \left( \frac{2}{\varepsilon
^{\prime }}\right) }\right) .
\end{equation*}
\end{proof}
\subsection{Proof of the statistical lower bound in Section \ref{sec:lower_bd}}
\label{sec:proof_ld}
We first define some useful notions. For $p,q\in \lbrack 0,1],$ Let
\[
D_{\mathrm{KL}}(p||q)=p\log (p/q)+(1-p)\log ((1-p)/(1-q)).
\]%
Let $g(p)=\inf_{D_{\mathrm{KL}}(p||q)\leq \delta }q.$

\begin{lemma}
\label{lma:g(p)}
For $\delta \leq 0.226,$ $g(p)$ is differentiable and $g^{\prime }(p)\geq 1/2
$ for $p\in \lbrack 0.4,0.6].$
\end{lemma}
\begin{proof}[Proof]
Since $\delta \leq 0.226$ and $p\geq 0.4,$ we have $p\geq g(p)\geq 0.1.$ By
\citet[Lemma B.12]{yang2021towards}, we have
\[
g^{\prime }(p)=\frac{\log (p/g(p))-\log ((1-p)/(1-g(p)))}{p/g(p)-(1-p)/(1-g(p))},
\]%
and $g^{\prime }(p)$ is increasing. Therefore, $g^{\prime }(p)\geq g^{\prime
}(0.4)\geq 0.5.$
\end{proof}
\begin{proof}[Proof of Theorem \ref{thm:ld}] Since we consider two-action scenario, we denote the
actions to be $0$ and $1.$ We first follow the lines in the proof   in \citet[Theorem
2.2]{kitagawa2018should}. By Lemma \ref{lemma:VC_bound}, the VC dimension of the policy class $\Pi $ is larger or equal than $v\triangleq \lceil 4/25 \kappa^{(n)}(\Pi)^2 \rceil$.
Let $x_{1},x_{2},\ldots ,x_{v}$ be $v$  points that are shattered by policy
$\Pi .$ Let $\mathbf{b}=\left\{ b_{1},b_{2},\ldots ,b_{v}\right\} \in
\left\{ 0,1\right\} ^{v}.$ By definition, for each $\mathbf{b}\in \left\{
0,1\right\} ^{v},$ there exist a $\pi \in \Pi $ such that $\pi (x_{i})=b_{i}$
for $i=1,2,\ldots ,v.$ We use $\pi =\left\{ \pi _{1},\pi _{2},\ldots ,\pi
_{v}\right\} \in \left\{ 0,1\right\} ^{v}$ to denote the policy $\pi $
restricted in $\left\{ x_{1},x_{2},\ldots ,x_{v}\right\} .$ Now, we consider
you two distributions
\[
Y_{0}=\left\{
\begin{array}{c}
M \\
0%
\end{array}%
\right.
\begin{array}{l}
\text{with prob. }1-p-\gamma  \\
\text{with prob. }p+\gamma
\end{array}%
,\text{ } Y_{1}=\left\{
\begin{array}{c}
M \\
0%
\end{array}%
\right.
\begin{array}{l}
\text{with prob. }1-p+\gamma  \\
\text{with prob. }p-\gamma
\end{array}%
,
\]%
for $p,\gamma >0,$ which will be determined later. Then, we construct\ $%
\mathbf{P}_{\mathbf{b}}*\pi_{\mathbf{b},0}\in \mathcal{P}(M)$ for every $\mathbf{b}\in \left\{
0,1\right\} ^{v}.$ The marginal distribution of $X$ in $\mathbf{P}_{\mathbf{b%
}}$ supports uniformly on $\left\{ x_{1},x_{2},\ldots ,x_{v}\right\} ,$ with
equal mass $1/v.$ Further,
\[
\pi_{\mathbf{b},0}(A=b_{i}|X=x_{i})=\pi_{\mathbf{b},0}(A=1-b_{i}|X=x_{i})=\frac{1}{2},
\]%
and conditional on $x_{i},$ $Y(b_{i})$ follows the distribution of $Y_{1}$
and $Y(1-b_{i})$ follows the distribution of $Y_{0}.$ We also define $d_{H}(%
\mathbf{b,b}^{\prime })=\sum_{i=1}^{v}\left\vert b_{i}-b_{i}^{\prime
}\right\vert .$

To emphasize the dependence on the underlying distrbution $\mathbf{P}_{%
\mathbf{b}},$ we rewrite $Q_{\mathrm{DRO}}(\pi )=Q_{\mathrm{DRO}}(\pi ,%
\mathbf{P}_{\mathbf{b}}).$ Now, by Lemma \ref{thm:strong_duality}, we have
\begin{eqnarray*}
\sup_{\pi \in \Pi }Q_{\mathrm{DRO}}(\pi ,\mathbf{P}_{\mathbf{b}})
&=&\sup_{\pi \in \Pi }\sup_{\alpha \geq 0}\left\{ -\alpha \log \left(
\mathbf{E}_{\mathbf{P}_{\mathbf{b}}}\left[ \exp (-Y(\pi (X))/\alpha )\right]
\right) -\alpha \delta \right\}  \\
&=&\sup_{\alpha \geq 0}\sup_{\pi \in \Pi }\left\{ -\alpha \log \left(
\mathbf{E}_{\mathbf{P}_{\mathbf{b}}}\left[ \exp (-Y(\pi (X))/\alpha )\right]
\right) -\alpha \delta \right\} .
\end{eqnarray*}%
Since for every $\alpha \geq 0,$ we have $\mathbf{E}\left[ \exp
(-Y_{1}/\alpha )\right] \leq \mathbf{E}\left[ \exp (-Y_{0}/\alpha )\right] ,$
we have the optimal policy for the distribution $\mathbf{P}_{\mathbf{b}}$ is
$\pi _{\mathbf{b}}^*=\mathbf{b}$. Further,
\begin{eqnarray*}
\sup_{\pi \in \Pi }Q_{\mathrm{DRO}}(\pi ,\mathbf{P}_{\mathbf{b}}) &=&Q_{%
\mathrm{DRO}}(\pi _{\mathbf{b}}^*,\mathbf{P}_{\mathbf{b}})=\sup_{\alpha \geq
0}\left\{ -\alpha \log \left( \mathbf{E}\left[ \exp (-Y_{1}/\alpha )\right]
\right) -\alpha \delta \right\}  \\
&=&\inf_{D(\mathbf{P}||Y_{1})\leq \delta }\mathbf{E}_{Y\sim \mathbf{P}}\left[
Y\right]  \\
&=&Mg(1-p+\gamma ).
\end{eqnarray*}
Then, for any $\pi \in $ $\Pi $ for any $\mathbf{P}_{\mathbf{b}},$ we have%
\begin{eqnarray*}
Q_{\mathrm{DRO}}(\pi ,\mathbf{P}_{\mathbf{b}}) &=&\sup_{\alpha \geq
0}\left\{ -\alpha \log \left( \frac{1}{v}\sum_{i=1}^{v}\exp (-Y(\pi
_{i}))/\alpha )\right) -\alpha \delta \right\}  \\
&=&\sup_{\alpha \geq 0}\left\{ -\alpha \log \left( \frac{1}{v}\left(
\sum_{i=1,\pi _{i}=b_{i}}^{v}\mathbf{E}\left[ \exp (-Y_{1}/\alpha )\right]
+\sum_{i=1,\pi _{i}\neq b_{i}}^{v}\mathbf{E}\left[ \exp (-Y_{0}/\alpha )%
\right] \right) \right) -\alpha \delta \right\}  \\
&=&\sup_{\alpha \geq 0}\left\{ -\alpha \log \left( \frac{v-d_{H}(\mathbf{b}%
,\pi )}{v}\mathbf{E}\left[ \exp (-Y_{1}/\alpha )\right] +\frac{d_{H}(\mathbf{%
b},\pi )}{v}\mathbf{E}\left[ \exp (-Y_{0}/\alpha )\right] \right) -\alpha
\delta \right\} .
\end{eqnarray*}

To simplify the notation, let $m=d_{H}(\mathbf{b},\pi ).$ Then, we have
\begin{eqnarray*}
&&Q_{\mathrm{DRO}}(\pi ,\mathbf{P}_{\mathbf{b}}) \\
&=&\sup_{\alpha \geq 0}\left\{ -\alpha \log \left( \frac{v-m}{v}\mathbf{E}%
\left[ \exp (-Y_{1}/\alpha )\right] +\frac{m}{v}\mathbf{E}\left[ \exp
(-Y_{0}/\alpha )\right] \right) -\alpha \delta \right\}  \\
&=&\sup_{\alpha \geq 0}\left\{ -\alpha \log \left( \left( \frac{v-m}{v}%
\left( 1-p+\gamma \right) +\frac{m}{v}\left( 1-p-\gamma \right) \right) \mathbf{E%
}\left[ \exp (-M/\alpha )\right] +\left( \frac{m}{v}\left( p+\gamma
\right) +\frac{v-m}{v}\left( p-\gamma \right) \right) \right) -\alpha
\delta \right\}  \\
&=&\sup_{\alpha \geq 0}\left\{ -\alpha \log \left( \left( 1-p+\frac{v-2m}{v}%
\gamma \right) \mathbf{E}\left[ \exp (-M/\alpha )\right] +\left( p-\frac{%
v-2m}{v}\gamma \right) \right) -\alpha \delta \right\}
\end{eqnarray*}%
We construct the distribution of $\tilde{Y}_{m}:$
\[
\tilde{Y}_{m}=\left\{
\begin{array}{c}
M \\
0%
\end{array}%
\right.
\begin{array}{l}
\text{with prob. }1-p+\frac{v-2m}{v}\gamma  \\
\text{with prob. }p-\frac{v-2m}{v}\gamma .%
\end{array}%
\]%
Then, $Q_{\mathrm{DRO}}(\pi ,\mathbf{P}_{\mathbf{b}})$ becomes
\[
Q_{\mathrm{DRO}}(\pi ,\mathbf{P}_{\mathbf{b}})=\inf_{D(\mathbf{P}||\tilde{Y}%
_{m})\leq \delta }\mathbf{E}_{Y\sim \mathbf{P}}\left[ Y\right] =Mg\left(
1-p+\gamma -\frac{2m}{v}\gamma \right) .
\]%
By Lemma \ref{lma:g(p)}, if $p=1/2$ and $\gamma \leq 0.1,$ we have
\begin{eqnarray*}
R_{\mathrm{DRO}}(\pi ,\mathbf{P}_{\mathbf{b}}) &=&Mg(1-p+\gamma )-Mg\left(
1-p+\gamma -\frac{2m}{v}\gamma \right)  \\
&\geq &\frac{2m}{v}M\gamma \min_{x\in \left[1- p-\gamma ,1-p+\gamma \right]
}g^{\prime }(x)\geq \frac{d_{H}(\mathbf{b},\pi )M\gamma }{v}.
\end{eqnarray*}%
Then, by Assouad Lemma   \citep[Theorem 2.12
(ii)]{tsybakov2009introduction}, we have
\begin{align*}
&\max_{\mathbf{P}_{0} * \pi_{0}\in \mathcal{P}(M)}\mathbf{E}_{\left(\mathbf{P}^{\pi_0}_0\right)^n}%
\left[ R_{\mathrm{DRO}}(\pi ,\mathbf{P}_0)\right]  \\
\geq &\frac{M\gamma }{v}\max_{%
\mathbf{P}_{0}* \pi_{0}\in \mathcal{P}(M )}\mathbf{E}_{\left(\mathbf{P}^{\pi_0}_0\right)^n}\left[
d_{H}(\mathbf{b},\pi )\right]  \\
\geq &\frac{M\gamma }{2}\left( 1-\max_{d_{H}(%
\mathbf{b,b}^{\prime })=1}\mathrm{TV}\left( \left( \mathbf{P}_{\mathbf{b}}^{ \pi_{\mathbf{b},0}}\right)
^{n},\left( \mathbf{P}_{\mathbf{b}^{\prime }}^{ \pi_{\mathbf{b}',0}}\right) ^{n}\right) \right)
,
\end{align*}%
where $\mathrm{TV}(\mathbf{\cdot ,\cdot })$ denotes the total variation distance
between two measures. By Pinsker's inequality (\citep[Lemma 2.5]{tsybakov2009introduction}), we have
\[
\mathrm{TV}\left( \left( \mathbf{P}_{\mathbf{b}}^{ \pi_{\mathbf{b},0}}\right) ^{n},\left( \mathbf{P}_{%
\mathbf{b}^{\prime }}^{ \pi_{\mathbf{b}',0}}\right) ^{n}\right) \leq \sqrt{D\left( \left(
\mathbf{P}_{\mathbf{b}}^ {\pi_{\mathbf{b},0}}\right) ^{n}||\left( \mathbf{P}_{\mathbf{b}%
^{\prime }}^{ \pi_{\mathbf{b}',0}}\right) ^{n}\right) /2}=\sqrt{nD\left( \mathbf{P}_{\mathbf{b}%
}^{ \pi_{\mathbf{b},0}}||\mathbf{P}_{\mathbf{b}^{\prime }}^{ \pi_{\mathbf{b}',0}}\right) /2}.
\]%
For $\mathbf{b,b}^{\prime }$ such that $d_{H}(\mathbf{b,b}^{\prime })=1,$
Let $b_{l}\neq b_{l}^{\prime }$ and without loss of generality, we assume $%
b_{l}=1.$ Then, wave
\begin{eqnarray*}
D\left( \mathbf{P}_{\mathbf{b}%
}^{ \pi_{\mathbf{b},0}}||\mathbf{P}_{\mathbf{b}^{\prime }}^{ \pi_{\mathbf{b}',0}}\right)&=&\sum_{i=1}^{v}\sum_{j=0}^{1}\sum_{k=0}^{1}\mathbf{P}_{%
\mathbf{b}}^{ \pi_{\mathbf{b},0}}\left( X=x_{i},A=j,Y=Mk\right) \log \left( \frac{\mathbf{P}_{%
\mathbf{b}}^{ \pi_{\mathbf{b},0}}\left( X=x_{i},A=j,Y=Mk\right) }{\mathbf{P}_{\mathbf{b}^{\prime
}}^{ \pi_{\mathbf{b}',0}}\left( X=x_{i},A=j,Y=Mk\right) }\right)  \\
&=&\sum_{j=0}^{1}\sum_{k=0}^{1}\mathbf{P}_{\mathbf{b}}^{ \pi_{\mathbf{b},0}}\left(
X=x_{l},A=j,Y=Mk\right) \log \left( \frac{\mathbf{P}_{\mathbf{b}}^{ \pi_{\mathbf{b},0}}\left(
X=x_{l},A=j,Y=Mk\right) }{\mathbf{P}_{\mathbf{b}^{\prime }}^{ \pi_{\mathbf{b}',0}}\left(
X=x_{l},A=j,Y=Mk\right) }\right)  \\
&=&\frac{1}{2v}\left( p+\gamma \right) \log \left( \frac{ p+\gamma
 }{p-\gamma }\right) +\frac{1}{2v}\left( 1-p-\gamma \right) \log
\left( \frac{1-p-\gamma }{1-p+\gamma }\right)  \\
&&+\frac{1}{2v}\left( p-\gamma \right) \log \left( \frac{p-\gamma }{p+\gamma
}\right) +\frac{1}{2v}\left( 1-p+\gamma \right) \log \left( \frac{1-p+\gamma
}{1-p-\gamma }\right)  \\
&=&\frac{1}{v}D_{\mathrm{KL}}(p+\gamma ||p-\gamma ).
\end{eqnarray*}%
For $p=1/2$ and $\gamma \leq 0.1,$ we have by \citet[Lemma 2.7]{tsybakov2009introduction}
\[
D_{\mathrm{KL}}(p+\gamma ||p-\gamma )\leq (2\gamma )^{2}/\left( p^{2}-\gamma
^{2}\right) .
\]%
By picking $\gamma =\frac{1}{4}\sqrt{\frac{v}{n}}\leq 0.1,$ which requires $n \geq \kappa^{(n)}(\Pi)^2$, we have
\[
\max_{\mathbf{P}_{0} * \pi_{0}\in \mathcal{P}(M)}\mathbf{E}_{\left(\mathbf{P}^{\pi_0}_0\right)^n}%
\left[ R_{\mathrm{DRO}}(\pi ,\mathbf{P}_0)\right]\geq \frac{M}{40}\sqrt{\frac{%
v}{n}}.
\]%
Recall that $v=\lceil 4/25 \kappa^{(n)}(\Pi)^2 \rceil$. Therefore, we
have
\[
\max_{\mathbf{P}_{0} * \pi_{0}\in \mathcal{P}(M)}\mathbf{E}_{\left(\mathbf{P}^{\pi_0}_0\right)^n}%
\left[ R_{\mathrm{DRO}}(\pi ,\mathbf{P}_0)\right] \geq \frac{M \kappa ^{(n)}(\Pi )}{100\sqrt{n}}.
\]
\end{proof}
\subsection{Proof of the Bayes DRO policy result in Section \ref{sec:numerical_bayes}}
\label{sec:proof_numerical_small}
\begin{proof}[Proof of Proposition \ref{thm:pi_policy}]
By Lemma \ref{thm:strong_duality}, we have
\begin{eqnarray}
	Q_{\rm DRO} (\pi^*_{\rm DRO})
	&=& \sup_{\pi \in \overline{\Pi}} \sup_{\alpha\geq 0}\left\{ -\alpha \log\E_{\P_0}\left[\exp(-Y(\pi(X))/\alpha)\right] - \alpha \delta\right\}  \notag \\
&=& \sup_{\alpha\geq 0}\sup_{\pi \in \overline{\Pi}}\left\{ -\alpha \log\E_{\P_0}\left[\exp(-Y(\pi(X))/\alpha)\right] - \alpha \delta\right\}.
\label{eq:numerical_duality}
\end{eqnarray}
The inner maximization \eqref{eq:numerical_duality} can be further simplified as
\begin{eqnarray*}
&&\sup_{\pi \in \overline{\Pi}}\left\{ -\alpha \log\E_{\P_0}\left[\exp(-Y(\pi(X))/\alpha)\right] - \alpha \delta\right\} \\
&=&\sup_{\pi \in \overline{\Pi}}\left\{ -\alpha \log\E_{\P_0}[ \E\left[\exp(-Y(\pi(X))/\alpha)|X]\right] - \alpha \delta\right\} \\
&=&-\alpha \log\E_{\P_0}\left [\inf_{\pi \in \overline{\Pi}} \E\left[\exp(-Y(\pi(X))/\alpha)|X\right] \right] - \alpha \delta.
\end{eqnarray*}
Since $ \overline{\Pi}$ contains all measurable policies, we have
\begin{eqnarray*}
\inf_{\pi \in \overline{\Pi}} \E\left[\exp(-Y(\pi(X))/\alpha)|X\right]=\min_{a\in \mathcal{A}}\left\{\E\left[\exp(-Y(a)/\alpha)|X\right]  \right\},
\end{eqnarray*}
and the optimal dual variable is
\begin{equation*}
\alpha^*(\pi^*_{\rm DRO})=\argmax_{\alpha \geq 0} \left \{-\alpha \log\E_{\P_0}\left [\min_{a\in \mathcal{A}}\left\{ \E_{\P_0}\left[\left. \exp\left(-Y(a)/\alpha\right)\right| X\right] \right\}\right] - \alpha \delta \right \}.
\end{equation*}
Finally, we have for any $a \in \mathcal{A}$, the set
\[
\{x \in \mathcal{X}: \pi^*_{\rm DRO} (x) = a\} = \left\{ x \in \mathcal{X}:\E_{\P_0}\left[\left. \exp\left(-Y(a)/\alpha\right)\right| X=x\right]
\leq \E_{\P_0}\left[\left. \exp\left(-Y(a')/\alpha\right)\right| X=x\right],\text{ for }\forall a' \in \mathcal{A}/\{a\}\right\}
\]
is measurable.
\end{proof}
\subsection{Proof of the extension results in Section \ref{sec:extension}}
\begin{proof}[Proof of Lemma \protect\ref{lma:quantile_f_divergence}]
If $k=1$, we have $c_k(\delta)=1$, and thus Lemma \ref{lma:quantile_f_divergence} recovers Lemma \ref{lma:quantile}. For $k\in(1,+\infty)$, notice that
\begin{eqnarray*}
&&\left\vert \sup_{\alpha \in \reals}\left\{ -c_{k}\left( \delta \right)
\mathbf{E}_{\mathbf{P}_{1}}\left[ \left( -Y+\alpha \right) _{+}^{k_{\ast }}%
\right] ^{\frac{1}{k_{\ast }}}+\alpha \right\} -\sup_{\alpha \in \reals%
}\left\{ -c_{k}\left( \delta \right) \mathbf{E}_{\mathbf{P}_{2}}\left[
\left( -Y+\alpha \right) _{+}^{k_{\ast }}\right] ^{\frac{1}{k_{\ast }}%
}+\alpha \right\} \right\vert  \\
&\leq &c_{k}\left( \delta \right) \sup_{\alpha \in \reals}\left\vert
\mathbf{E}_{\mathbf{P}_{1}}\left[ \left( -Y+\alpha \right) _{+}^{k_{\ast }}%
\right] ^{\frac{1}{k_{\ast }}}-\mathbf{E}_{\mathbf{P}_{2}}\left[ \left(
-Y+\alpha \right) _{+}^{k_{\ast }}\right] ^{\frac{1}{k_{\ast }}}\right\vert
\\
&=&c_{k}\left( \delta \right) \sup_{\alpha \in \reals}\left\vert \mathbf{%
E}_{_{\mathbf{P}_{U}}}\left[ \left( -q_{\mathbf{P}_{1}}\left( U\right)
+\alpha \right) _{+}^{k_{\ast }}\right] ^{\frac{1}{k_{\ast }}}-\mathbf{E}_{%
\mathbf{P}_{U}}\left[ \left( -q_{\mathbf{P}_{2}}\left( U\right) +\alpha
\right) _{+}^{k_{\ast }}\right] ^{\frac{1}{k_{\ast }}}\right\vert ,
\end{eqnarray*}%
where $\mathbf{P}_{U}\sim U([0,1])$ and the last equality is based on the
fact that $q_{\mathbf{P}}\left( U\right) \overset{d}{=}\mathbf{P.}$

By the triangular inequality in $L^{k_*}\left( U\right) $ space, we have%
\begin{eqnarray*}
&&\left\vert \mathbf{E}_{_{\mathbf{P}_{U}}}\left[ \left( -q_{\mathbf{P}%
_{1}}\left( U\right) +\alpha \right) _{+}^{k_{\ast }}\right] ^{\frac{1}{%
k_{\ast }}}-\mathbf{E}_{\mathbf{P}_{U}}\left[ \left( -q_{\mathbf{P}%
_{2}}\left( U\right) +\alpha \right) _{+}^{k_{\ast }}\right] ^{\frac{1}{%
k_{\ast }}}\right\vert  \\
&\leq &\mathbf{E}_{_{\mathbf{P}_{U}}}\left[ \left\vert q_{\mathbf{P}%
_{1}}\left( U\right) -q_{\mathbf{P}_{2}}\left( U\right) \right\vert
^{^{k_{\ast }}}\right] ^{\frac{1}{k_{\ast }}} \\
&\leq &\sup_{t\in \lbrack 0,1]}\left\vert q_{\mathbf{P}_{1}}\left( t\right)
-q_{\mathbf{P}_{2}}\left( t\right) \right\vert .
\end{eqnarray*}

\end{proof}
\label{sec:extension_proof}
\begin{proof}[Proof of Lemma \ref{lma:discrete_f_divergence}]
We begin with
\begin{eqnarray}
	&&\left\vert \sup_{\alpha \in \reals}\left\{ -c_{k}\left( \delta \right)
	\left( \sum_{d\in \mathbb{D}}\left[ \left( -d+\alpha \right) _{+}^{k_{\ast }}%
	\mathbf{P}_{1}\left( d\right) \right] \right) ^{\frac{1}{k_{\ast }}}+\alpha
	\right\} -\sup_{\alpha \in \reals}\left\{ -c_{k}\left( \delta \right) \left(
	\sum_{d\in \mathbb{D}}\left[ \left( -d+\alpha \right) _{+}^{k_{\ast }}%
	\mathbf{P}_{2}\left( d\right) \right] \right) ^{\frac{1}{k_{\ast }}}+\alpha
	\right\} \right\vert   \nonumber \\
	&\leq &c_{k}\left( \delta \right) \sup_{\alpha \in \reals}\left\vert \left(
	\sum_{d\in \mathbb{D}}\left[ \left( -d+\alpha \right) _{+}^{k_{\ast }}%
	\mathbf{P}_{1}\left( d\right) \right] \right) ^{\frac{1}{k_{\ast }}}-\left(
	\sum_{d\in \mathbb{D}}\left[ \left( -d+\alpha \right) _{+}^{k_{\ast }}%
	\mathbf{P}_{2}\left( d\right) \right] \right) ^{\frac{1}{k_{\ast }}%
	}\right\vert  \\
	&=&c_{k}\left( \delta \right) \max \left\{ \sup_{\alpha \leq M}\left\vert
	\left( \sum_{d\in \mathbb{D}}\left[ \left( -d+\alpha \right) _{+}^{k_{\ast }}%
	\mathbf{P}_{1}\left( d\right) \right] \right) ^{\frac{1}{k_{\ast }}}-\left(
	\sum_{d\in \mathbb{D}}\left[ \left( -d+\alpha \right) _{+}^{k_{\ast }}%
	\mathbf{P}_{2}\left( d\right) \right] \right) ^{\frac{1}{k_{\ast }}%
	}\right\vert \right. , \\
	&&\left. \sup_{\alpha >M}\left\vert \left( \sum_{d\in \mathbb{D}}\left[
	\left( -d+\alpha \right) _{+}^{k_{\ast }}\mathbf{P}_{1}\left( d\right) %
	\right] \right) ^{\frac{1}{k_{\ast }}}-\left( \sum_{d\in \mathbb{D}}\left[
	\left( -d+\alpha \right) _{+}^{k_{\ast }}\mathbf{P}_{2}\left( d\right) %
	\right] \right) ^{\frac{1}{k_{\ast }}}\right\vert \right\} .
\end{eqnarray}%
We tackle the two cases $\alpha \leq M$ and $\alpha >M$ separately. To ease
of notation, we abbreviate $\mathbf{P}_{1}\left( Y=d\right) ,\mathbf{P}%
_{2}\left( Y=d\right) $ as $\mathbf{P}_{1}\left( d\right) ,\mathbf{P}%
_{2}\left( d\right) $.

1) Case $\alpha \leq M:$ Note that 
\begin{eqnarray}
	&&c_{k}\left( \delta \right) \sup_{\alpha \leq M}\left\vert \left( \left(
	\sum_{d\in \mathbb{D}}\left[ \left( -d+\alpha \right) _{+}^{k_{\ast }}%
	\mathbf{P}_{1}\left( d\right) \right] \right) ^{\frac{1}{k_{\ast }}}-\left(
	\sum_{d\in \mathbb{D}}\left[ \left( -d+\alpha \right) _{+}^{k_{\ast }}%
	\mathbf{P}_{2}\left( d\right) \right] \right) ^{\frac{1}{k_{\ast }}}\right)
	\right\vert   \label{eqn:discrete_log_2} \\
	&=&c_{k}\left( \delta \right) \sup_{\alpha \leq M}\left\vert \left( \left(
	\sum_{d\in \mathbb{D}}\left( \left( -d+\alpha \right) _{+}\left( \mathbf{P}%
	_{1}\left( d\right) \right) ^{\frac{1}{k_{\ast }}}\right) ^{k_{\ast
	}}\right) ^{\frac{1}{k_{\ast }}}-\left( \sum_{d\in \mathbb{D}}\left( \left(
	-d+\alpha \right) _{+}\left( \mathbf{P}_{2}\left( d\right) \right) ^{\frac{1%
		}{k_{\ast }}}\right) ^{k_{\ast }}\right) ^{\frac{1}{k_{\ast }}}\right)
	\right\vert   \nonumber \\
	&\leq &c_{k}\left( \delta \right) \sup_{\alpha \leq M}\max \left\{ \left(
	\sum_{\substack{ d:\mathbf{P}_{1}\left( d\right)  \\ \geq \mathbf{P}%
			_{2}\left( d\right) }}\left( \left( -d+\alpha \right) _{+}\mathbf{P}%
	_{1}\left( d\right) ^{\frac{1}{k_{\ast }}}\right) ^{k_{\ast }}\right) ^{%
		\frac{1}{k_{\ast }}}-\left( \sum_{_{\substack{ d:\mathbf{P}_{1}\left(
				d\right)  \\ \geq \mathbf{P}_{2}\left( d\right) }}}\left( \left( -d+\alpha
	\right) _{+}\mathbf{P}_{2}\left( d\right) ^{\frac{1}{k_{\ast }}}\right)
	^{k_{\ast }}\right) ^{\frac{1}{k_{\ast }}},\right.   \nonumber \\
	&&\left. \left( \sum_{\substack{ d:\mathbf{P}_{1}\left( d\right)  \\ <%
			\mathbf{P}_{2}\left( d\right) }}\left( \left( -d+\alpha \right) _{+}\mathbf{P%
	}_{1}\left( d\right) ^{1/k_{\ast }}\right) ^{k_{\ast }}\right) ^{\frac{1}{%
			k_{\ast }}}-\left( \sum_{_{\substack{ d:\mathbf{P}_{1}\left( d\right)  \\ <%
				\mathbf{P}_{2}\left( d\right) }}}\left( \left( -d+\alpha \right) _{+}\mathbf{%
		P}_{2}\left( d\right) ^{1/k_{\ast }}\right) ^{k_{\ast }}\right) ^{\frac{1}{%
			k_{\ast }}}\right\} .  \nonumber
\end{eqnarray}%
By the $k_{\ast }$-norm triangular inequality and the fact that $\left(
-d+\alpha \right) _{+}\leq M$ for $\alpha \leq M,$ we have%
\begin{eqnarray*}
	&&\left( \sum_{\substack{ d:\mathbf{P}_{1}\left( d\right)  \\ \geq \mathbf{P}%
			_{2}\left( d\right) }}\left( \left( -d+\alpha \right) _{+}\left( \mathbf{P}%
	_{1}\left( d\right) \right) ^{1/k_{\ast }}\right) ^{k_{\ast }}\right) ^{%
		\frac{1}{k_{\ast }}}-\left( \sum_{_{\substack{ d:\mathbf{P}_{1}\left(
				d\right)  \\ \geq \mathbf{P}_{2}\left( d\right) }}}\left( \left( -d+\alpha
	\right) _{+}\left( \mathbf{P}_{2}\left( d\right) \right) ^{1/k_{\ast
	}}\right) ^{k_{\ast }}\right) ^{\frac{1}{k_{\ast }}} \\
	&\leq &c_{k}\left( \delta \right) M\left( \sum_{_{\substack{ d:\mathbf{P}%
				_{1}\left( d\right)  \\ \geq \mathbf{P}_{2}\left( d\right) }}}\left\vert
	\left( \mathbf{P}_{1}\left( d\right) \right) ^{1/k_{\ast }}-\left( \mathbf{P}%
	_{2}\left( d\right) \right) ^{1/k_{\ast }}\right\vert ^{k_{\ast }}\right) ^{%
		\frac{1}{k_{\ast }}}.
\end{eqnarray*}%
Consider the function $h(x)=x^{1/k_{\ast }},$%
\[
h^{\prime }(x)=\frac{1}{k^{\ast }}x^{1/k_{\ast }-1}\leq \frac{1}{k^{\ast }}%
\left( \underline{b}/2\right) ^{1/k_{\ast }-1},\text{ when }x\geq \underline{%
	b}/2.
\]%
Then, when $\mathrm{TV}(\mathbf{P}_{1},\mathbf{P}_{2})\leq \underline{b}/2,$
we have%
\begin{eqnarray*}
	&&c_{k}\left( \delta \right) M\left( \sum_{_{\substack{ d:\mathbf{P}%
				_{1}\left( d\right)  \\ \geq \mathbf{P}_{2}\left( d\right) }}}\left\vert
	\left( \mathbf{P}_{1}\left( d\right) \right) ^{1/k_{\ast }}-\left( \mathbf{P}%
	_{2}\left( d\right) \right) ^{1/k_{\ast }}\right\vert ^{k_{\ast }}\right) ^{%
		\frac{1}{k_{\ast }}} \\
	&\leq &c_{k}\left( \delta \right) M\left( \sum_{_{\substack{ d:\mathbf{P}%
				_{1}\left( d\right)  \\ \geq \mathbf{P}_{2}\left( d\right) }}}\left( \frac{1%
	}{k^{\ast }}\left( \underline{b}/2\right) ^{1/k_{\ast }-1}\left\vert \mathbf{%
		P}_{1}\left( d\right) -\mathbf{P}_{2}\left( d\right) \right\vert \right)
	^{k_{\ast }}\right) ^{\frac{1}{k_{\ast }}} \\
	&=&\frac{c_{k}\left( \delta \right) M}{k^{\ast }}\left( \underline{b}%
	/2\right) ^{1/k_{\ast }-1}\left( \sum_{_{\substack{ d:\mathbf{P}_{1}\left(
				d\right)  \\ \geq \mathbf{P}_{2}\left( d\right) }}}\left\vert \mathbf{P}%
	_{1}\left( d\right) -\mathbf{P}_{2}\left( d\right) \right\vert ^{k_{\ast
	}}\right) ^{\frac{1}{k_{\ast }}} \\
	&\leq &\frac{c_{k}\left( \delta \right) M}{k^{\ast }}\left( \underline{b}%
	/2\right) ^{1/k_{\ast }-1}\sum_{_{\substack{ d:\mathbf{P}_{1}\left( d\right) 
				\\ \geq \mathbf{P}_{2}\left( d\right) }}}\left\vert \mathbf{P}_{1}\left(
	d\right) -\mathbf{P}_{2}\left( d\right) \right\vert  \\
	&\leq &\frac{c_{k}\left( \delta \right) M}{k^{\ast }}\left( \underline{b}%
	/2\right) ^{1/k_{\ast }-1}\mathrm{TV}(\mathbf{P}_{1},\mathbf{P}_{2}).
\end{eqnarray*}%
The same bound holds for $\{d\in \mathbb{D}:\mathbf{P}_{1}\left( d\right) <%
\mathbf{P}_{2}\left( d\right) \},$ which completes this case.

2)  Case $\alpha >M:$ In this case, we have%
\begin{eqnarray*}
	&&c_{k}\left( \delta \right) \sup_{\alpha >M}\left\vert \left( \sum_{d\in 
		\mathbb{D}}\left[ \left( -d+\alpha \right) _{+}^{k_{\ast }}\mathbf{P}%
	_{1}\left( d\right) \right] \right) ^{\frac{1}{k_{\ast }}}-\left( \sum_{d\in 
		\mathbb{D}}\left[ \left( -d+\alpha \right) _{+}^{k_{\ast }}\mathbf{P}%
	_{2}\left( d\right) \right] \right) ^{\frac{1}{k_{\ast }}}\right\vert  \\
	&=&c_{k}\left( \delta \right) \sup_{\alpha >M}\left\vert \left( \sum_{d\in 
		\mathbb{D}}\left[ \left( \alpha -d\right) ^{k_{\ast }}\mathbf{P}_{1}\left(
	d\right) \right] \right) ^{\frac{1}{k_{\ast }}}-\left( \sum_{d\in \mathbb{D}}%
	\left[ \left( \alpha -d\right) ^{k_{\ast }}\mathbf{P}_{2}\left( d\right) %
	\right] \right) ^{\frac{1}{k_{\ast }}}\right\vert .
\end{eqnarray*}%
We will focus on $\left\vert \left( \sum_{d\in \mathbb{D}}\left[ \left(
\alpha -d\right) ^{k_{\ast }}\mathbf{P}_{1}\left( d\right) \right] \right) ^{%
	\frac{1}{k_{\ast }}}-\left( \sum_{d\in \mathbb{D}}\left[ \left( \alpha
-d\right) ^{k_{\ast }}\mathbf{P}_{2}\left( d\right) \right] \right) ^{\frac{1%
	}{k_{\ast }}}\right\vert $  and without loss of generality, we
assume%
\[
\sum_{d\in \mathbb{D}}\left[ \left( \alpha -d\right) ^{k_{\ast }}\mathbf{P}%
_{1}\left( d\right) \right] \geq \sum_{d\in \mathbb{D}}\left[ \left( \alpha
-d\right) ^{k_{\ast }}\mathbf{P}_{2}\left( d\right) \right] .
\]%
Recall that for the function $h(x)=x^{1/k_{\ast }},$ the derivative is 
\[
h^{\prime }(x)=\frac{1}{k^{\ast }}x^{1/k_{\ast }-1}\leq \frac{1}{k^{\ast }}%
\left( \underline{x}\right) ^{1/k_{\ast }-1},\text{ when }x\geq \underline{x}%
.
\]%
Therefore, we have 
\begin{eqnarray}
	&&c_{k}\left( \delta \right) \left( \sum_{d\in \mathbb{D}}\left[ \left(
	\alpha -d\right) ^{k_{\ast }}\mathbf{P}_{1}\left( d\right) \right] \right) ^{%
		\frac{1}{k_{\ast }}}-\left( \sum_{d\in \mathbb{D}}\left[ \left( \alpha
	-d\right) ^{k_{\ast }}\mathbf{P}_{2}\left( d\right) \right] \right) ^{\frac{1%
		}{k_{\ast }}}  \nonumber \\
	&\leq &\frac{c_{k}\left( \delta \right) }{k^{\ast }}\left( \sum_{d\in 
		\mathbb{D}}\left[ \left( \alpha -d\right) ^{k_{\ast }}\mathbf{P}_{2}\left(
	d\right) \right] \right) ^{1/k_{\ast }-1}\left( \sum_{d\in \mathbb{D}}\left[
	\left( \alpha -d\right) ^{k_{\ast }}\left( \mathbf{P}_{1}\left( d\right) -%
	\mathbf{P}_{2}\left( d\right) \right) \right] \right) .  \label{bd:alphaM}
\end{eqnarray}%
Then, when $\mathrm{TV}(\mathbf{P}_{1},\mathbf{P}_{2})\leq \underline{b}/2,$
we have 
\begin{eqnarray}
	&&\sum_{d\in \mathbb{D}}\left[ \left( \alpha -d\right) ^{k_{\ast }}\mathbf{P}%
	_{2}\left( d\right) \right] \geq \frac{\underline{b}}{2}\left( \alpha
	-\min_{d\in \mathbb{D}}d\right) ^{k_{\ast }}  \nonumber \\
	&\Rightarrow &\left( \sum_{d\in \mathbb{D}}\left[ \left( \alpha -d\right)
	^{k_{\ast }}\mathbf{P}_{2}\left( d\right) \right] \right) ^{1/k_{\ast
		}-1}\leq \left( \underline{b}/2\right) ^{1/k_{\ast }-1}\left( \alpha
	-\min_{d\in \mathbb{D}}d\right) ^{1-k_{\ast }}.  \label{denominator:bd}
\end{eqnarray}%
Furthermore, we have 
\begin{eqnarray}
	&&\sum_{d\in \mathbb{D}}\left[ \left( \alpha -d\right) ^{k_{\ast }}\left( 
	\mathbf{P}_{1}\left( d\right) -\mathbf{P}_{2}\left( d\right) \right) \right] 
	\nonumber \\
	&=&\left( \sum_{_{\substack{ d:\mathbf{P}_{1}\left( d\right)  \\ >\mathbf{P}%
				_{2}\left( d\right) }}}\left( \alpha -d\right) ^{k_{\ast }}\left( \mathbf{P}%
	_{1}\left( d\right) -\mathbf{P}_{2}\left( d\right) \right) \right) -\left(
	\sum_{_{\substack{ d:\mathbf{P}_{1}\left( d\right)  \\ <\mathbf{P}_{2}\left(
				d\right) }}}\left( \alpha -d\right) ^{k_{\ast }}\left( \mathbf{P}_{2}\left(
	d\right) -\mathbf{P}_{1}\left( d\right) \right) \right)   \nonumber \\
	&\leq &\left( \alpha -\min_{d\in \mathbb{D}}d\right) ^{k_{\ast }}\sum_{
		_{\substack{ d:\mathbf{P}_{1}\left( d\right)  \\ >\mathbf{P}_{2}\left(
				d\right) }}}\left( \mathbf{P}_{1}\left( d\right) -\mathbf{P}_{2}\left(
	d\right) \right) -\left( \alpha -\max_{d\in \mathbb{D}}d\right) ^{k_{\ast
	}}\sum_{_{\substack{ d:\mathbf{P}_{1}\left( d\right)  \\ <\mathbf{P}%
				_{2}\left( d\right) }}}\left( \mathbf{P}_{2}\left( d\right) -\mathbf{P}%
	_{1}\left( d\right) \right)   \nonumber \\
	&=&\left( \left( \alpha -\min_{d\in \mathbb{D}}d\right) ^{k_{\ast }}-\left(
	\alpha -\max_{d\in \mathbb{D}}d\right) ^{k_{\ast }}\right) \sum_{_{\substack{
				d:\mathbf{P}_{1}\left( d\right)  \\ >\mathbf{P}_{2}\left( d\right) }}}\left( 
	\mathbf{P}_{1}\left( d\right) -\mathbf{P}_{2}\left( d\right) \right) .
	\label{nominator:total}
\end{eqnarray}%
The last equation is due to    
\[
\sum_{_{ d:\mathbf{P}_{1}\left( d\right)   >\mathbf{P}_{2}\left(
			d\right) }}\left( \mathbf{P}_{1}\left( d\right) -\mathbf{P}_{2}\left(
d\right) \right) =\sum_{_{ d:\mathbf{P}_{1}\left( d\right)   <%
			\mathbf{P}_{2}\left( d\right) }}\left( \mathbf{P}_{2}\left( d\right) -%
\mathbf{P}_{1}\left( d\right) \right) .
\]%
We further note that%
\begin{equation}
	\sum_{_{d:\mathbf{P}_{1}\left( d\right) >\mathbf{P}_{2}\left( d\right)
	}}\left( \mathbf{P}_{1}\left( d\right) -\mathbf{P}_{2}\left( d\right)
	\right) =\mathrm{TV}(\mathbf{P}_{1},\mathbf{P}_{2}),  \label{eq:TV:P12}
\end{equation}%
and 
\begin{eqnarray}
	&&\left( \left( \alpha -\min_{d\in \mathbb{D}}d\right) ^{k_{\ast }}-\left(
	\alpha -\max_{d\in \mathbb{D}}d\right) ^{k_{\ast }}\right)   \nonumber \\
	&\leq &\left( \max_{d\in \mathbb{D}}d-\min_{d\in \mathbb{D}}d\right) \left(
	\alpha -\min_{d\in \mathbb{D}}d\right) ^{k_{\ast }-1}  \nonumber \\
	&\leq &M\left( \alpha -\min_{d\in \mathbb{D}}d\right) ^{k_{\ast }-1}.
	\label{iq:nominator:lp}
\end{eqnarray}%
By combining bounds (\ref{bd:alphaM}) - (\ref{iq:nominator:lp}), we have 
\begin{eqnarray*}
	&&c_{k}\left( \delta \right) \left( \sum_{d\in \mathbb{D}}\left[ \left(
	\alpha -d\right) ^{k_{\ast }}\mathbf{P}_{1}\left( d\right) \right] \right) ^{%
		\frac{1}{k_{\ast }}}-\left( \sum_{d\in \mathbb{D}}\left[ \left( \alpha
	-d\right) ^{k_{\ast }}\mathbf{P}_{2}\left( d\right) \right] \right) ^{\frac{1%
		}{k_{\ast }}} \\
	&\leq &\frac{c_{k}\left( \delta \right) M}{k_{\ast }}\left( \underline{b}%
	/2\right) ^{1/k_{\ast }-1}\mathrm{TV}(\mathbf{P}_{1},\mathbf{P}_{2}), \text{ for any } \alpha>M,
\end{eqnarray*}%
which completes the proof.
\end{proof}

\section{Experiments}
\subsection{
Optimization of multi-linear policy
}
\label{sec:appendix-experiment}
This section provides implementation details for how to compute $\argmin_{\Theta\in \mathbf{R}^{(p+1)\times d}}\hat{W}_n(\pi_{\Theta},\alpha)$ where $\pi_{\Theta}$ is the multilinear policy associated with parameter $\Theta$. Recall the definition of $\hat{W}_n(\pi,\alpha)$ from definition \ref{def:phi_alpha} that
\begin{equation*}
\hat{W}_n(\pi,\alpha) =
\frac{1}{nS_n^\pi}\sum_{i=1}^{n} W_i(\pi, \alpha)
=
\frac{\sum_{i=1}^n
\frac{\mathbf{1}\{\pi(X_i) = A_i\}}{\pi_0(A_i\mid X_i)}\exp(-Y_i(A_i)/\alpha)
}{\sum_{i=1}^{n}\frac{\mathbf{1\{}\pi(X_{i})=A_{i}\mathbf{\}}}{\pi _{0}\left( A_{i}|X_{i}\right) }}
.
\end{equation*}

As we did in Section \ref{sec:numerical_linear_class}, we employ the smooth approximation of the indicator function
$$
\mathbf{1\{}\pi_{\Theta}(X_{i})=A_{i}\mathbf{\}}
\approx
\frac{
\exp(\theta_{A_i}^{\top}X_i)}{ \sum_{a = 1}^{d}\exp(\theta_{a}^{\top}X_i)}.
$$
Now for $i = 1,\ldots, n$, we the smooth weight function
$p_i: \mathbf{R}^{(p+1)\times d} \rightarrow \mathbf{R}_{+}$ as
$$
p_i(\Theta) \triangleq
\frac{\exp(\theta_{A_i}^{\top}X_i)}{\pi_{0}\left( A_{i}|X_{i}\right)
\sum_{a = 1}^{d}\exp(\theta_{a}^{\top}X_i)
},
$$
then the estimator $\hat{W}_n(\pi_{\Theta},\alpha)$ admits the smooth approximation
\begin{equation*}
\hat{W}_n(\pi_{\Theta},\alpha)
\approx
\tilde{W}_n(\pi_{\Theta},\alpha)
\triangleq
\frac{\sum_{i=1}^n
p_i(\Theta) \exp(-Y_i(A_i)/\alpha)
}{\sum_{i=1}^{n} p_i(\Theta)}
.
\end{equation*}
In addition, we have
\begin{align*}
\nabla_{\Theta}
\tilde{W}_n(\pi_{\Theta},\alpha)
&
=\tilde{W}_n(\pi_{\Theta},\alpha) \nabla_{\Theta}
\log \left(
\tilde{W}_n(\pi_{\Theta},\alpha)
\right)\\
&\approx
\tilde{W}_n(\pi_{\Theta},\alpha)
\cdot
\left(
\nabla_{\Theta}
\log \left(
\sum_{i=1}^n
p_i(\Theta) \exp(-Y_i(A_i)/\alpha)
\right)
-\nabla_{\Theta} \log \left(
\sum_{i=1}^{n} p_i(\Theta)
\right)\right)
\\
&=
\tilde{W}_n(\pi_{\Theta},\alpha)
\cdot
\left(
\frac{\sum_{i=1}^n
\nabla_{\Theta}p_i(\Theta) \exp(-Y_i(A_i)/\alpha)}{\sum_{i=1}^n
p_i(\Theta) \exp(-Y_i(A_i)/\alpha)}
-
\frac{\sum_{i=1}^n
\nabla_{\Theta}p_i(\Theta) }{\sum_{i=1}^n
p_i(\Theta)}
\right).
\end{align*}
Therefore, we can employ gradient to descent to solve for $\Theta$ that minimizes $\tilde{W}_n(\pi_{\Theta},\alpha)$, which approximately minimizes $\hat{W}_n(\pi_{\Theta},\alpha)$ as well. This is how we solve for $\argmin_{\Theta\in \mathbf{R}^{(p+1)\times d}}\hat{W}_n(\pi_{\Theta},\alpha)$ in implementation.

\subsection{Experimental details of $\delta$ selection in Section \ref{sec:select_delta}}
\label{appendix:experiments_details}
In this section we will provide further details on the $\delta$ selection experiment.

Recall that we intend to estimate $\delta$ based on the data of 100 cities in the training set. To this end, we will
partition the data in 20\% of cities as our validation set with distribution
denoted by $\P ^{20}$, and we use $\P ^{80}$ to denote the distribution of
the rest 80\% of the training set. We will explain the detail of how to estimate $D(\P ^{20}||\P^{80})$ in the rest of this section.

We first explain how to estimate the divergence between marginal distributions of $X$, which is denoted by $D(\P_X^{20}||\P _X^{80})$. A directly computation using the sampled distributions $\P_X^{20}$ and $\P _X^{80}$ may result in infinite value, because (1) some features such as year of birth contains outliers whose value only appears in $\P_X^{20}$ or $\P_X^{80}$; (2) $X$ is a nine-dimensional vector, which exaggerates the prior problem. In view of that the demographic features (year of birth, sex, household size) are weakly correlated with the historical voting records, we will compute the divergence on them separately. In order to avoid infinite KL-divergence, we first regroup two demographic features, Year of Birth (YoB) and Household Size (HS), according to the following rules:
\begin{equation*}
    \mbox{YoB group}
    = \begin{cases}
    1 & \mbox{if YoB} \leq 1943\\
    2 & \mbox{if }1943< \rm{YoB} \leq 1952\\
    3 & \mbox{if }1952< \rm{YoB} \leq 1959\\
    4 & \mbox{if }1959< \rm{YoB} \leq 1966\\
    5 & \mbox{if }1966< \rm{YoB}\\
    \end{cases}
    \qquad
    \mbox{HS group}
    = \begin{cases}
    1 & \mbox{if HS} = 1\\
    2 & \mbox{if HS} = 2\\
    3 & \mbox{if HS} = 3\\
    4 & \mbox{if HS} \geq 4\\
    \end{cases}
\end{equation*}
After the regrouping, we define the demographic feature vector $X_{\rm{demo}} = (\mbox{
HS group, YoB group,
sex})$, and compute the KL-divergence
$D(\P^{20}_{X_{\rm{demo}}}||\P^{80} _{X_{\rm{demo}}})$. The historical voting record vector is defined as $
X_{\rm{rec}} = (\mbox{g2004, g2002, g2000, d2004, d2002, d2000})$, and we compute its divergence $D(\P^{20}_{X_{\rm{rec}}}||\P^{80} _{X_{\rm{rec}}})$ directly. We will use the sum $D(\P^{20}_{X_{\rm{demo}}}||\P^{80} _{X_{\rm{demo}}})+D(\P^{20}_{X_{\rm{rec}}}||\P^{80} _{X_{\rm{rec}}})$ as approximation for the divergence between $\P_X^{20}$ and $\P _X^{80}$.

Next, we will explain how to apply logistic regression to estimate $\E_{\P _X^{20}}[D(
\P_Y^{20}|X||\P_Y^{80}|X)]$. We independently fit two logistic regression of $Y\sim X$, using data corresponding to $\P^{20}$ and $\P^{80}$ respectively. The result of logistic regression implies a fitted conditional distribution of $Y$ given $X$, i.e., $\P(Y=1|X) = (1+\exp(\hat{\beta}_0+\hat{\beta}^{\top}X))^{-1}$, where $\hat{\beta}_0$ and $ \hat{\beta}$ are fitted parameters for the logistic regression. We denoted by $\hat{P}_Y^{20}|X$ as the fitted conditional distribution using the data $\P^{20}$, and $\hat{P}_Y^{80}|X$ as the fitted conditional distribution using the data $\P^{80}$. Since both $\hat{P}_Y^{20}|X$ and $\hat{P}_Y^{80}|X$ are Bernoulli distribution parameterized as a function of $X$, the KL-divergence between them $D(
\hat{\P}_Y^{20}|X||\hat{\P}_Y^{80}|X)$ can also be computed in closed form as a function of $X$. Finally, we compute the average value of the estimated KL-divergence using the distribution of $\P_X^{20}$.

To conclude this section, we provide the full formula used in the computation:
$$
D(\P ^{20}||\P^{80})
\approx
D(\P^{20}_{X_{\rm{demo}}}||\P^{80} _{X_{\rm{demo}}})+D(\P^{20}_{X_{\rm{rec}}}||\P^{80} _{X_{\rm{rec}}})+
\E_{\P _X^{20}}[D(
\hat{\P}_Y^{20}|X||\hat{\P}_Y^{80}|X)].
$$

\end{document}